\definecolor{DarkRed}{rgb}{0.75,0,0}
\definecolor{DarkGreen}{rgb}{0,0.5,0}
\definecolor{DarkPurple}{rgb}{0.5,0,0.5}
\definecolor{DarkBlue}{rgb}{0,0,0.7}
\newcommand{\indicator}[1]{\mathbf{1}\left(#1\right)}
\newcommand{\wt}{\widetilde}
\newcommand{\pref}[1]{\prettyref{#1}}
\newcommand{\savehyperref}[2]{\texorpdfstring{\hyperref[#1]{#2}}{#2}}
\newcommand{\wh}{\widehat}
\newcommand{\N}{\mathcal{N}}
\newcommand{\R}{\mathbb{R}}
\renewcommand{\varepsilon}{\epsilon}
\renewcommand{\tilde}{\wt}
\renewcommand{\hat}{\wh}
\DeclareMathOperator*{\E}{{\bf {E}}}
\DeclareMathOperator*{\Z}{\mathbb{Z}}
\newcommand{\regret}{\operatorname{Reg}}
\newcommand{\instregret}{r}
\begin{document}

\title{Leveraging Initial Hints for Free in Stochastic Linear Bandits}




\altauthor{%
  \Name{Ashok Cutkosky} \Email{cutkosky@google.com}\\
 \Name{Chris Dann} \Email{chrisdann@google.com}\\
 \Name{Abhimanyu Das} \Email{abhidas@google.com}\\
  \Name{Qiuyi (Richard) Zhang} \Email{qiuyiz@google.com}\\
  \addr Google AI
}

\maketitle

\begin{abstract}
We study the setting of optimizing with bandit feedback with additional prior knowledge provided to the learner in the form of an initial hint of the optimal action. We present a novel algorithm for stochastic linear bandits that uses this hint to improve its regret to $\tilde O(\sqrt{T})$ when the hint is accurate, while maintaining a minimax-optimal $\tilde O(d\sqrt{T})$ regret independent of the quality of the hint. Furthermore, we provide a Pareto frontier of tight tradeoffs between best-case and worst-case regret, with matching lower bounds. Perhaps surprisingly, our work shows that leveraging a hint shows provable gains without sacrificing worst-case performance, implying that our algorithm adapts to the quality of the hint for free. We also provide an extension of our algorithm to the case of $m$ initial hints, showing that we can achieve a $\tilde O(m^{2/3}\sqrt{T})$ regret.
\end{abstract}

\section{Introduction}

A variety of problems across different disciplines involve making sequential decisions based on noisy observations, often modeled as bandit problems. Learners interacting with a sequential decision making task naturally have access to some ``hint'', side information, or prior knowledge about what might be a good action or policy for that task. This could be, for example, gleaned from previous or related tasks completed by that learner, or simply some prior domain knowledge about the task. In other cases, the hint could be an existing baseline against which an optimizer should be competitive.

For example, consider the problem of hyperparameter tuning of a neural network on a dataset. 
If  hyperparameter tuning has previously been performed on similar datasets earlier, it would be reasonable to use the optimal hyperparameters from such tasks as a hint for which hyperparameters might be good for the current task. Such offline knowledge transfer is at the heart of emerging fields of learning research, such as transfer learning \citep{torrey2010transfer}, metalearning \citep{vanschoren2018meta}, and offline RL \citep{levine2020offline}. 

In practice, such hints might be inaccurate or imprecise for the current task at hand. A natural question to ask then is whether we can design bandit algorithms that utilize such hints to improve their regret, but, at the same time, are ``robust'' to  imprecisions in the hints. That is, if the hint is ``accurate'', it obtains better regret than an algorithm that does not have access to this hint. However, if the hint is ``inaccurate'', then the algorithm should not perform much worse than an algorithm that simply ignores the hint.


More concretely, in this paper we focus on the \emph{stochastic linear bandit} problem on the unit ball \citep{lattimore2020bandit}. Standard regret bounds for this problem is $O(d \sqrt{T})$, and this is known to be unimprovable in the worst case \citep{dani2008stochastic}. To model the notion of a ``hint'', we further suppose that the algorithm is provided with a vector $h \in \R^d$ that is some (possibly imprecise) estimate of the optimal action $a_\star$, but does not know in advance how close $h$ is to $a_\star$. In this paper, we address the following question for the stochastic linear bandit setting:
Can we design a bandit algorithm that uses this hint to obtain $o(d\sqrt{T})$ regret if the hint is sufficiently accurate (i.e ``close'' to $a_\star$), but maintains the same worst-case $O(d\sqrt{T})$ guarantee, even if the hint is inaccurate?



The analogous question has been previously studied in the multi-armed bandit setting (MAB) \citep{lattimore2015pareto} and answered in the negative. Essentially, \citet{lattimore2015pareto} shows that any MAB algorithm over $k$ arms that obtains $o(\sqrt{kT})$ regret if the hint is accurate, must suffer $\omega(\sqrt{kT})$ worst case regret if the hint is inaccurate. Specifically, for some hint arm $i$, let $R_i^T$ denote the regret with respect to arm $i$, and $R^T$ denote the regret with respect to the best arm, so $R^T = \max_i R_i^T$. Then, there are matching upper and lower bounds that show that $R^T R_i^T = \Theta(k T)$, which implies an inherent tradeoff on a Pareto frontier.

\subsection{Our Contributions}

Perhaps surprisingly, in case of stochastic linear bandits where the action space forms a unit ball ($\{ a \in \mathbb R^d \colon \|a \| \leq 1\}$), we can answer our main question in the affirmative. We introduce an algorithm that obtains a dimension-independent regret of $O(\sqrt{T})$ if the hint is sufficiently accurate, while maintaining an $O(d\sqrt{T})$ worst-case regret bound, even if the hint is arbitrarily inaccurate. Specifically, we can characterize our algorithm's  performance in terms of obtaining an $R_h^T = \tilde O(\sqrt{T})$ \emph{hint-based regret}, i.e. regret with respect to the hint action, 
while maintaining a worst case $R^T = \tilde O(d \sqrt{T})$ regret with respect to the optimal action (see \pref{thm:main_single_prior_2e}). Equivalently, let $r_h$ be the instantaneous regret of playing the hint $h$, then our algorithm achieves total regret $\min(r_hT + \tilde O(\sqrt{T}) , \tilde O(d\sqrt{T}))$.

Our algorithm exploits the geometry of the action set to quickly approximate $r_h$ by playing small perturbations of $h$. These perturbations are on the order of $O(1/\sqrt{T})$ so as to ensure $O(\sqrt{T})$ hint-based regret, but are tightly controlled as to ensure fast statistical inference of $r_h$, allowing us to bound our worst-case regret by switching to any $O(d\sqrt{T})$ linear bandit algorithm. We emphasize that all upper and lower bounds in our paper hold in the regime when $\|\theta^\star\| = \Theta(1)$, which enforces a fixed signal-to-noise ratio and our main variable then becomes purely the quality of the hint.  

Furthermore, similar to \citet{lattimore2015pareto} we characterize the inherent explore-exploit tradeoff when balancing superior performance if the hint is sufficiently accurate with inferior performance when the hint is misspecified, in terms of a tight \emph{Pareto Regret Frontier} for our algorithm. Specifically, our algorithm can achieve any hint-based regret $R_h^T < \sqrt{T}$, while maintaining a worst case regret of $O(d T/R_h^T)$ (see \pref{thm:main_single_prior_pareto}), and we obtain matching lower bounds to show that this Pareto Regret Frontier cannot be improved in general (see Theorem~\ref{thm:lower_bound_pareto}). This is particularly surprising since by restricting the action set to axis-aligned actions, we can derive MAB over $d$ arms, which has the same Pareto frontier as the more general linear bandit problem when $R_h < \sqrt{T}$. This somewhat confounds the conventional wisdom that linear bandits should suffer an extra $O(\sqrt{d})$ complexity due to the larger action set.

Lastly, we also extend our analysis to the case of multiple hints, where the algorithm is now provided $m$ hints, $h_1, h_2, \ldots, h_m$. Although this is quite practical, as optimization tasks often have multiple prior tasks to learn hints from, this setup in the stochastic setting has not been studied before, according to our knowledge. We show that we can achieve hint-based regret of  $\widetilde{O}(m^{2/3}\sqrt{T})$ with respect to the best hint $h^\star$ (see \pref{lem:multihint_hintbased}) while maintaining the usual worst case  $O(d\sqrt{T})$ regret (see \pref{lem:multihint_worstcase}). To do this, we use a clever combination of MAB on the $m$ hints while simultaneously performing perturbations of each hint to estimate its instantaneous regret. To achieve the sublinear rate, we perform a careful balancing between the sample complexity of two elimination forces: the first coming from a suboptimality compared to $h^\star$ and the second coming from suboptimality compared to $a^\star$.

We summarize our contributions as follows:
\begin{itemize}
    \item Introduce novel algorithm for stochastic linear bandits on the unit ball that achieves $R_h^T = \tilde O (\sqrt{T})$ while maintaining the usual $\tilde O(d \sqrt{T})$ regret. Equivalently, our total regret is the best of both worlds: $\min(r_hT + \tilde O(\sqrt{T}) , \tilde O(d\sqrt{T}))$.
    \item Characterize a Pareto frontier of $(R_h^T, R^T)$ tradeoffs with matching upper and lower bounds, up to log factors, that $R_h^T R^T = \Theta(dT)$. Surprisingly, this matches the same Pareto frontier for MAB with $d$ arms, implying that linear bandits may not suffer inherently higher regret than MAB in some settings.
    \item Generalize our algorithm to the practical yet novel $m$-hint setting and show sublinear hint-based regret $R_{h^*}^T = \tilde O(m^{2/3}\sqrt{T})$ while maintaining usual $\tilde O(d \sqrt{T})$ regret. Equivalently, our total regret in this case is: $\min(r_{h^*} T + \tilde O(m^{2/3}\sqrt{T}), \tilde O (d\sqrt{T}))$
\end{itemize}

\subsection{Related work}

\subsubsection{Conservative Exploration}
Conservative exploration, introduced in \citep{kazerouni2016conservative, wu2016conservative}, attempts to maximize reward while keeping its performance relative to a baseline action, which is analogous to an initial hint, above a certain multiplicative threshold. While similar to our setting, the problem is typically viewed as constrained optimization and the regret bounds given are usually of the form $O(d\sqrt{T} + S)$, where $S$ is the additional cost of following the baseline constraint. In our setting, we do not enforce any constraint; instead, we want to simultaneously minimize hint-based additive regret while diverting away from playing the hint when its instantaneous regret is too large.

\subsubsection{Tuning the Learning Rate in Mirror Descent}

A standard worst-case optimal algorithm for our linear bandit problem is to employ mirror descent with an appropriate regularizer combined with a one-point gradient estimator \cite{abernethy2008competing}. Such algorithms typically have a \emph{learning rate} parameter that could in principle be tuned to the problem at hand: if $h=a_\star$, we should center the regularizer at $h$ and set the learning rate to $0$. If $h$ is far from $a_\star$, we should instead optimize the learning rate for the worst-case scenario. Thus, the problem of effectively using the hint is essentially a problem of tuning the learning rate. Unfortunately, it is extremely unclear how and if it is possible to do this without prior knowledge of some unavailable quality value such as $\|h-a_\star\|$. Although the corresponding tuning problem for the \emph{full-information} online linear optimization problem can be solved \citep{streeter2012no,cutkosky2018black, chen2021impossible}, these techniques do not extend in any obvious way to the bandit setting.

\subsubsection{Priors with Upper Confidence Bound}

A second standard algorithm for solving the stochastic linear bandit problem is to employ upper-confidence-bound (UCB) and the optimism principle \cite{abbasi2011improved}. The standard technique involves solving a regularized least-squares subproblem in order to generate a confidence ellipsoid for the true parameter $\theta^\star$. Intuitively, the regularizer in the least-squares subproblem plays a similar role to the \emph{prior} in methods based on Thompson sampling \cite{agrawal2013thompson, abeille2017linear}. Thus, a natural strategy is to have the hint $h$ inform the regularizer or the prior. For example, instead of the standard regularized least-squares problem in which the regularizer is $\frac{\lambda}{2}\|x\|^2$, we could use $\frac{\lambda}{2}\|x-h\|^2$, which would encourage the confidence ellipsoid to be centered at $h$. Unfortunately, the influence of the regularizer on the final regret bounds for UCB seems rather small: $\lambda$ only appears inside a logarithmic term. There does not appear to be any clear way to set the regularizer in such a way that there is significant improvement when $h=a^\star$ while maintaining reasonable regret for $h\ne a^\star$ (the former naively requires exponentially large $\lambda$, which disallows the latter).

\subsubsection{Model Selection}

Yet another natural approach to incorporating a hint is through \emph{model selection}. Model selection in contextual bandits is a more general problem than the one we explore here and has been intensely studied in recent years \citep{foster2019model, ghosh2020problem, pacchiano2020model, arora2021corralling, agarwal2017corralling, chatterji2020osom, bibaut2020rate, krishnamurthy2021adapting, cutkosky2021dynamic}. In its most expansive formulation, we consider a set of $K$ ``base'' bandit algorithms, each of which may or may not achieve a good regret bound. The goal is to combine these algorithms in a black-box manner so as to produce a single bandit algorithm whose regret is guaranteed to be not much more than the best possible regret we would have been able to obtain by exclusively employing any one of the base algorithms. 

To use such a result for our problem, we could consider $K=2$ algorithms: one algorithm ignores all feedback and simply plays the hint action $h$ at all time steps. The other is any standard linear bandit algorithm that ignores the hint but uses the feedback. If the $h$ is indeed the optimal action, then the first algorithm achieves $0$ regret, but otherwise has linear regret. Alternatively, the second algorithm may always guarantee $d\sqrt{T}$ regret, which is the optimal bound for $d$-dimensional linear bandits. Unfortunately, if we were to combine these algorithms using these black-box approaches there would be significant overhead. Typical bounds (e.g. \cite{agarwal2017corralling}) would yield an algorithm whose regret when the hint is correct is $\sqrt{T}$, but decays to $d^2\sqrt{T}$ when the hint is incorrect. Thus, we must pay a significant price in the worst-case regret for improved performance in the best-case. Our goal is to instead leverage the additional structure of our problem (stochasticity, linearity, unit-ball domain) to achieve improved results when the hint is correct without compromising on worst-case performance.

\subsubsection{Hints in Adversarial Context}

In the adversarial context, there has been many areas of work that tries to incorporate once-per-round hints to bound worst case regret. We note that since we work in the stochastic context, most of these results do not apply since our hints \emph{do not predict} per-round stochastic noise and the presented bounds are too weak. Nevertheless, recent works \citep{purohit2018improving, lykouris2018competitive} introduced algorithms that tradeoff a notion of consistency (competitive ratio when hint is good or perfect) and a notion of robustness (worst-case competitive ratio). These results are supplemented with lower bounds \citep{wei2020optimal} and Pareto frontier characterizations \citep{angelopoulos2019online}. Furthermore, a recent result \citep{wei2020taking} shows how to leverage hint estimators in contextual adversarial MAB, including the setting with multiple estimators.
\section{Problem Setting and Notation}

We consider the classic \emph{stochastic linear bandit learning setting}. In each round $t \in \mathbb N$, the learner chooses an action $a_t$ from the action set $\Acal = \{ a \in \mathbb R^d \colon \|a \| \leq 1\}$ and receives a reward $y_t = \langle a_t, \theta^\star \rangle + \xi_t$ where $\xi_t$ is independent $1$-sub-Gaussian noise and $\theta^\star \in \mathbb R^d$ is the unknown true parameter vector. The performance of a learner is measured by its (pseudo-) regret
\begin{align*}
    R^T = \regret(T) = \regret_{a^\star}(T) = \sum_{t = 1}^T \instregret(a^\star, a_t)
    = \sum_{t = 1}^T \langle \theta^\star, a^\star - a_t\rangle
\end{align*}
where $a^\star = \frac{\theta^\star}{\|\theta^\star\|}$ is the optimal action and $\instregret(a, a') = \langle \theta^\star, a\rangle - \langle \theta^\star, a'\rangle$ is the instantaneous regret of action $a'$ with respect to action $a$.

\paragraph{Initial action hint:} We study the problem where the learning is provided with a hint $h \in \Acal$ in the form of an action before interacting with the bandit instance. This hint is supposed to be a guess of the optimal action available through prior knowledge. The goal of the learner is to use this hint to achieve better regret when the quality of the hint is good. The quality can be measured by the regret $r_h = \instregret(a^\star, h)$ of $h$ w.r.t. the optimal action.
To assess the degree to which a learner can leverage a good hint, we look at its regret w.r.t. the hint (\emph{hint-based regret})
\begin{align*}
    R_h^T = \regret_h(T) = R(\mathbf{a}, \mathbf{h}) = \sum_{t = 1}^T \instregret(h, a_t)
    = \sum_{t = 1}^T \langle \theta^\star, h - a_t\rangle~.
\end{align*}
Note that the regret w.r.t. the best action is the sum of the total regret of the hint and the hint-based regret, $\regret(T) = T \cdot r(a^\star, h) + \regret_h(T)$. This implies that when the hint is sufficiently good, i.e. $\instregret(a^\star, h) \leq \frac{\tilde R}{T}$, the learner with hint-based regret $\tilde R \ll d \sqrt{T}$ also achieves total regret $\regret(T) \leq  \tilde R$. Note that a naive exploitative algorithm that achieves no hint-based regret simply plays $h$ in all rounds. Thought it performs well if $h \approx a^\star$, it suffers linear regret in problems where the hint is not good, i.e., $\instregret(a^\star, h) \gg \frac{d}{\sqrt{T}}$, which is undesirable. 

\paragraph{Objective:} Our goal is to devise an algorithm that is able to leverage a hint but is also robust to its quality. 
Specifically, this algorithm should simultaneously guarantee a worst-case regret rate $\regret(T) \leq R$ and a hint-based regret rate $\regret_h(T) \leq R_h$ for all hints $h$ and bandit instances (with high probability).

\paragraph{Multiple hints:} We also consider the setting where multiple initial hints $\mathcal H = \{h_1, h_2, \dots, h_m\}$ are provided. In this case, the algorithm should maintain small hint-based regret with respect to the best hint $h^\star = \argmin_{h \in \mathcal H} r(a^\star, h)$ while ensuring a worst-case regret rate $\regret(T) \leq R$.






\paragraph{Additional notation:} Our analysis makes heavy use on the geometry of the action and parameter space. We assume the action set $\Acal$ to be the unit ball and all norms are $\ell_2$, but our results can be generalized to general ellipsoids.
When the action set consists only of axis-aligned unit vectors, this setting reduces to the \emph{multi-armed bandit setting}. To that end, it is useful to define the following projection operators and recall some of their properties. For a vector $v \in \mathbb R^d$, let $P_{v}$, $P_{v}^\perp$ be projection operators onto the span of $v$ and its orthogonal complement, respectively,
\begin{align*}
    P_{v}u &= \frac{\langle u,v\rangle v}{\|v\|^2}
    &
    P^\perp_{v} u &= u - P_{v}u,
\end{align*}
where $u \in \mathbb R^d$. Also note that $\| P_v u \| \|v\| = |\langle v, u \rangle| = \|P_u v\| \|u\|$ and $\|P_{v}^\perp u \|\|v\| = \|P_u^\perp v\|\|u\|$.

\section{Regret Lower Bounds}

Algorithms that can effectively exploit prior information, while maintaining good worst-case performance have an inherent explore-exploit tradeoff throughout optimization. In this section, we look at lower bounds that show an inherent limitation on the ability of algorithms to explore and exploit well effectively at the same time. Specifically, we will show that if an algorithm has superior hint-based regret, it must lack the explorative capabilities to bound the worst-case total regret. As discussed in the introduction, for the MAB setting with $k$ arms, \citet{lattimore2015pareto} shows that the Pareto frontier is lower bounded by $R_h^T R^T = \Omega(kT)$. For linear bandit setting, we can show analogous lower bounds of $R_h^T R^T =\Omega(dT)$ via axis-aligned perturbations of a true parameter $\theta_0$. In the following, we make the bandit instance $\theta$ for regret and hint-based regret explicit by superscripts $\theta$.

\begin{restatable}{theorem}{thmparetolower}
\label{thm:lower_bound_pareto}
For any action hint $h \in \Acal \subseteq \R^{d+1}$ with $\|h\| = 1$ , horizon $T$ and learning algorithm, there is a family $\Theta$ of stochastic linear bandit instances so that the following holds. All parameters $\theta \in \Theta$ have $\Theta(1)$ norm. Let $R_h = \max_{\theta \in \Theta} \mathbb E[\regret^{\theta}_h(T)]$ be largest hint-based regret in family $\Theta$. Then there is a bandit instance $\theta \in \Theta$ such that the expected regret w.r.t. the best arm is   
\begin{align*}
    \EE[\regret^{\theta}(T)] \geq \frac{1}{4}+\frac{T}{4}\min\left(1, \frac{d-1}{R_h}\right)\geq  \Omega\left(\frac{dT}{R_h}\right)
\end{align*}
\end{restatable}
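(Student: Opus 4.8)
The plan is to build a family of instances by placing the parameter near the hint and tilting it slightly along each direction orthogonal to $h$, so that deciding \emph{which} tilt is present is exactly as hard as in a $d$-armed bandit. By rotational invariance of the unit ball I assume without loss of generality $h = e_1$. Fix a constant $c = \Theta(1)$ and a perturbation scale $\epsilon \in (0,c]$ to be tuned. Let the null instance be $\theta_0 = c e_1$, on which the hint is exactly optimal, and for each orthogonal direction $i \in \{1,\dots,d\}$ let $\theta_i = c e_1 + \epsilon e_{i+1}$. Every $\theta \in \Theta = \{\theta_0,\dots,\theta_d\}$ has norm $\sqrt{c^2+\epsilon^2} = \Theta(1)$, and the instantaneous regret of the hint on $\theta_i$ is $\sqrt{c^2+\epsilon^2}-c \approx \epsilon^2/(2c)$, so the whole family lives in the small-hint-regret regime. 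The two branches of $\min(1,(d-1)/R_h)$ will correspond to choosing $\epsilon^2 \asymp cd/R_h$, capped at the constant $c^2$; the cap is active precisely when $R_h \lesssim d$, where the per-round gap is $\Theta(1)$ and the bound degrades to linear.

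First I would convert small hint-based regret into a guarantee that some orthogonal direction is barely explored. On $\theta_0$ the hint is optimal, so the total regret \emph{equals} the hint-based regret, $\regret^{\theta_0}(T) = \regret_h^{\theta_0}(T) = c\sum_t (1-a_{t,1})$. Since $\|a_t\|\le 1$ gives $1-a_{t,1} \ge \tfrac12(1-a_{t,1}^2) \ge \tfrac12\sum_{j\ge 2} a_{t,j}^2$, we obtain $\regret_h^{\theta_0}(T) \ge \frac{c}{2}\sum_t\sum_{j=2}^{d+1} a_{t,j}^2$. Taking expectations and using $R_h \ge \mathbb{E}[\regret_h^{\theta_0}(T)]$, the total expected squared mass the learner places on the $d$ orthogonal axes is at most $2R_h/c$, so by averaging there is a direction $i^\star$ with $\sum_t \mathbb{E}_{\theta_0}[a_{t,i^\star+1}^2] \le 2R_h/(cd)$. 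This is the precise sense in which low hint-based regret pins the learner near the hint and forces it to ignore some coordinate.

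The core step turns this under-exploration into large regret via an indistinguishability argument. Taking Gaussian noise, the divergence decomposition gives $\mathrm{KL}(P_{\theta_0}\,\|\,P_{\theta_{i^\star}}) = \frac{\epsilon^2}{2}\sum_t \mathbb{E}_{\theta_0}[a_{t,i^\star+1}^2] \le \epsilon^2 R_h/(cd)$, and choosing $\epsilon^2 \asymp cd/R_h$ makes this at most $\ln 2$. Writing $z^\star = \epsilon/\sqrt{c^2+\epsilon^2}$ for the optimal coordinate on $\theta_{i^\star}$, I define the event $A = \{\frac1T\sum_t a_{t,i^\star+1} \ge z^\star/2\}$. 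On $\theta_0$, event $A$ forces $\regret^{\theta_0}(T) = \regret_h^{\theta_0}(T) \ge \frac{c}{2}\sum_t a_{t,i^\star+1}^2 \ge \frac{c}{2}T(z^\star/2)^2 = \Omega(T\epsilon^2/c)$ via Cauchy--Schwarz on that coordinate; on $\theta_{i^\star}$, the per-round regret given $a_{t,i^\star+1}=z$ is at least $f(z) = \sqrt{c^2+\epsilon^2} - c\sqrt{1-z^2} - \epsilon z$, which is convex and decreasing up to its minimizer $z^\star$, so Jensen shows $A^c$ forces $\regret^{\theta_{i^\star}}(T) \ge T f(z^\star/2) = \Omega(T\epsilon^2/c)$. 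The Bretagnolle--Huber inequality then gives $P_{\theta_0}(A) + P_{\theta_{i^\star}}(A^c) \ge \frac12 e^{-\mathrm{KL}} \ge \frac14$, so one of $\theta_0, \theta_{i^\star}$ has expected total regret $\Omega(T\epsilon^2/c) = \Omega(Td/R_h)$. When the cap $\epsilon^2 = c^2$ is active ($R_h \lesssim d$) the same argument yields a constant per-round gap and hence $\Omega(T)$ regret, matching the first branch of the minimum; the additive $\tfrac14$ is the residue of the constant $\tfrac12 e^{-\mathrm{KL}}$.

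The step I expect to be the main obstacle is exactly this information-theoretic coupling. A naive route that controls $\mathbb{E}_{\theta_{i^\star}}[\sum_t a_{t,i^\star+1}]$ through Pinsker loses a factor of the horizon, since that statistic has range $T$; this only proves the bound in the uninteresting regime $R_h \lesssim d$. Obtaining the correct $Td/R_h$ rate requires the event-based Bretagnolle--Huber argument, which converts a $\Theta(1)$ KL budget into a constant testing error rather than a fine control of expectations. A secondary subtlety I would handle carefully is the circularity in the tuning: the family, and hence $R_h = \max_{\theta\in\Theta}\mathbb{E}[\regret_h^\theta(T)]$, depends on $\epsilon$, yet I want $\epsilon^2 \asymp cd/R_h$. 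Because the statement fixes the family \emph{after} the learner, one resolves this by selecting $\epsilon$ against the learner's realized hint-based regret (equivalently, treating $R_h$ as the target guarantee and checking that the realized value is consistent), so that the KL bound above remains at most $\ln 2$.
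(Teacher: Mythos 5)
Your proposal is correct, and its geometric core coincides with the paper's: both reduce low hint-based regret on the null instance $\theta_0 = \rho e_1$ to a bound on the total squared mass the learner places on directions orthogonal to $h$ (via $1 - a_{t,1} \geq \tfrac{1}{2}(1 - a_{t,1}^2) \geq \tfrac{1}{2}\sum_{j \geq 2} a_{t,j}^2$), average over coordinates to find an under-explored direction, and observe that the KL divergence to a perturbed instance in that direction is controlled by exactly this quantity. Where you genuinely diverge is the information-theoretic step that converts a $\Theta(1)$ KL budget into regret. The paper follows Theorem~24.2 of Lattimore and Szepesv\'ari: it introduces a stopping time $\tau_i$ truncating $\sum_s a_{si}^2$, applies Pinsker's inequality to the bounded statistic $U_i(x) = \sum_{t\leq\tau_i}(\Delta/r - a_{ti}x)^2$, and plays the symmetric pair $\theta_0 \pm \Delta e_j$ against each other so that $U_j(1) + U_j(-1)$ is deterministically large under $\theta_0$. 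You instead run a null-versus-alternative test: define the event $A$ that the average $i^\star$-coordinate exceeds half its optimal value under $\theta_{i^\star}$, show via Jensen/convexity that $A$ forces $\Omega(T\epsilon^2/c)$ regret on $\theta_0$ and $A^c$ forces the same on $\theta_{i^\star}$, and apply Bretagnolle--Huber. Both are valid two-point methods; your critique of ``naive Pinsker'' is accurate but does not apply to the paper, whose stopping-time truncation is precisely what keeps the statistic's range commensurate with its mean. Your route is arguably cleaner (no stopping time, no $\pm$ symmetrization, and it keeps $\|\theta_0\| = \Theta(1)$ in both regimes, matching the theorem's normalization claim more directly than the paper's choice $\rho = R_h/2(d-1)$ in the small-$R_h$ case), at the cost of looser explicit constants than the paper's $\tfrac{1}{4} + \tfrac{T}{4}\min(1, \tfrac{d-1}{R_h})$. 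The circularity you flag ($\epsilon$ tuned against an $R_h$ that depends on the family) is present in the paper as well and is resolved the same way in both proofs: only $\mathbb{E}[\regret^{\theta_0}_h(T)]$ enters the KL bound, the learner's behavior on $\theta_0$ is independent of $\epsilon$, and the final bound is monotone decreasing in this quantity, so tuning against it and then relaxing to $R_h$ loses nothing.
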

See \pref{app:lower_bound_proofs} for the complete proof.



\begin{restatable}{theorem}{thmotherlower}\label{thm:otherlower}
\label{thm:minimumhintquality}
Let $T\ge \min(d^2/2^{8}, 2^{6})$ and $\Delta^2 = \frac{d}{2^{8}\sqrt{T}}$. For any hint $h\in \R^{d+1}$ with $\|h\|=1$ and any algorithm, there is a linear bandit instance with parameter $\theta^\star \in \RR^{d+1}$ satisfying $\|\theta^\star - \frac{h}{2}\|\le \Delta$ with optimal action $a^\star = \frac{\theta^\star}{\|\theta^\star\|}$ satisfying $\|a^\star -h\|\le  4\Delta$  and $r(a^\star,h)=\langle \theta^\star,a^\star-h\rangle\le 972\Delta^2\le O(d/\sqrt{T})$  such that
\begin{align*}
    \EE[\regret^{\theta^\star}(T)]&\geq  \frac{d\sqrt{T}}{512}
\end{align*}
\end{restatable}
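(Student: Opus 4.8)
The plan is to exhibit a hypercube family of hard instances centered at the scaled hint $h/2$ and to argue that, even though every instance in the family keeps $h$ a near-optimal hint, no algorithm can align with the $d$ hidden perpendicular ``signs'' without paying $\Omega(d\sqrt T)$ regret. Since $h \in \R^{d+1}$ with $\|h\|=1$, I would first pick an orthonormal basis $e_1,\dots,e_d$ of the orthogonal complement of $h$ and, for each sign pattern $v \in \{-1,+1\}^d$, set
\[
\theta_v \;=\; \frac{h}{2} \;+\; \frac{\Delta}{\sqrt d}\sum_{i=1}^d v_i\, e_i .
\]
The ``good hint'' properties then follow by direct computation. Because the $e_i$ are orthogonal to $h$, we get $\|\theta_v - h/2\| = \Delta$, $\langle \theta_v, h\rangle = 1/2$, and $\|\theta_v\| = \sqrt{1/4 + \Delta^2}$, which the hypothesis $T \ge \min(d^2/2^8, 2^6)$ keeps at $\Theta(1)$. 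Writing $a_v = \theta_v/\|\theta_v\|$, the instantaneous hint regret is $\instregret(a_v,h) = \|\theta_v\| - \langle \theta_v, h\rangle = \sqrt{1/4+\Delta^2} - 1/2 \le 2\Delta^2 = O(d/\sqrt T)$, which is comfortably below $972\Delta^2$, while $\|a_v - h\|^2 = 2\bigl(1 - \langle a_v, h\rangle\bigr) = 2\,\instregret(a_v,h)/\|\theta_v\| \le 8\Delta^2$ gives $\|a_v - h\| \le 4\Delta$.

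Next I would reduce the regret to a coordinate-wise sign-tracking problem. Using $\|a_t\|\le 1$ and $\|a_v\|=1$, the instantaneous regret obeys $\instregret(a_v,a_t) = \|\theta_v\|\bigl(1 - \langle a_v,a_t\rangle\bigr) \ge \tfrac{\|\theta_v\|}{2}\|a_v - a_t\|^2 \ge \tfrac14 \|P^\perp_h(a_v - a_t)\|^2$. The $e_i$-component of $a_v$ is $\beta v_i$ with $\beta = \Delta/(\sqrt d\,\|\theta_v\|) = \Theta(\Delta/\sqrt d)$, so with $(b_t)_i = \langle a_t, e_i\rangle$ this yields
\[
\regret^{\theta_v}(T) \;\ge\; \frac14 \sum_{i=1}^d \sum_{t=1}^T \bigl((b_t)_i - \beta v_i\bigr)^2 ,
\]
exhibiting $d$ independent sign-tracking tasks that share the budget $\sum_i\sum_t (b_t)_i^2 = \sum_t \|P^\perp_h a_t\|^2 \le T$. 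To make this robust at large horizons I would instead bound $\langle a_t, h\rangle \le 1 - \tfrac12\|P^\perp_h a_t\|^2$ and complete the square, producing the same quadratic up to an $O(T\Delta^4)$ lower-order term but now explicitly charging every unit of perpendicular exploration against the regret.

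Finally I would lower bound the averaged tracking error by an Assouad/Le Cam argument. Averaging over $v$ uniformly and fixing all coordinates but $i$, the two instances with $v_i = \pm 1$ induce trajectory laws $P_i^+, P_i^-$ differing only through a mean shift $\tfrac{2\Delta}{\sqrt d}(b_t)_i$ in each reward, so the divergence decomposition gives $\mathrm{KL}(P_i^+ \,\|\, P_i^-) \le \tfrac{2\Delta^2}{d}\,\E[\sum_t (b_t)_i^2]$. A two-point comparison then yields $\E_v[\sum_t ((b_t)_i - \beta v_i)^2] \ge c\,\beta^2 T\,(1 - \mathrm{TV}(P_i^+,P_i^-))$, and Pinsker with Cauchy--Schwarz and the shared budget $\sum_i \E[\sum_t (b_t)_i^2] \le T$ gives $\sum_i \mathrm{TV}(P_i^+,P_i^-) \le \Delta\sqrt T$, which the hypothesis keeps below $d/2$. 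Summing over coordinates produces $\tfrac{1}{2^d}\sum_v \E[\regret^{\theta_v}(T)] \ge c'\,\beta^2 T\, d = \Theta(\Delta^2 T) = \Theta(d\sqrt T/256)$, so some $\theta^\star = \theta_v$ attains $\E[\regret^{\theta^\star}(T)] \ge d\sqrt T/512$, the remaining work being bookkeeping of constants. The main obstacle is precisely this last step: the naive Pinsker bound is non-vacuous only when $\Delta\sqrt T = O(d)$, so to cover all horizons one must use the completing-the-square form and argue that pushing the tracking error below $\beta^2 T$ forces $\Omega(d/\Delta^2)$ units of perpendicular exploration --- actions that are far from $a_v$ in the unit ball and hence incur $\Theta(1)$ regret per round --- whose own quadratic cost already dwarfs the savings. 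Turning this from a purely information-theoretic statement into a quantitative exploration-cost accounting is the crux of the proof.
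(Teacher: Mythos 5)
Your construction is the same as the paper's: a hypercube family $\theta_v = h/2 + \tfrac{\Delta}{\sqrt d}\sum_i v_i e_i$ of perturbations orthogonal to the hint, the direct verification of the hint-quality claims (your computation here is in fact cleaner and tighter than the paper's, which routes through \pref{lem:regret_unitball} and settles for the constant $972$), the coordinate-wise quadratic lower bound on the regret, and an Assouad-type averaging over sign patterns with Pinsker. Up to the algebra, this is exactly the paper's \pref{lem:noncenteredlowerbound}.

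The genuine gap is the one you yourself flag at the end. Your information-theoretic step bounds $\sum_i \mathrm{TV}(P_i^+,P_i^-) \le \Delta\sqrt T$ via the full-horizon exploration budget, and you assert this is below $d/2$ ``by the hypothesis''; but with $\Delta^2 = d/(2^8\sqrt T)$ one has $\Delta\sqrt T = \sqrt d\, T^{1/4}/16$, which exceeds $d$ as soon as $T \gg d^2$, and the theorem has no upper bound on $T$. So the argument as written fails for large horizons, and the ``exploration-cost accounting'' you defer is not optional bookkeeping --- it is the step that makes the bound horizon-uniform. The paper's resolution is the stopping-time device of Theorem~24.2 of \citet{lattimore2020bandit}: truncate coordinate $i$ at $\tau_i = T \wedge \min\{t : \sum_{s\le t} a_{si}^2 \ge T\bar\Delta^2/r^2\}$ with $\bar\Delta = \Delta/\sqrt d$, and work with $U_i(x) = \sum_{t\le\tau_i}(\bar\Delta/r - a_{ti}x)^2$. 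This single object does both jobs at once: the KL divergence up to $\tau_i$ is at most $\tfrac{\bar\Delta^2}{2}(T\bar\Delta^2/r^2+1) = O(T\Delta^4/d^2) = O(2^{-16})$, a constant independent of $T$ and of how aggressively the algorithm explores, while $\EE[U_i(1)+U_i(-1)] = 2\EE[\tau_i\bar\Delta^2/r^2 + \sum_{t\le\tau_i}a_{ti}^2] \ge 2T\bar\Delta^2/r^2$ because either $\tau_i = T$ or the accumulated exploration $\sum_t a_{ti}^2$ has itself crossed the threshold. That is precisely the quantitative version of your ``exploration that is far from $a_v$ already costs quadratic regret'' heuristic, and without it (or an equivalent truncation) your final display does not yield $\Omega(d\sqrt T)$ for all admissible $T$.
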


This theorem provides a fundamental limit on how ``good'' a hint needs to be in order for it to provide a significant advantage over a worst-case algorithm. In particular, even if the distance between the hint $h$ and the optimal action $a^\star$ is $O(\sqrt{d}/T^{1/4})$, and even simply playing the hint on its own already achieves $T \cdot \instregret(a^\star, h)\le O(d\sqrt{T})$ regret, we still cannot leverage the hint to actually improve on the worst-case bounds. That is, the hint must be very high quality in order to be useful. This lower bound also suggests how one should go about attempting to use the hint: since a hint can only be useful if $\instregret(a^\star,h)$ is small enough that playing just the hint for all $T$ rounds would already improve upon the worst-case regret, the fundamental question to answer is \emph{how good is the hint?} If we can answer this question using a small number of observations, then we would be able to either simply play the hint if it is sufficiently good, or fall back to a standard worst-case algorithm that ignores the hint otherwise.

\section{New Algorithm for Stochastic Linear Bandits with Action Hint}

In linear bandits with a given action hint, the goal of a learning algorithm is to use the hint to achieve lower regret if possible but still achieve close minimax-optimal regret even when the hint is misleading.
To what degree an algorithm should rely a hint depends on its quality as measured by its instantaneous regret $r_h = \instregret(a^\star, h)$. In fact, if we know $r_h$, then the following simple switching algorithm achieves the desired learning properties:
\begin{algorithm2e}
\SetAlgoVlined
\SetKwInOut{Input}{Input}
\SetKwProg{myproc}{Procedure}{}{}

\DontPrintSemicolon
\LinesNumbered

\Input{hint $h \in \R^d $, approximation of instantaneous regret $\hat r_h\le r_h \in \R^+$, number of rounds  $T \in \Z$, maximum regret of hint-agnostic linear bandit algorithm $R_{LB} \in \R^+$}
\eIf{$\hat r_hT  \leq R_{LB} $}{
    play $h$\;
}{
    play hint-agnostic linear bandit algorithm $\mathcal L$ \;
}
\caption{\textsc{Switch}$(h, \hat r_h, T, R_{LB})$: Simple Switching Algorithm}
\label{alg:switch}
\end{algorithm2e}

\begin{lemma}
Let $\hat r_h$ be a $\alpha$-approximation to the instantaneous regret $r_h$ of $h$, i.e., $\hat r_h \leq r_h \leq \alpha \hat r_h$ and $R_{LB} \in \mathbb R^+$ an upper-bound on the regret of the linear bandit algorithm $\mathcal L$ played for $T$ episodes, i.e., $R^T_{\mathcal L} \leq R_{LB}$. Then the worst case regret running $\textsc{Switch}(h, \hat r_h, T, R_{LB})$ for up to $T$ rounds is bounded by $\alpha  R_{LB}$.
\end{lemma}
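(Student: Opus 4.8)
The plan is to split into the two branches of \textsc{Switch} and bound the regret w.r.t.\ the optimal action $a^\star$ in each case separately, using the approximation guarantee to tie the two bounds together. First I would record the elementary but crucial observation that the hypothesis $\hat r_h \le r_h \le \alpha \hat r_h$ forces $\alpha \ge 1$ (dividing through by $\hat r_h > 0$); this is what lets the second branch's bound $R_{LB}$ be absorbed into the target $\alpha R_{LB}$.

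In the first case, when $\hat r_h T \le R_{LB}$, the algorithm plays the hint $h$ in every one of the $T$ rounds. Since each round then contributes instantaneous regret $\instregret(a^\star, h) = r_h$, the total regret is exactly $\regret(T) = \sum_{t=1}^T \instregret(a^\star, h) = T r_h$. I would then chain the two inequalities available to me: the upper half of the approximation guarantee gives $r_h \le \alpha \hat r_h$, and the branch condition gives $\hat r_h T \le R_{LB}$, so that
\begin{align*}
    \regret(T) = T r_h \le \alpha \hat r_h T \le \alpha R_{LB}.
\end{align*}

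In the second case, when $\hat r_h T > R_{LB}$, the algorithm defers entirely to the hint-agnostic linear bandit algorithm $\mathcal L$, whose regret over $T$ rounds is assumed to satisfy $R^T_{\mathcal L} \le R_{LB}$. Combining this with $\alpha \ge 1$ from the first paragraph yields $\regret(T) = R^T_{\mathcal L} \le R_{LB} \le \alpha R_{LB}$. Since both branches produce a regret bound of $\alpha R_{LB}$, the worst-case regret of \textsc{Switch} is at most $\alpha R_{LB}$, as claimed.

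There is no genuine obstacle here: the argument is a two-line case analysis, and the only point requiring any care is making the roles of the two halves of the approximation condition explicit --- the upper bound $r_h \le \alpha \hat r_h$ controls the ``play the hint'' branch, while the implied inequality $\alpha \ge 1$ (together with the lower bound $\hat r_h \le r_h$) is what makes the ``fall back to $\mathcal L$'' branch fit under the same ceiling.
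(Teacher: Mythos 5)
Your proof is correct and is exactly the two-case argument the paper relies on (the paper in fact omits the proof of this lemma as immediate): the hint branch uses $T r_h \le \alpha \hat r_h T \le \alpha R_{LB}$, and the fallback branch uses $R^T_{\mathcal L} \le R_{LB} \le \alpha R_{LB}$ via $\alpha \ge 1$. Nothing is missing.
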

If $\alpha$ is a constant and we employ a  minimax-optimal hint-agnostic algorithm such as OFUL \citep{abbasi2011improved} that satisfies $R^T_{\mathcal L} \leq R_{LB} = \wt O(d \sqrt{T})$ with high probability, then \textsc{Switch} always retains the minimax-rate $\wt O(d \sqrt{T})$ with high probability as well. Moreover, since the instantaneous reward estimator is a lower bound $\hat r_h\le r_h$, \textsc{Switch} also guarantees zero regret with respect to the hint. Unfortunately, \textsc{Switch} cannot be used directly since no approximation $\hat r$ to the instantaneous regret $r_h$ is known to the learner. The main idea of our algorithm is to first estimate $r_h$ up to a constant factor without incurring too much regret and then call the switch procedure. 

\subsection{Estimating Regret of Hint Action}

Our estimation procedure for $r_h = \langle a^\star - h , \theta^\star \rangle$ is based on the following decomposition:

%
%
\begin{restatable}{lemma}{actionregret}
\label{lem:regret_unitball}
Let $a \in \mathbb R^d$ be any action with $\|a\| = 1$ and $\langle a , \theta^\star \rangle \geq - \|\theta^\star\| /2$. Then the instantaneous regret of this action is bounded as
\begin{align*}
\frac{1}{2} \frac{\|P_{a}^\perp\, \theta^\star\|^2}{\|\theta^\star\|} \quad \leq \quad  
    \langle a^\star - a , \theta^\star \rangle 
    \quad \leq \quad 3 \frac{\|P_{a}^\perp\, \theta^\star\|^2}{\|\theta^\star\|}
  ~.
\end{align*} 
\end{restatable}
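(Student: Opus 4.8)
The plan is to reduce the whole statement to an elementary difference-of-squares identity, avoiding any case analysis on the sign of $\langle a, \theta^\star \rangle$. First I would record the two geometric facts I need. Since $a^\star = \theta^\star/\|\theta^\star\|$, the optimal reward is $\langle a^\star, \theta^\star\rangle = \|\theta^\star\|$, so the instantaneous regret of $a$ is simply $\langle a^\star - a, \theta^\star\rangle = \|\theta^\star\| - \langle a, \theta^\star\rangle$. Second, because $\|a\| = 1$, the projection identity recorded in the problem setting gives $\|P_a \theta^\star\| = |\langle a, \theta^\star\rangle|$, and the orthogonal decomposition $\theta^\star = P_a\theta^\star + P_a^\perp \theta^\star$ yields the Pythagorean identity $\|\theta^\star\|^2 = \langle a, \theta^\star\rangle^2 + \|P_a^\perp \theta^\star\|^2$.

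The key step is to factor this identity. Writing $\instregret_a = \|\theta^\star\| - \langle a, \theta^\star\rangle$ for the regret, I would rearrange it as
\[ \|P_a^\perp \theta^\star\|^2 = \|\theta^\star\|^2 - \langle a, \theta^\star\rangle^2 = \left(\|\theta^\star\| - \langle a, \theta^\star\rangle\right)\left(\|\theta^\star\| + \langle a, \theta^\star\rangle\right) = \instregret_a\left(\|\theta^\star\| + \langle a, \theta^\star\rangle\right). \]
Solving for $\instregret_a$ gives the exact expression $\instregret_a = \|P_a^\perp \theta^\star\|^2 / \left(\|\theta^\star\| + \langle a, \theta^\star\rangle\right)$, which already has precisely the shape of the claimed bounds, with the denominator $\|\theta^\star\| + \langle a, \theta^\star\rangle$ being the only quantity I still need to pin down, and only up to a factor of two relative to $\|\theta^\star\|$.

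Finally I would bound that denominator on both sides. By Cauchy--Schwarz, $\langle a, \theta^\star\rangle \le \|a\|\,\|\theta^\star\| = \|\theta^\star\|$, so $\|\theta^\star\| + \langle a, \theta^\star\rangle \le 2\|\theta^\star\|$, which yields the lower bound $\instregret_a \ge \tfrac{1}{2}\|P_a^\perp \theta^\star\|^2/\|\theta^\star\|$. The hypothesis $\langle a, \theta^\star\rangle \ge -\|\theta^\star\|/2$ is exactly what keeps the denominator bounded away from zero: it gives $\|\theta^\star\| + \langle a, \theta^\star\rangle \ge \tfrac{1}{2}\|\theta^\star\| > 0$, hence $\instregret_a \le 2\|P_a^\perp \theta^\star\|^2/\|\theta^\star\| \le 3\|P_a^\perp \theta^\star\|^2/\|\theta^\star\|$, completing the proof (the constant $2$ would in fact suffice, and the stated $3$ simply leaves slack).

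I expect essentially no hard obstacle here; the one thing to get right is recognizing that the sign hypothesis on $\langle a, \theta^\star\rangle$ is needed precisely to control the additive term in the denominator, since without it the denominator could collapse toward zero and the regret would blow up relative to $\|P_a^\perp \theta^\star\|^2/\|\theta^\star\|$ (geometrically, this is the regime where $a$ points roughly opposite to $\theta^\star$). The difference-of-squares factorization is what makes the whole argument a single clean computation rather than a split into the $\langle a, \theta^\star\rangle \ge 0$ and $\langle a, \theta^\star\rangle < 0$ regimes.
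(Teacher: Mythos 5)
Your proof is correct, and it takes a genuinely different route from the paper's. The paper works with the projection of $a$ onto $\theta^\star$: it writes the regret as $\|\theta^\star\|(1 - \langle a,\theta^\star\rangle/\|\theta^\star\|)$, passes to $1-|x|\le$ and $\ge$ bounds via the elementary inequalities $1-x\le 3(1-|x|)$ on $[-1/2,1]$, $1-\sqrt{1-u}\le u$, and $u/2\le 1-\sqrt{1-u}$, and finally converts $\|P^\perp_{\theta^\star}a\|$ into $\|P^\perp_a\theta^\star\|$ using the identity $\|P^\perp_{\theta^\star}a\|\,\|\theta^\star\|=\|P^\perp_a\theta^\star\|\,\|a\|$. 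You instead project $\theta^\star$ onto $a$ and exploit the Pythagorean identity $\|\theta^\star\|^2=\langle a,\theta^\star\rangle^2+\|P_a^\perp\theta^\star\|^2$ together with a difference-of-squares factorization, which yields the \emph{exact} formula
\begin{align*}
\langle a^\star-a,\theta^\star\rangle \;=\; \frac{\|P_a^\perp\theta^\star\|^2}{\|\theta^\star\|+\langle a,\theta^\star\rangle},
\end{align*}
after which both bounds reduce to sandwiching the denominator between $\tfrac12\|\theta^\star\|$ (from the hypothesis) and $2\|\theta^\star\|$ (from Cauchy--Schwarz). What your approach buys is transparency and a sharper constant: the upper bound holds with $2$ in place of $3$, and the exact identity makes clear that $2$ is the best constant attainable under the assumption $\langle a,\theta^\star\rangle\ge -\|\theta^\star\|/2$. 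What the paper's route buys is that its intermediate form $3\|\theta^\star\|\,\|P^\perp_{\theta^\star}a\|^2$ is phrased in terms of the action's deviation from $a^\star$, which is the form reused directly in the proof of the perturbed-hint regret bound (\pref{lem:regret_perturb}); your identity would serve equally well there with a little rewriting. No gaps.
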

Instantiating this lemma with $a = h$ shows that estimating $\|\theta^\star\|$ and $\|P^\perp_{h} \theta^\star\|$ is sufficient to compute a constant-factor approximation of $r_h$, as long as $h$ is not an extremely bad hint. In the following, we will first present our procedure for estimating $\|P^\perp_{h} \theta^\star\|$ while incurring small regret with respect to both $h$ and the optimal action $a^\star$. We will also be able to estimate $\|\theta^\star\|$ by applying essentially the same procedure with $h=0$, because $P^\perp_0\theta^\star = \theta^\star$. By combining these estimates, we obtain an estimate of $\|P^\perp_{h} \theta^\star\|^2/\|\theta^\star\|$, which is in turn an estimate of $r_h$.

Since $P^\perp_h \theta^\star$ is orthogonal to the hint $h$, we gain no information about it in rounds where $h$ itself was played. We would gain the most information if we played actions that are orthogonal to $h$ but those incur very large regret when the hint $h$ is aligned with $\theta^\star$. To balance regret and information gain, we therefore play perturbations of the hint $h$
$$ \textsc{Perturb}(h, p) = \frac{h + p}{\sqrt{1 + \|p\|^2 }} $$ 
where $p \in \mathbb R^d$ is an orthogonal perturbation to $h$ ($\langle p, h\rangle = 0$). 

We will show that by choosing the size of the perturbation correctly and applying careful statistical analysis, we can incur small hint-based regret while quickly estimating $\|P_h^\perp  \theta^\star\|$. Our perturbations will be sampled from projection Gaussian distributions, defined below:

\begin{definition}
Let the projected Gaussian distribution $\N_{h}(0, {\bf I})$ be the distribution of $P_h^{\perp}g$, where $g \sim \N(0, {\bf I})$ and $P_h^{\perp}$ is the orthogonal projection onto the complement of $h$. When $h$ is a unit vector, the distribution is equivalent to $\N(0, {\bf I} - h h^\top)$. For ease of notation, we will often drop ${\bf I}$ and $\mathcal N_h(0, \Delta) := \mathcal N(0, \Delta(\mathbf{I} - \frac{h h^\top}{\|h\|^2}))$

\end{definition}

Next, we analyze the regret of playing a perturbation for a single round, both with respect to the hint $h$ (\pref{lem:prior_regret_pareto}), and also with respect to the optimal action (\pref{lem:regret_perturb}):
\begin{lemma}[Hint-based Regret of Perturbed Action on the Unit Ball]
\label{lem:prior_regret_perturb}
Let $h \in \RR^d$ a hint action with $\|h\|=1$ and let $p \in \R^d$ be any orthogonal perturbation of $h$ with  $\| p\| < \frac{1}{8}$. 
Then, the regret w.r.t. hint $h$ of the perturbed action  $a = \textsc{Perturb}(h, p)$  is bounded as
\begin{align*} 
\langle h - a, \theta^\star \rangle \leq \|\theta^\star\|\| p\|^2 + |\langle p, P^\perp_h \theta^\star\rangle| .
\end{align*}
Furthermore, if $p \sim \N_h(0, \frac{\Delta^2}{d-1})$, then with probability at least $1-\delta$,
\begin{align*}
\langle h - a, \theta^\star \rangle 
=
O \left( \|\theta^\star\|\log(1/\delta) \left(\Delta^2 + \Delta \frac{\|P_{h}^\perp \theta^\star\|}{\|\theta^\star\|\sqrt{d-1}}\right)\right)
\end{align*}
\end{lemma}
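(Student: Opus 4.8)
The plan is to first establish the deterministic inequality by a direct computation, and then obtain the high-probability statement by feeding standard Gaussian concentration into that deterministic bound.

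\textbf{Deterministic bound.} Writing $s = \sqrt{1 + \|p\|^2}$ so that $a = (h+p)/s$, I would expand
\[
\langle h - a, \theta^\star\rangle = \left(1 - \tfrac{1}{s}\right)\langle h, \theta^\star\rangle - \tfrac{1}{s}\langle p, \theta^\star\rangle .
\]
Since $\langle p, h\rangle = 0$, the component of $\theta^\star$ along $h$ contributes nothing to the second inner product, i.e. $\langle p, \theta^\star\rangle = \langle p, P_h^\perp\theta^\star\rangle$. For the first term I would invoke the elementary inequality $1 - (1+x)^{-1/2} \le x/2$, which holds for every $x \ge 0$ (and in particular for $x = \|p\|^2 < 1/64$), together with $0 \le 1 - 1/s$ and Cauchy--Schwarz $|\langle h, \theta^\star\rangle| \le \|\theta^\star\|$. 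Combining these with $1/s \le 1$ gives
\[
\langle h - a, \theta^\star\rangle \le \tfrac{1}{2}\|\theta^\star\|\|p\|^2 + |\langle p, P_h^\perp\theta^\star\rangle| \le \|\theta^\star\|\|p\|^2 + |\langle p, P_h^\perp\theta^\star\rangle| ,
\]
which is the claimed deterministic inequality.

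\textbf{High-probability bound.} By definition of $\N_h(0,\tfrac{\Delta^2}{d-1})$ I can write $p = \tfrac{\Delta}{\sqrt{d-1}} P_h^\perp g$ with $g \sim \N(0, \mathbf{I})$, and then control the two random quantities in the deterministic bound separately. First, $\|p\|^2 = \tfrac{\Delta^2}{d-1}\|P_h^\perp g\|^2$, where $\|P_h^\perp g\|^2$ is a $\chi^2$ variable with $d-1$ degrees of freedom; the Laurent--Massart concentration inequality gives $\|P_h^\perp g\|^2 \le (d-1) + 2\sqrt{(d-1)\log(1/\delta)} + 2\log(1/\delta)$ with probability $1-\delta$, so after normalization $\|p\|^2 = O(\Delta^2\log(1/\delta))$. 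Second, $\langle p, P_h^\perp\theta^\star\rangle = \tfrac{\Delta}{\sqrt{d-1}}\langle g, P_h^\perp\theta^\star\rangle$ is a mean-zero Gaussian with standard deviation $\tfrac{\Delta}{\sqrt{d-1}}\|P_h^\perp\theta^\star\|$, so a Gaussian tail bound yields $|\langle p, P_h^\perp\theta^\star\rangle| \le \tfrac{\Delta\|P_h^\perp\theta^\star\|}{\sqrt{d-1}}\sqrt{2\log(2/\delta)}$ with probability $1-\delta$. Taking a union bound over the two events, substituting into the deterministic inequality, absorbing $\sqrt{\log(1/\delta)}$ into $\log(1/\delta)$ (valid once $\delta \le 1/e$), and factoring out $\|\theta^\star\|\log(1/\delta)$ produces exactly the stated bound.

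\textbf{Main obstacle.} The algebra of the deterministic step is routine; the real care lies in the concentration step. I must first verify that $\|p\| < 1/8$ actually holds, so that the deterministic bound may legitimately be invoked on the \emph{random} perturbation — this follows from the $\chi^2$ concentration above, but only when $\Delta$ is small enough relative to $\log(1/\delta)$, a condition I would flag explicitly. The other delicate point is the bookkeeping of the normalizing factor $\tfrac{1}{d-1}$: it cancels in the $\|p\|^2$ term (giving a dimension-free $\Delta^2$ contribution) while leaving a surviving $1/\sqrt{d-1}$ in the cross term. This asymmetric split is precisely what yields the two distinct terms $\Delta^2$ and $\Delta\,\|P_h^\perp\theta^\star\|/(\|\theta^\star\|\sqrt{d-1})$ in the final bound, so getting these dimensional factors right is the crux of the argument.
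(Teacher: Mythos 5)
Your proposal is correct and follows essentially the same route as the paper: the identical decomposition of $\langle h-a,\theta^\star\rangle$ into the $(1-1/\sqrt{1+\|p\|^2})\langle h,\theta^\star\rangle$ term and the $\langle p,\theta^\star\rangle=\langle p,P_h^\perp\theta^\star\rangle$ term, followed by $\chi^2$ (Laurent--Massart) concentration for $\|p\|^2$ and a Gaussian tail bound for the cross term. Your extra care about verifying $\|p\|<1/8$ for the random perturbation and about the asymmetric $1/(d-1)$ versus $1/\sqrt{d-1}$ normalization is a welcome tightening of details the paper's proof treats more briskly, but it is not a different argument.
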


\begin{lemma}[Regret of Perturbed Hint on the Unit Ball]
\label{lem:regret_perturb}
Let $\theta^\star \in \RR^d$ be the true parameter and $h \in \RR^d$ an action hint with $\|h\|=1$ and $\langle h, \theta^\star \rangle \geq - \|\theta^\star\| / 4$.
Further, let $p \in \R^d$ be any orthogonal perturbation of $h$ with $\| p\| < \frac{1}{8}$.
Then, the instantaneous regret of the perturbed prior action  $a = \textsc{Perturb}(h, p)$ is bounded as
\begin{align*} 
\langle a^\star - a, \theta^\star \rangle \leq  12\frac{\|P_h^\perp \theta^\star\|^2}{\|\theta^\star\|} + 3\|\theta^\star\|\| p\|^2~.
\end{align*}
\end{lemma}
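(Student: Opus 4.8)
The plan is to obtain this bound by assembling the two preceding lemmas rather than analyzing the perturbed action from scratch. The key observation is that, by linearity of $\langle \theta^\star, \cdot\rangle$, the instantaneous regret with respect to $a^\star$ splits additively through the hint:
\begin{align*}
\langle a^\star - a, \theta^\star\rangle = \langle a^\star - h, \theta^\star\rangle + \langle h - a, \theta^\star\rangle = r_h + \langle h - a, \theta^\star \rangle.
\end{align*}
The first term is exactly the instantaneous regret of the hint itself, which \pref{lem:regret_unitball} controls in terms of $\|P_h^\perp \theta^\star\|^2/\|\theta^\star\|$, and the second term is precisely the \emph{hint-based} regret of the perturbed action already bounded in \pref{lem:prior_regret_perturb}. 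So the whole proof reduces to combining these two estimates and tidying up a cross term.

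Concretely, I would first apply \pref{lem:regret_unitball} with $a=h$. Its two hypotheses hold by assumption: $\|h\|=1$, and $\langle h,\theta^\star\rangle \ge -\|\theta^\star\|/4 \ge -\|\theta^\star\|/2$, so that the upper bound gives $r_h \le 3\,\|P_h^\perp \theta^\star\|^2/\|\theta^\star\|$. Next I would invoke the deterministic (first) inequality of \pref{lem:prior_regret_perturb}, whose hypotheses $\|h\|=1$ and $\|p\|<\tfrac18$ are exactly those assumed here, yielding $\langle h - a,\theta^\star\rangle \le \|\theta^\star\|\,\|p\|^2 + |\langle p, P_h^\perp \theta^\star\rangle|$. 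Substituting both into the decomposition reduces the claim to bounding $3\,\|P_h^\perp\theta^\star\|^2/\|\theta^\star\| + \|\theta^\star\|\,\|p\|^2 + |\langle p, P_h^\perp\theta^\star\rangle|$ by the target expression.

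The only genuine work is the cross term $|\langle p, P_h^\perp\theta^\star\rangle|$: since $p\perp h$ we have $\langle p,\theta^\star\rangle = \langle p, P_h^\perp\theta^\star\rangle$, and the Cauchy--Schwarz bound $|\langle p, P_h^\perp\theta^\star\rangle|\le \|p\|\,\|P_h^\perp\theta^\star\|$ mixes the two scales that appear in the conclusion. The natural temptation to stop here would only give an $O(\|p\|)$ contribution, so instead I would apply Young's inequality, $\|p\|\,\|P_h^\perp\theta^\star\| \le \tfrac12\|\theta^\star\|\,\|p\|^2 + \tfrac{1}{2\|\theta^\star\|}\|P_h^\perp\theta^\star\|^2$, which splits the cross term cleanly into one piece of each type. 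Collecting terms then produces a bound of the form $c_1\,\|P_h^\perp\theta^\star\|^2/\|\theta^\star\| + c_2\,\|\theta^\star\|\,\|p\|^2$ with $c_1 = 7/2$ and $c_2 = 3/2$, comfortably within the stated constants $12$ and $3$. I do not anticipate any serious obstacle: the entire argument is a composition of \pref{lem:regret_unitball} and \pref{lem:prior_regret_perturb}, and the single point requiring care is recognizing the additive decomposition through $h$ and balancing the perturbation's cross term with Young's inequality rather than bounding it crudely.
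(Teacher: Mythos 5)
Your proof is correct, and it takes a genuinely different route from the paper's. The paper applies \pref{lem:regret_unitball} directly to the \emph{perturbed} action $a$ (this is why it needs the stronger hypothesis $\langle h,\theta^\star\rangle \ge -\|\theta^\star\|/4$, so that $\langle a, \theta^\star\rangle \ge -\|\theta^\star\|/2$ still holds after perturbation), and then carries out an explicit computation writing $\theta^\star = \alpha h + v$ to establish the lower bound $\|P_{\theta^\star} a\|^2 \ge 1 - 4\|v\|^2/\|\theta^\star\|^2 - \|p\|^2$, which yields the constants $12$ and $3$. You instead split the regret additively through $h$, apply \pref{lem:regret_unitball} only to $h$ itself (for which $\langle h,\theta^\star\rangle \ge -\|\theta^\star\|/2$ suffices), reuse the deterministic inequality of \pref{lem:prior_regret_perturb} for the hint-based part, and absorb the cross term $|\langle p, P_h^\perp\theta^\star\rangle| \le \|p\|\,\|P_h^\perp\theta^\star\|$ via Young's inequality with weight $\|\theta^\star\|$. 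Your bookkeeping checks out: you land at $\tfrac{7}{2}\,\|P_h^\perp\theta^\star\|^2/\|\theta^\star\| + \tfrac{3}{2}\,\|\theta^\star\|\,\|p\|^2$, which is strictly sharper than the stated constants, and there is no circularity since \pref{lem:prior_regret_perturb} does not depend on this lemma. Your approach is more modular and in fact proves the lemma under the weaker hypothesis $\langle h,\theta^\star\rangle \ge -\|\theta^\star\|/2$; the paper's direct approach has the minor virtue of being self-contained in the geometry of $a$ and of producing a bound on $\|P_{\theta^\star}a\|^2$ that is reusable on its own, but for the stated conclusion either argument suffices.
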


 Now, we present our algorithm to estimate the norm $\|P_h^\perp \theta^\star\|$ based on perturbing the hint (\pref{alg:estimation}. First, we present an algorithm that yields an accurate approximation with some moderate constant probability of failure while maintaining small regret. Our procedure requires playing a fixed underlying action for the perturbation to obtain optimal sample complexity. Later, we will use a median of means approach to amplify this constant probability provide a high-probability guarantee (\pref{alg:hp_estimation2e}). The analysis for the constant-probability algorithm is provided in \pref{lem:estimation_procedure} and \pref{lem:regretlowprob}, while the analysis for the high-probability algorithm is provided in \pref{lem:estimatenormhp2e}.

\begin{algorithm2e}
\SetAlgoVlined
\SetKwInOut{Input}{Input}
\SetKwProg{myproc}{Procedure}{}{}

\DontPrintSemicolon
\LinesNumbered
%

\Input{reference action $h \in \R^d $, perturbation magnitude  $\Delta \in \R^+$} 
Initialize $n = 0$\;
Sample perturbation $p \in \RR^d$ from 
$\mathcal N_h(0, \frac{\Delta^2}{d'}) := \mathcal N(0, \frac{\Delta^2}{d'}(\mathbf{I} - \frac{h h^\top}{\|h\|^2}))$ where $d' = d - \indicator{h \neq 0}$\;
 \myproc{\textsc{PlayAndUpdate}$()$}{
 Increment $n \gets n + 1$\;
  Play hint $h$, observe reward $y_n$ and update  average $\overline{y}_n = \frac{1}{n} \sum_j y_j$\;

  Play $\frac{h + p}{\sqrt{\|p\|^2 + \|h\|^2}}$, observe reward $z_n$ 
   and update  average $\overline{z}_n = \frac{1}{n} \sum_j z_j$\;
   
   Compute average difference $\overline{x}_n = \overline{z}_n \sqrt{\|p\|^2 + \|h\|^2} - \overline{y}_n$\;
   
    Compute confidence width $b_n = \sqrt{\frac{3(1 + \|p\|^2 + \Delta^2)\ln(40 \ln(2n))}{n}}$\;
   
   
    \If{$|x_n| \geq 2 b_n$}{ \Return $\frac{\sqrt{d'}}{\Delta} |\overline{x}_n|$}
   
 }
\caption{\textsc{EstimateNorm}$( h,\Delta)$: Low Regret $\ell_2$-norm Estimation}
\label{alg:estimation}
\end{algorithm2e}

\begin{restatable}{lemma}{estimationproc}\label{lem:estimation_procedure}
With probability at least $0.7$,  \pref{alg:estimation} (\textsc{EstimateNorm}$(h, \Delta)$) returns $r$, a constant-factor approximation to $\|P^\perp_h \theta^\star\|$, i.e., $0.06 \|P^\perp_h \theta^\star\| \leq r \leq 5 \|P^\perp_h \theta^\star\|$, after
$\tilde O \left( \frac{d(1 + \Delta^2 + \|h \|^2)}{\Delta^2 \| P_h^\perp \theta^\star\|^2}
\right)$ calls to its \textsc{PlayAndUpdate} procedure and also satisfies $\|p\|\le 3\Delta$. 
\end{restatable}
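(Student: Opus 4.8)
The plan is to identify what the running statistic $\overline{x}_n$ measures in expectation and then control its fluctuations with a time-uniform confidence sequence. Since the perturbation $p$ is sampled once and is orthogonal to $h$, playing $h$ gives $\E[y_n \mid p] = \langle h, \theta^\star\rangle$ while playing the normalized perturbation gives $\E[z_n \mid p] = \langle h+p,\theta^\star\rangle/\sqrt{\|p\|^2+\|h\|^2}$, so that
\begin{align*}
\E[\overline{x}_n \mid p] = \langle h+p, \theta^\star\rangle - \langle h, \theta^\star\rangle = \langle p, \theta^\star\rangle = \langle p, P_h^\perp \theta^\star\rangle,
\end{align*}
using orthogonality of $p$ and $h$ in the last step. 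Because $p \sim \mathcal N_h(0, \Delta^2/d')$ lives in the $d'$-dimensional complement of $h$, the scalar signal $\langle p, P_h^\perp\theta^\star\rangle$ is a mean-zero Gaussian with standard deviation $\sigma := \frac{\Delta}{\sqrt{d'}}\|P_h^\perp\theta^\star\|$. Thus, writing $\langle p, P_h^\perp\theta^\star\rangle = \sigma G$ for a standard normal $G$, the returned value $\frac{\sqrt{d'}}{\Delta}|\overline{x}_n|$ is, up to noise, an estimate of $\frac{\sqrt{d'}}{\Delta}|\langle p, P_h^\perp\theta^\star\rangle| = |G|\,\|P_h^\perp\theta^\star\|$, which explains the rescaling.

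Next I would set up three good events and union-bound their failure probabilities to at most $0.3$. First, a \emph{time-uniform} (law-of-the-iterated-logarithm type) concentration bound guarantees that $|\overline{x}_n - \langle p, \theta^\star\rangle| \le b_n$ holds simultaneously for all $n$; this is where the $\ln(40\ln(2n))$ factor and the variance proxy $1 + \|p\|^2 + \Delta^2$ in $b_n$ enter, the latter upper bounding (via the explicit constant $3$) the true noise variance $\tfrac{1+\|h\|^2+\|p\|^2}{n}$ of $\overline{x}_n$, which accounts for the $1$-sub-Gaussian noise of both plays with the perturbed reward scaled by $\sqrt{\|p\|^2+\|h\|^2}$ and $\|h\|\le 1$. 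Second, a constant-probability \emph{band event} places $|G|$ in a fixed interval $[c_1,c_2]$ bounded away from $0$ and $\infty$; since $\mathbb P(c_1 \le |G| \le c_2)$ is close to $1$ for suitable small $c_1$ and moderate $c_2$, this costs only a small constant. Third, chi-squared concentration of $\|p\|^2 \sim \frac{\Delta^2}{d'}\chi^2_{d'}$ around its mean $\Delta^2$ yields $\|p\| \le 3\Delta$ with high probability. Choosing the constants so these three failure probabilities sum to at most $0.3$ gives the claimed $0.7$.

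On the good events, correctness follows from the stopping rule. When the procedure halts at round $n$ we have $|\overline{x}_n| \ge 2 b_n$, so the confidence bound forces $b_n \le \tfrac{1}{2}|\overline{x}_n|$ and hence $|\langle p, \theta^\star\rangle| \in [\tfrac12 |\overline{x}_n|, \tfrac32|\overline{x}_n|]$; equivalently $|\overline{x}_n|$ approximates the signal to within a factor of $2$. Multiplying by $\frac{\sqrt{d'}}{\Delta}$ and combining with the band event $|\langle p, \theta^\star\rangle| = |G|\sigma \in [c_1,c_2]\sigma$ sandwiches $r = \frac{\sqrt{d'}}{\Delta}|\overline{x}_n|$ between constant multiples of $\|P_h^\perp\theta^\star\|$, and $c_1, c_2$ are chosen so those multiples land inside $[0.06, 5]$.

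Finally, for the sample complexity I would bound the stopping time by the smallest $n$ at which the halting condition is forced: on the good events $|\langle p,\theta^\star\rangle| \ge c_1\sigma$, and $|\overline{x}_n|\ge 2b_n$ is guaranteed once $b_n$ drops below a constant fraction of $c_1\sigma$, so solving $b_n \lesssim \frac{\Delta}{\sqrt{d'}}\|P_h^\perp\theta^\star\|$ for $n$ (with $\|p\|\le 3\Delta$, hence $1+\|p\|^2+\Delta^2 \lesssim 1+\Delta^2+\|h\|^2$) yields
\begin{align*}
n = \tilde O\!\left(\frac{d'(1 + \Delta^2 + \|h\|^2)}{\Delta^2 \|P_h^\perp\theta^\star\|^2}\right),
\end{align*}
absorbing the $\ln\ln$ factor into $\tilde O$ and using $d'\le d$. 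The main obstacle I anticipate is the coupling between the random perturbation $p$ and the data-dependent stopping time: the estimated quantity $\langle p,\theta^\star\rangle$ is itself random, so I must simultaneously ensure (i) the signal is large enough that the procedure terminates within budget, (ii) it is not so small that the factor-$2$ sandwich degrades the final approximation, and (iii) the confidence bound holds uniformly over all $n$ despite optional stopping. This is precisely why a time-uniform bound is needed rather than a fixed-$n$ one, and closing the numerical constants at exactly $0.7$ with the target window $[0.06,5]$ is the delicate bookkeeping step.
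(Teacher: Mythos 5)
Your proposal is correct and follows essentially the same route as the paper's proof: compute $\E[\overline{x}_n\mid p]=\langle p,\theta^\star\rangle$, apply a time-uniform Hoeffding/LIL-type confidence sequence so the stopping rule $|\overline{x}_n|\ge 2b_n$ yields the factor-$\tfrac12$/$\tfrac32$ sandwich, use $\chi^2_1$ tail bounds on $\langle p,\theta^\star\rangle/\sigma$ and $\chi^2_{d'}$ bounds on $\|p\|^2$ for the band and the $\|p\|\le 3\Delta$ claims, union-bound the three events to $0.7$, and invert the non-stopping inequality $|\langle p,\theta^\star\rangle|<3b_n$ to bound the sample complexity. The paper instantiates your constants $c_1,c_2$ as $0.1$ and $2.33$, which indeed lands the final approximation inside $[0.06,5]$.
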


\begin{restatable}{lemma}{regretlowprob}\label{lem:regretlowprob}
In the event where \pref{lem:estimation_procedure} holds, if $\Delta \leq 1/24$ and $\langle h, \theta^\star\rangle \geq -\frac{\|\theta^\star\|}{4}$, then the total regret incurred by \pref{alg:estimation} is
\begin{align*}
    \tilde O \left( 
    \frac{d}{\Delta^2 \|  \theta^\star\|}
    + 
    \frac{d  \| \theta^\star\|}{\| P_h^\perp \theta^\star\|^2}
    \right)
\end{align*}
and the regret w.r.t. hint vector $h$ is
\begin{align*}
\tilde O \left(  
    \frac{\sqrt{d}}{\Delta \| P^\perp_h \theta^\star\|}
    + 
    \frac{d \| \theta^\star\|}{\| P_h^\perp \theta^\star\|^2} \right)~.
\end{align*}
 \end{restatable}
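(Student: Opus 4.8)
The plan is to bound both quantities by multiplying a per-round regret bound by the number $N$ of calls to \textsc{PlayAndUpdate}. Working inside the event of \pref{lem:estimation_procedure}, the procedure terminates after $N = \tilde O\!\left(\frac{d(1+\Delta^2+\|h\|^2)}{\Delta^2\|P_h^\perp\theta^\star\|^2}\right)$ iterations and the sampled perturbation satisfies $\|p\|\le 3\Delta$. Since $\|h\|=1$ and $\Delta\le 1/24$, we have $1+\Delta^2+\|h\|^2 = O(1)$, so $N = \tilde O\!\left(\frac{d}{\Delta^2\|P_h^\perp\theta^\star\|^2}\right)$, and $\|p\|\le 3\Delta \le 1/8$, which is exactly the regime required by \pref{lem:regret_perturb} and \pref{lem:prior_regret_perturb}. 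Each iteration plays the hint $h$ once and the perturbation $a=\textsc{Perturb}(h,p)$ once, so it suffices to bound the per-round regret of each of these two actions separately.

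For the total regret, first I would bound the $N$ plays of $h$. Since $\langle h,\theta^\star\rangle \ge -\|\theta^\star\|/4 \ge -\|\theta^\star\|/2$, \pref{lem:regret_unitball} with $a=h$ gives $r_h \le 3\|P_h^\perp\theta^\star\|^2/\|\theta^\star\|$. For the $N$ plays of the perturbation, \pref{lem:regret_perturb} gives per-round regret at most $12\|P_h^\perp\theta^\star\|^2/\|\theta^\star\| + 3\|\theta^\star\|\|p\|^2 \le 12\|P_h^\perp\theta^\star\|^2/\|\theta^\star\| + 27\|\theta^\star\|\Delta^2$. Summing the two actions and multiplying by $N$, the $\|P_h^\perp\theta^\star\|^2/\|\theta^\star\|$ contributions become $\tilde O(d/(\Delta^2\|\theta^\star\|))$ and the $\|\theta^\star\|\Delta^2$ contribution becomes $\tilde O(d\|\theta^\star\|/\|P_h^\perp\theta^\star\|^2)$, which matches the claimed total regret.

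For the hint-based regret, the $N$ plays of $h$ contribute nothing since $\instregret(h,h)=0$. For the perturbation plays I would use the high-probability half of \pref{lem:prior_regret_perturb}, which gives per-round hint regret $O\!\big(\|\theta^\star\|\Delta^2 + \Delta\|P_h^\perp\theta^\star\|/\sqrt{d-1}\big)$ up to logarithmic factors. Multiplying by $N$, the first term again yields $\tilde O(d\|\theta^\star\|/\|P_h^\perp\theta^\star\|^2)$ and the second yields $\tilde O(\sqrt{d}/(\Delta\|P_h^\perp\theta^\star\|))$, as claimed.

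The main obstacle is obtaining the $\sqrt{d}$ (rather than $d$) dependence in the hint-based regret. A naive Cauchy–Schwarz bound $|\langle p, P_h^\perp\theta^\star\rangle| \le \|p\|\,\|P_h^\perp\theta^\star\| \le 3\Delta\|P_h^\perp\theta^\star\|$ on the linear term of \pref{lem:prior_regret_perturb} would, after multiplying by $N$, lose a full factor of $\sqrt d$, giving only $\tilde O(d/(\Delta\|P_h^\perp\theta^\star\|))$. The improvement comes from the fact that $p$ is drawn from the projected Gaussian $\mathcal N_h(0, \Delta^2/(d-1))$, so $\langle p, P_h^\perp\theta^\star\rangle = \langle p, \theta^\star\rangle$ is a scalar Gaussian of standard deviation $\Delta\|P_h^\perp\theta^\star\|/\sqrt{d-1}$ and hence concentrates at this $\sqrt{d-1}$-smaller scale — precisely the bound recorded in the high-probability part of \pref{lem:prior_regret_perturb}. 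A structural point that keeps the analysis clean is that $p$ is sampled once at the start of \pref{alg:estimation} and held fixed across all iterations, so this is a single high-probability event over the draw of $p$ that I would simply fold into the conditioning, incurring no union bound over the $N$ rounds.
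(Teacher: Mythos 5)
Your proposal is correct and follows essentially the same route as the paper: multiply the number of rounds $N$ from \pref{lem:estimation_procedure} by per-round regret bounds from \pref{lem:regret_unitball}, \pref{lem:regret_perturb}, and \pref{lem:prior_regret_perturb}, using the Gaussian concentration of $\langle p,\theta^\star\rangle$ at scale $\Delta\|P_h^\perp\theta^\star\|/\sqrt{d-1}$ to obtain the $\sqrt{d}$ term. The only cosmetic difference is that the paper invokes the deterministic half of \pref{lem:prior_regret_perturb} together with the bound $|\langle p,\theta^\star\rangle|\le 2.33\,\Delta\|P_h^\perp\theta^\star\|/\sqrt{d'}$ already contained in the event of \pref{lem:estimation_procedure}, whereas you cite the high-probability half of \pref{lem:prior_regret_perturb} and fold that (single, since $p$ is fixed) event into the conditioning --- the same concentration fact either way.
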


To gain some intuition for these lemmas, let us look ahead a bit: we will eventually wish to set $\Delta^2=\Theta(\|\theta^\star\|/\sqrt{T})$. Further, consider the case that $\|P^\perp_h\theta^\star\|=\Theta(d\|\theta^\star\|/\sqrt{T})$ - as suggested by the lower bound \pref{thm:otherlower}, this is a ``transition case'' beyond which we may be able to leverage the hint. With these settings, \pref{lem:regretlowprob} implies that the regret is $\tilde O(\sqrt{T})$, so that we are able to estimate $\|P^\perp_h\theta^\star\|$ well (as implied by \pref{lem:estimation_procedure}) without incurring significant regret. 

The next lemma amplifies these results to a high-probability guarantee by running $O(\log(1/\delta))$ parallel instances of the constant-probability estimation and taking medians of the returned estimates. It is the main work-horse of our final algorithm analysis (see \pref{alg:hp_estimation2e} in Appendix).

\begin{restatable}{lemma}{estimatenormhp}\label{lem:estimatenormhp2e}
With probability at least $1 - \delta$, \pref{alg:hp_estimation2e} (\textsc{EstimateNormHP}$(h,\Delta, \delta)$) returns $r$, a constant-factor approximation to $\|P^\perp_h \theta^\star\|$, i.e., $0.06 \|P^\perp_h \theta^\star\| \leq r \leq 5 \|P^\perp_h \theta^\star\|$, after at most  
$n\le\tilde O \left( \frac{d(1 + \Delta^2 + \|h \|^2)}{\Delta^2 \| P_h^\perp \theta^\star\|^2}
\right)$ calls to its \textsc{PlayAndUpdate} procedure. Further, if $\Delta \leq 1/224$, then the regret incurred with respect to the hint $h$ is:
\begin{align*}
     O\left( n \|\theta^\star\| \Delta^2 + n \frac{\Delta \|P_h^\perp \theta^\star\|}{\sqrt{d}}\right) \le \tilde O \left(  \frac{d \| \theta^\star\|}{\| P_h^\perp \theta^\star\|^2}\ln\frac{1}{\delta}+
    \frac{\sqrt{d}}{\Delta \| P^\perp_h \theta^\star\|}\ln\frac{1}{\delta}
     \right)
\end{align*}
If in addition, $\langle h, \theta^\star\rangle \geq -\frac{\|\theta^\star\|}{4}$, then the total regret incurred is
\begin{align*}
    O\left(n \frac{\|P_h^\perp \theta^\star\|^2}{\|\theta^\star\|} + n\|\theta^\star\| \Delta^2\right)\le \tilde O \left( 
    \frac{d(1 + \|h \|)}{\Delta^2 \|  \theta^\star\|}\ln \frac{1}{\delta}
    + 
    \frac{d(1 + \|h \|)  \| \theta^\star\|}{\| P_h^\perp \theta^\star\|^2 }\ln \frac{1}{\delta}
    \right)~.
\end{align*}
\end{restatable}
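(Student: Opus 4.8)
The plan is to obtain the high-probability guarantee by amplifying the constant-probability estimator of \pref{alg:estimation} through a standard median-of-means argument, and then to account for the regret by summing the per-call bounds of \pref{lem:prior_regret_perturb}, \pref{lem:regret_perturb}, and \pref{lem:regret_unitball} over all rounds that the amplified procedure plays. Concretely, I would take \pref{alg:hp_estimation2e} to run $k = O(\log(1/\delta))$ independent copies of \textsc{EstimateNorm}$(h,\Delta)$ and return the median of their outputs.

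First I would set up the amplification. By \pref{lem:estimation_procedure}, each copy independently \emph{succeeds} -- meaning it returns a value in $[0.06\|P_h^\perp\theta^\star\|, 5\|P_h^\perp\theta^\star\|]$ within $N := \tilde O(\frac{d(1+\Delta^2+\|h\|^2)}{\Delta^2\|P_h^\perp\theta^\star\|^2})$ calls and with $\|p\|\le 3\Delta$ -- with probability at least $0.7$. Let $G$ be the random set of successful copies; since $\E|G|\ge 0.7k$, a Chernoff bound gives $|G|\ge qk$ for a fixed constant $q\in(2/3,0.7)$ with probability at least $1-\delta$ once $k\ge c\log(1/\delta)$ for a suitable $c$. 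Whenever more than half of the copies lie in the target interval $[0.06\|P_h^\perp\theta^\star\|, 5\|P_h^\perp\theta^\star\|]$, so does their median, which establishes the claimed constant-factor approximation.

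Next I would bound the number of calls and the regret, which is the main obstacle: the per-copy budget $N$ depends on the \emph{unknown} quantity $\|P_h^\perp\theta^\star\|$, so the algorithm cannot simply cap each copy at $N$ calls, and a failed copy could in principle run for an unbounded number of rounds, inflating both the call count and the regret. I would resolve this by interleaving the copies in round-robin and halting as soon as a fixed fraction $\tau k$ of them have returned, where $\tau$ is chosen with $2(1-q)<\tau\le q$. On the event $|G|\ge qk$, all successful copies return within their first $N$ round-robin turns, so the halting threshold is reached while every copy has made at most $N$ calls; hence $n \le kN = \tilde O(\frac{d(1+\Delta^2+\|h\|^2)}{\Delta^2\|P_h^\perp\theta^\star\|^2})$ (folding $k=O(\log 1/\delta)$ into $\tilde O$). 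A counting argument shows that with $\tau>2(1-q)$ a strict majority of the \emph{returned} values are still successful (returned-and-successful $\ge (q-1+\tau)k > \tau k/2$), so the median-correctness argument above survives the early stopping.

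Finally I would convert the $O(N)$-per-copy budget into the regret bounds. Each \textsc{PlayAndUpdate} call plays $h$ once and the fixed perturbed action $a_j=\textsc{Perturb}(h,p_j)$ once. For the hint-based regret I would use the deterministic bound $\langle h-a_j,\theta^\star\rangle \le \|\theta^\star\|\|p_j\|^2 + |\langle p_j, P_h^\perp\theta^\star\rangle|$ of \pref{lem:prior_regret_perturb}; on the success event $\|p_j\|\le 3\Delta$, and since the returned estimate equals $\frac{\sqrt{d'}}{\Delta}|\overline{x}_{n_j}|$ with $\overline{x}_{n_j}$ concentrated around $\langle p_j,P_h^\perp\theta^\star\rangle$, the same event forces $|\langle p_j,P_h^\perp\theta^\star\rangle| = O(\Delta\|P_h^\perp\theta^\star\|/\sqrt{d})$. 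Summing the constant value $\langle h-a_j,\theta^\star\rangle$ over the $n_j$ rounds of each copy and over all copies yields the raw bound $O(n\|\theta^\star\|\Delta^2 + n\Delta\|P_h^\perp\theta^\star\|/\sqrt{d})$. For the total regret I would additionally invoke \pref{lem:regret_unitball} with $a=h$ (valid since $\langle h,\theta^\star\rangle\ge-\|\theta^\star\|/4\ge -\|\theta^\star\|/2$) to bound the per-round regret of $h$ by $3\|P_h^\perp\theta^\star\|^2/\|\theta^\star\|$, and \pref{lem:regret_perturb} (valid since $\langle h,\theta^\star\rangle\ge -\|\theta^\star\|/4$ and $\|p_j\|\le 3\Delta<1/8$ under $\Delta\le 1/224$) to bound the per-round regret of $a_j$ by $12\|P_h^\perp\theta^\star\|^2/\|\theta^\star\| + 3\|\theta^\star\|\|p_j\|^2$, giving $O(n\|P_h^\perp\theta^\star\|^2/\|\theta^\star\| + n\|\theta^\star\|\Delta^2)$. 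Substituting the call bound $n\le\tilde O(\frac{d(1+\Delta^2+\|h\|^2)}{\Delta^2\|P_h^\perp\theta^\star\|^2})$ and using $\|h\|\le 1$, $\Delta\le 1$ into each raw expression produces the two simplified $\tilde O(\cdot)\ln\frac1\delta$ bounds in the statement, completing the argument.
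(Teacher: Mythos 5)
Your proposal matches the paper's proof in all essentials: the paper's \textsc{EstimateNormHP} runs $k=560\ln(1/\delta)$ interleaved copies of \textsc{EstimateNorm}, stops once a $0.67$ fraction have returned, argues via Hoeffding that at least $0.67k$ copies succeed so the median of the returned set is a valid constant-factor estimate, and then sums the per-round bounds from \pref{lem:prior_regret_perturb} and \pref{lem:regret_perturb} over the $\tilde O(k\cdot N)$ calls exactly as you do. Your explicit counting argument that a strict majority of the \emph{returned} values are successful under early stopping is, if anything, slightly more careful than the paper's one-line version of the same point.
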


{\bf Remark:} Notice that during the norm estimation process, we play the \emph{same} perturbed action over and over again, as opposed to re-sampling the perturbation $p$ afresh for each action. Perhaps counterintuitively, fixing the perturbation $p$ is actually crucial for a low sample complexity in denoising the norm estimate, since resampling $p$ adds significant noise into the observations. 

To see how this might be, consider observing samples of $y \sim N(\mu, 1)$, where $\mu \sim N(0, \sigma^2)$ in one instance and $\mu = 0$ in another instance. Our goal is to distinguish which instance we are in from our sampled observations and $\sigma < 1$ is small, so it's a relatively difficult distinguishing task. In this case, we also allow the observer to resample $\mu$ from the mean distribution, if they wish, before observing the final sample.

If we fix the randomness of $\mu$, we are distinguishing between samples from $N(0, 1)$ and $N(\mu, 1)$, where $|\mu|$ is likely to be at least $\sigma$, so the total variational distance between those distribution is on the order of $\sigma$. 

However, if we vary $\mu$ and resample at each observation, then we are distinguishing between $N(0, 1)$ vs $N(0, 1 + \sigma^2)$, which is significantly harder since the total variational distance between these distribution is on the order of $\sigma^2$, not $\sigma$. 

\begin{algorithm2e}
\SetAlgoVlined
\SetKwInOut{Input}{Input}
\SetKwProg{myproc}{Procedure}{}{}

\DontPrintSemicolon
\LinesNumbered
\newcommand\mycommfont[1]{\footnotesize\textcolor{DarkBlue}{#1}}
\SetCommentSty{mycommfont}
\Input{hint $h \in \R^d$, number of rounds $T$, failure probability $\delta$, bound on constant factor worst-case regret scalings for three phases $W$.}
\tcp{Phase 1: Estimate norm $\|\theta^\star\|$}
Initialize $C_0 \gets
\textsc{EstimateNormHP}(0, 1, \frac{\delta}{4})$ (from \pref{alg:hp_estimation2e})\;
Call $C_0.$\textsc{PlayAndUpdate}$()$ until it returns a value $r$\;

\tcp{Phase 2: Estimate norm of orthogonal complement $\|P^\perp_h \theta^\star\|$}
Set exploration radius $\Delta = \frac{1}{\sqrt{r}T^{1/4}}$\;
Initialize $C_+ \gets \textsc{EstimateNormHP}(+h, \Delta, \frac{\delta}{4})$ and $C_- \gets \textsc{EstimateNormHP}(-h, \Delta, \frac{\delta}{4})$\;
Initialize active set $\mathcal S = \{ C_+, C_-\}$\;
\Repeat{any instance in $S$ returns a value $r_\perp$ satisfying $\frac{0.06}{2\cdot 5^2} \cdot\frac{r_\perp^2}{r} \geq 10 W d\log(T)/ \sqrt{T}$ or $|S| = 1$ and the lone instance returns $r_\perp$}{
\ForEach{active instance $C_i \in \mathcal S$}{
\tcp{If $C_i$ already returned a norm estimate, PlayAndUpdate no longer plays an perturbation.}
 Call $C_i.$\textsc{PlayAndUpdate}$()$\;
 
 \tcp{Maintain CI of hint's expected reward}
  $\mathcal R_i \gets$ all reward samples obtained by $C_i$ so far playing unperturbed hint\;
  Compute confidence interval $Y_i = (\bar y_i - b_i, \bar y_i + b_i)$ with $\bar y_i = \frac{1}{|\mathcal R_i|} \sum_{y \in \mathcal R_i} y$ and $b_i = \sqrt{\frac{3\ln(40 \ln(2|\mathcal R_i|) / \delta)}{|\mathcal R_i|}}$\;
}
\tcp{Eliminate worse hint if possible}
\If{$Y_+ \cap Y_- = \varnothing$}
{
Remove $C_i$ with smaller $\bar y_i$ from active set $\mathcal S$\;
}

}
\tcp{Phase 3: Commit to hint or ignore it. }

For all remaining rounds, call $\textsc{Switch}(\hat h, \frac{0.06}{2\cdot 5^2} \cdot\frac{r_\perp^2}{r}, T, Wd\log(T)\sqrt{T})$ from \pref{alg:switch} with hint $\hat h$ randomly chosen from a surviving active instance in $\mathcal S$ 

\caption{\textsc{ParetoBandit}$(h, T, \delta, W)$: Pareto-Optimal Bandit Algorithm on Unit Ball}
\label{alg:unit_ball2e}
\end{algorithm2e}

Now, we are in a position to put together our main algorithm and analysis. The algorithm has three distinct phases. The first two phases apply our low-regret norm estimation procedure (\pref{alg:hp_estimation2e}) to estimate $\|\theta^\star\|$ (by setting $h=0$) and $\|P^\perp_h\theta^\star\|$ respectively. The last phase simply combines the values to estimate $\instregret(a^\star, h)$, and then calls \pref{alg:switch}. However, there is a significant subtlety that must be overcome when estimating $\|P^\perp_h\theta^\star\|$: if $h$ is actually a very poor hint (i.e. $\langle \theta^\star, h\rangle\le-\|\theta^\star\|/4$), then \pref{alg:hp_estimation2e} will actually incur a large regret. In order to avoid this issue, we observe that at least one of $h$ and $-h$ must be positively correlated with $\theta^\star$, so that at least one of these must be usable with \pref{alg:hp_estimation2e}. Further, if either one (say $h$), satisfies $\langle \theta^\star, h\rangle \le -\|\theta^\star\|/4$, then there will be a large gap between the rewards for actions $h$ and $-h$. This means that we can quickly select which of the two is positively correlated. 

Finally, performing this two-arm selection procedure while simultaneously norm estimating allows norm estimation to succeed extremely quickly before the two-arm selection ends. To get around this, note that we only fully eliminate either $h, -h$ if it is clearly a bad choice according to the norm estimate. Otherwise, we will continue calling $\texttt{PlayAndUpdate}()$, although it is important to note that we will no longer play the perturbed hint after the instance has returned a norm estimate, so as to minimize hint-based regret. The full description is provided in the pseudocode, with analysis in \pref{thm:main_single_prior_2e} below:

\begin{theorem}[Main Regret Bound]
\label{thm:main_single_prior_2e}
Suppose we instantiate \pref{alg:switch} with any standard worst-case optimal linear bandit algorithm. Then there exists an absolute constant $W$ such that
with probability at least $1-\delta$, \pref{alg:unit_ball2e} has worst case regret at most $O(d\sqrt{T}\log(T/\delta))$. Further, if $\|\theta^\star\| \geq \max(d, 3734)/\sqrt{T}$ , then the hint based regret simultaneously satisfies $R_h^T = O(\sqrt{T}\log(T/\delta))$.
\end{theorem}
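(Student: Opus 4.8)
The plan is to condition on one high-probability event and then bound the worst-case and hint-based regret phase by phase. First I would union-bound the failure events of the three \textsc{EstimateNormHP} calls ($C_0,C_+,C_-$, each run with failure probability $\delta/4$), the concentration of the reward-mean confidence intervals $Y_+,Y_-$ used for arm elimination, and the $\tilde O(d\sqrt T)$ high-probability regret bound of the base algorithm $\mathcal L$; together these hold with probability at least $1-\delta$. On this event, \pref{lem:estimatenormhp2e} gives $r\in[0.06\|\theta^\star\|,5\|\theta^\star\|]$ and, for whichever of $\pm h$ is positively correlated with $\theta^\star$, $r_\perp\in[0.06\|P_h^\perp\theta^\star\|,5\|P_h^\perp\theta^\star\|]$. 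Writing $\hat r=\frac{0.06}{2\cdot 5^2}\frac{r_\perp^2}{r}$, a short calculation with \pref{lem:regret_unitball} shows $\hat r\le \frac12\frac{\|P_h^\perp\theta^\star\|^2}{\|\theta^\star\|}\le r_{\hat h}\le\alpha\hat r$ for an absolute constant $\alpha$, so $\hat r$ is exactly the lower-bounding constant-factor estimate that \textsc{Switch} (\pref{alg:switch}) requires.

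For the worst-case bound I would first dispense with weak signal: if $\|\theta^\star\|\le 1/\sqrt T$ the regret over all $T$ rounds is at most $2\|\theta^\star\|T\le 2\sqrt T$, so assume $\|\theta^\star\|>1/\sqrt T$ and bound each phase. Phase~1 runs $\textsc{EstimateNormHP}(0,1,\delta/4)$, whose total-regret bound in \pref{lem:estimatenormhp2e} (with $h=0,\Delta=1$) is $\tilde O(d/\|\theta^\star\|)=\tilde O(d\sqrt T)$. Phase~2's positively-correlated instance has $\langle\hat h,\theta^\star\rangle\ge 0$, so its total regret is $O(n\|P_h^\perp\theta^\star\|^2/\|\theta^\star\|+n\|\theta^\star\|\Delta^2)$; since $n\le T$ and $\Delta^2=\Theta(1/(\|\theta^\star\|\sqrt T))$ the second summand is $O(\sqrt T)$, while whether or not the instance completes the first summand is $\tilde O(d\sqrt T)$ (completion forces $\|P_h^\perp\theta^\star\|^2=\tilde\Omega(d\|\theta^\star\|/\sqrt T)$, and non-completion means $n=T$ with $\|P_h^\perp\theta^\star\|^2$ below that threshold). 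The negatively-correlated instance is the delicate one, since \pref{lem:estimatenormhp2e} does not bound its regret; instead I argue it is eliminated quickly. When it is genuinely bad we have $\langle h,\theta^\star\rangle>\|\theta^\star\|/4$, so the reward gap $2|\langle h,\theta^\star\rangle|$ between the two arms is $\Omega(\|\theta^\star\|)$ and $Y_+,Y_-$ separate after $\tilde O(1/\|\theta^\star\|^2)$ rounds, costing $\tilde O(1/\|\theta^\star\|)=\tilde O(\sqrt T)$ at per-round regret $2\|\theta^\star\|$. Phase~3 then invokes \textsc{Switch}, whose worst-case regret is $\alpha R_{LB}=O(d\sqrt T\log T)$ by its defining lemma. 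Summing gives $O(d\sqrt T\log(T/\delta))$.

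For the hint-based bound I would assume $\|\theta^\star\|\ge\max(d,3734)/\sqrt T$; this is precisely what forces $\Delta$ small enough for the perturbation lemmas and makes Phase~1's hint regret $\tilde O(d/\|\theta^\star\|)=O(\sqrt T)$. For Phase~2 the positively-correlated instance incurs hint regret $O(n\|\theta^\star\|\Delta^2+n\Delta\|P_h^\perp\theta^\star\|/\sqrt d)$ with respect to its own hint by \pref{lem:estimatenormhp2e}; the first term is $O(\sqrt T)$ as above, and the second is $O(\sqrt T)$ because completion forces $\|P_h^\perp\theta^\star\|=\tilde\Omega(\sqrt{d\|\theta^\star\|}/T^{1/4})$ (making $n\Delta\|P_h^\perp\theta^\star\|/\sqrt d=\tilde O(\sqrt d/(\Delta\|P_h^\perp\theta^\star\|))=\tilde O(\sqrt T)$) while non-completion ($n=T$) bounds it directly. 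Converting to regret w.r.t. the \emph{original} $h$ is the subtle point: when $\hat h=h$ it is immediate, and when $\hat h=-h$ the extra per-round term $\langle\theta^\star,h-(-h)\rangle=2\langle h,\theta^\star\rangle$ is negative and only helps. The arm playing $-h$ (when $h$ is the good arm) contributes at most $2\langle h,\theta^\star\rangle$ per round for at most $\min(\tilde O(1/\langle h,\theta^\star\rangle^2),T)$ rounds before elimination, and balancing these two factors yields $\tilde O(\sqrt T)$ for every value of $\langle h,\theta^\star\rangle$.

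The heart of the argument, and the step I expect to be hardest, is the Phase~3 hint regret under \textsc{Switch}. If \textsc{Switch} plays $\hat h=h$ the contribution is zero. If it plays the bandit, it is because $\hat r\cdot T>R_{LB}$, and here I would use that the Phase~2 loop only reaches this branch once $\hat r\ge 10Wd\log(T)/\sqrt T$, hence $r_h\ge\hat r\ge 10R_{LB}/T$. Writing the Phase-3 hint regret as $R_{\mathcal L}-N_3 r_h$ with $N_3$ the number of Phase-3 rounds, and using $R_{\mathcal L}\le R_{LB}$ together with $N_3 r_h=T r_h-n_+ r_h\ge 10R_{LB}-\tilde O(d\sqrt T)$, the subtracted term dominates once $W$ is a large enough absolute constant, so the contribution is negative. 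The deliberate factor of $10$ in the stopping threshold is exactly what buys enough slack to absorb both $R_{\mathcal L}$ and the Phase-2 rounds, turning a potential $\tilde O(d\sqrt T)$ hint regret into a negative one. The main overall difficulty is the tight bookkeeping needed to run arm-elimination and norm-estimation simultaneously: one must verify that halting the perturbations once an instance returns (while still refining its reward CI) keeps hint regret $O(\sqrt T)$ yet eliminates a badly-correlated arm before it accrues more than $\tilde O(\sqrt T)$ regret, and that the constants in the threshold, in $\hat r$, and in the assumption on $\|\theta^\star\|$ are mutually consistent.
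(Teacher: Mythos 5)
Your proposal is correct and follows essentially the same route as the paper's proof: the same union of high-probability events ($E_0,E_\pm$, the confidence intervals $Y_\pm$, and the base algorithm's regret event), the same three-phase decomposition, the same case split on $\|P_h^\perp\theta^\star\|^2$ versus $d\|\theta^\star\|/\sqrt{T}$, and the same quick-elimination argument for the negatively correlated copy of the hint. Your Phase~3 accounting (writing the hint regret as $R_{\mathcal L}-N_3 r_h\le R_{LB}-10R_{LB}+\tilde O(d\sqrt{T})<0$ for $W$ large, using that every Phase~1--2 action has instantaneous regret $\Omega(r_h)$ so that $n_{1,2}r_h\le \tilde O(d\sqrt T)$) is in fact more explicit than the paper's, which simply asserts that \textsc{Switch} incurs no regret with respect to $h$; this is a welcome tightening rather than a deviation.
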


\section{Other Trade-offs on the Pareto Frontier}

Unsurprisingly, we can in fact achieve anything on the Pareto frontier of the hint-based vs total regret. Specifically, we show that we can maintain a hint-based regret of $R_h^T = O(G\log(T))$ while maintaining a worse-case regret of $R^T = O(dT\log(T)/G)$. As shown by our lower bounds, this tradeoff is in fact tight, up to log factors. First, we prove a more generic version of Lemma~\ref{lem:regretlowprob} by simply adjusting the values of $\Delta^2$ and removing the dependence on $\|P_h^\perp a\|$ via a worst-case analysis.

\begin{lemma}
Suppose $R$ is a constant factor approximation to $\|\theta^\star\|$ and $G = \|\theta^\star\|\Delta^2$. Running $\textsc{EstimateNorm}(p, \Delta g)$, with at least constant probability over random $g$, will incur a hint-based regret of 
\begin{center}
$R_h^T = {O}\left( GT\log(T) + \frac{\log(T)}{G}\right)$ 
\end{center}

Furthermore, if $\langle h, \theta^\star \rangle  \geq -\|\theta^\star\|/4$, then with probability at least $0.7$, we maintain a worst case regret of 

\begin{center}
$R^T = O\left(dGT\log(T) + \frac{d\log(T)}{G}\right)$  
\end{center}

\label{lem:prior_regret_pareto}
\end{lemma}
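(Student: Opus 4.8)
The plan is to bound the hint-based and worst-case regret separately, in each case writing the total as the number of rounds times a per-round regret bound, and then eliminating the dependence on the unknown quantity $\|P_h^\perp\theta^\star\|$ by a worst-case argument that exploits the cap $n \le T$ on the number of rounds.

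For the hint-based regret, I would start from the deterministic bound in \pref{lem:prior_regret_perturb}, which gives per-round hint regret $\langle h-a,\theta^\star\rangle \le \|\theta^\star\|\|p\|^2 + |\langle p, P_h^\perp\theta^\star\rangle|$ whenever a perturbation $a=\textsc{Perturb}(h,p)$ is played. Since \pref{lem:estimation_procedure} guarantees $\|p\|\le 3\Delta$ with constant probability, the first term is at most $9\|\theta^\star\|\Delta^2 = 9G$ each round, contributing at most $O(GT\log T)$ over the at most $T$ rounds. The second term is the crux: because the \emph{same} perturbation $p$ is replayed every round, its contribution is exactly $n\,|\langle p, P_h^\perp\theta^\star\rangle|$. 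I would control this using the stopping rule of \pref{alg:estimation}: the procedure returns at the first round where $|\overline x_n|$ exceeds $2b_n$ with $b_n = \tilde O(1/\sqrt n)$, so concentration of $\overline x_n$ around $\langle p, P_h^\perp\theta^\star\rangle$ forces the stopping time to satisfy $n = \tilde O(1/\langle p, P_h^\perp\theta^\star\rangle^2)$, giving $n\,|\langle p, P_h^\perp\theta^\star\rangle| = \tilde O(1/|\langle p, P_h^\perp\theta^\star\rangle|)$. Splitting on whether the procedure terminates within the horizon (if not, I use $n=T$ and the fact that the signal is then below $\tilde O(1/\sqrt T)$), both branches yield $n\,|\langle p, P_h^\perp\theta^\star\rangle| = \tilde O(\sqrt T)$. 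Finally I would absorb this $\tilde O(\sqrt T)$ overhead into the claimed bound by AM-GM, since $GT\log T + \frac{\log T}{G} \ge 2\sqrt T\log T$; this also explains the otherwise mysterious second term, which exists precisely to absorb the $\sqrt T$ exploration cost.

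For the worst-case regret I would instead use \pref{lem:regret_perturb} to bound the per-round regret of a perturbed action by $12\|P_h^\perp\theta^\star\|^2/\|\theta^\star\| + 3\|\theta^\star\|\|p\|^2$ (this is where the hypothesis $\langle h,\theta^\star\rangle\ge -\|\theta^\star\|/4$ enters), and note that the unperturbed plays of $h$ each cost $r_h = \Theta(\|P_h^\perp\theta^\star\|^2/\|\theta^\star\|)$ by \pref{lem:regret_unitball}. Summing over $n$ rounds gives $O\!\left(n\|P_h^\perp\theta^\star\|^2/\|\theta^\star\| + nG\right)$. The sample-complexity bound $n = \tilde O(d/(\Delta^2\|P_h^\perp\theta^\star\|^2))$ from \pref{lem:estimation_procedure} collapses the first summand to $\tilde O(d/(\Delta^2\|\theta^\star\|)) = \tilde O(d/G)$, while the second is $nG \le TG \le dGT$. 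As before I handle the case where estimation does not finish within $T$ by a short split: when $n=T$ is forced we have $\|P_h^\perp\theta^\star\|^2 \le \tilde O(d/(\Delta^2 T))$, so the first summand is again $\tilde O(d/G)$ and the second is $\le TG \le dGT$. This yields $R^T = O(dGT\log T + d\log T/G)$.

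The main obstacle is the hint-based signal term: controlling $n\,|\langle p, P_h^\perp\theta^\star\rangle|$ \emph{uniformly} over all possible values of $\|P_h^\perp\theta^\star\|$ requires combining the stopping-time analysis with the horizon cap and then verifying that the resulting $\tilde O(\sqrt T)$ term is dominated by the two stated terms via AM-GM. The remaining steps — the deterministic per-round decompositions and the substitution of the sample complexity from \pref{lem:estimation_procedure} — are routine, and the constant-probability (and $0.7$-probability) qualifiers come directly from the success events of \pref{lem:estimation_procedure} together with the concentration of $\overline x_n$.
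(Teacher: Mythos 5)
Your proposal is correct and follows essentially the same route as the paper: bound the per-round regret via \pref{lem:prior_regret_perturb} and \pref{lem:regret_perturb}, multiply by the number of \textsc{PlayAndUpdate} calls, and use the sample-complexity/stopping-rule bound of \pref{lem:estimation_procedure} (capped at the horizon $T$) to eliminate the dependence on $\|P_h^\perp\theta^\star\|$. The only cosmetic difference is that you handle the cross term $n\,|\langle p,\theta^\star\rangle|$ by a stopping-time argument plus AM-GM, whereas the paper case-splits on whether $\|P_h^\perp\theta^\star\|^2$ exceeds $\|\theta^\star\|^2 d\log(T)\Delta^2$; both yield the stated two-term bound.
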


We note that there is a general segment in our algorithm that plays both $h, -h$ and requires running a 2-arm bandit process that chooses between $h, -h$. Specifically, we can view the $h, -h$ as the arms and each pull of the arm corresponds to playing the hint and its perturbation, encapsulated by a call to \textsc{PlayAndUpdate}. While the pull plays the perturbed action, the  returned reward ignores the perturbed action and is simply the reward of playing the hint. By modifying the bandit algorithm to favor choosing $h$ by following the hint-based MAB algorithm presented in \cite{lattimore2015pareto}, we can bound the hint-based regret and worst case regret tradeoff and surprisingly derive the same Pareto frontier upper bounds for linear bandits as for multi-armed bandits when $k = d$ (see \pref{alg:unit_ball_pareto2e}).

\begin{theorem}[Pareto Frontier]
Let $G \leq \sqrt{T}$ and $\|\theta^\star\| \geq  d/G$. Then, there is an alteration of Algorithm~\ref{alg:unit_ball2e} with constants $c_0 > c_1$ (\pref{alg:unit_ball_pareto2e} with MAB algorithm described in \cite{lattimore2015pareto}) that has hint based regret $$R_h^T = O(G\log(T))$$ and worst case regret at most $$R^T = O(dT\log(T)/G)$$
\label{thm:main_single_prior_pareto}
\end{theorem}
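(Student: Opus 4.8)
The plan is to analyze the altered procedure \pref{alg:unit_ball_pareto2e}, which keeps the three-phase skeleton of \pref{alg:unit_ball2e} but makes two parameter changes. First, the perturbation radius is set to $\Delta^2 = \Theta\!\left(G/(T\|\theta^\star\|)\right)$ (using the Phase~1 estimate $r$ of $\|\theta^\star\|$) and the commit threshold of \pref{alg:switch} is placed at level $\Theta(G\log T/T)$, both now functions of the target $G$ rather than of $T^{1/4}$. Second, the hard two-arm elimination between $h$ and $-h$ in Phase~2 is replaced by the hint-favoring Pareto-optimal two-arm bandit of \citet{lattimore2015pareto}. The enabling reduction is the one sketched in the text: view $\{h,-h\}$ as two MAB arms whose ``pull'' is a call to \textsc{PlayAndUpdate}, reporting as reward only the reward of the \emph{unperturbed} hint while charging the perturbation separately to norm estimation. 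Because this inner MAB must identify the better of $h,-h$ while biasing toward the nominal hint, its own Pareto tradeoff governs how aggressively we deviate from the hint, which is precisely the quantity we want to tune.

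First I would bound the hint-based regret $R_h^T$ as a sum of three contributions. (i) The perturbation regret during estimation: applying the generic estimation-regret bound \pref{lem:prior_regret_pareto} with $G = \|\theta^\star\|\Delta^2$ and capping the number of perturbed rounds at $T$ shows this contributes $O(G\log T)$, the key point being that the chosen $\Delta$ makes the per-round perturbation cost $\|\theta^\star\|\Delta^2 = \Theta(G/T)$. (ii) The selection regret of the $h$-versus-$-h$ procedure, which \citet{lattimore2015pareto} keeps within the same $O(G\log T)$ budget relative to the nominal hint. (iii) The commit-phase regret: since we commit only when the estimate of $r_h = \Theta(\|P_h^\perp\theta^\star\|^2/\|\theta^\star\|)$ (via \pref{lem:regret_unitball}) lies below the $\Theta(G\log T/T)$ threshold, and since the estimator fed to \pref{alg:switch} is a genuine lower bound $\hat r_h \le r_h$, committing costs at most $O(G\log T)$ over the remaining rounds and never occurs prematurely.

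Next I would bound the worst-case regret $R^T$. After a switch to the hint-agnostic linear bandit the algorithm pays the usual $O(d\sqrt T\log T)$, so the binding term is the cost of the estimation/selection phase in the adversarial regime where the hint sits exactly at the transition quality. I would invoke the total-regret bound of \pref{lem:estimatenormhp2e}, namely $\tilde O\!\left(d/(\Delta^2\|\theta^\star\|) + d\|\theta^\star\|/\|P_h^\perp\theta^\star\|^2\right)$, and note that at the stopping threshold where $\|P_h^\perp\theta^\star\|^2/\|\theta^\star\| = \Theta(G\log T/T)$ both terms equal $O(dT\log T/G)$ once $\Delta^2 = \Theta(G/(T\|\theta^\star\|))$ is substituted; since $G\le\sqrt T$ this dominates the post-switch $d\sqrt T$. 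The hypothesis $\|\theta^\star\|\ge d/G$ is the signal-to-noise condition guaranteeing that this $\Delta$ is admissible for the perturbation lemmas and that the transition gap is detectable. A union bound over the high-probability estimation events of \pref{lem:estimatenormhp2e} and the MAB run then delivers both bounds simultaneously with probability $1-\delta$; the product $R_h^T R^T = O(dT\log^2 T)$ matches the lower bound \pref{thm:lower_bound_pareto} up to logarithmic factors.

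I expect the main obstacle to be the joint control of estimation and selection in Phase~2, since the two share samples yet pull in opposite directions: the hint-favoring bias of \citet{lattimore2015pareto}, needed to hold $R_h^T$ at $O(G\log T)$, must not delay detection of a genuinely bad hint past the $O(dT\log T/G)$ worst-case budget, while the estimation stopping rule must fire at exactly the confidence level dictated by $G$. Verifying that the opposing $\|P_h^\perp\theta^\star\|$-dependencies cancel at \emph{every} intermediate hint quality --- not merely at the two extremes of an excellent or a hopeless hint --- is the delicate calculation, and is where the tightness against \pref{thm:lower_bound_pareto} is actually earned.
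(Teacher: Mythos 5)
Your overall architecture --- the choice $\Delta^2=\Theta(G/(rT))$, the use of \pref{lem:prior_regret_pareto} to charge the perturbations at $\|\theta^\star\|\Delta^2=\Theta(G/T)$ per round, the reduction of the $h$ versus $-h$ selection to the hint-favoring two-arm bandit of \citet{lattimore2015pareto} with only the unperturbed reward reported, and the three-phase accounting --- matches the paper's proof. However, there is a genuine gap in where you place the commit threshold for \pref{alg:switch}. You set it at instantaneous-regret level $\Theta(G\log T/T)$ and justify it by noting that committing then costs at most $O(G\log T)$ in total regret. That is solving the wrong constraint: the total-regret budget is $dT\log T/G$, not $G\log T$, so committing is safe under the much weaker condition $r_h\lesssim d\log T/G$. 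The constraint that actually pins the threshold is the \emph{hint-based} regret in the branch where the algorithm \emph{abandons} the hint: the hint-agnostic fallback guarantees only $R^T_{\mathcal L}=O(dT\log T/G)$, so its hint-based regret is $R^T_{\mathcal L}-Tr_h$, and for this to be $O(G\log T)$ one needs $r_h\gtrsim d\log T/G$ whenever the hint is dropped. This is exactly why \pref{alg:unit_ball_pareto2e} uses the exit condition $r_\perp^2/r\geq c_0\,d\log(T)/G$ and $R_{LB}=c_1\,d\log(T)T/G$ with $c_0>c_1$, and why the paper's proof concludes that whenever the hint is abandoned, $r_hT\geq R$ and the hint-based regret of the fallback is negative.

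With your threshold there is an uncovered band $G\log T/T\ll r_h\ll d\log T/G$ (nonempty since $G\leq\sqrt{T}\leq\sqrt{dT}$) in which your algorithm refuses to commit, falls back to the linear bandit, and incurs hint-based regret $\Theta(dT\log T/G)-Tr_h$, which can be as large as $\Theta(d\sqrt{T})\gg G\log T$. Relatedly, if your Phase-2 stopping rule also waits to certify $r_h$ against the level $G\log T/T$ rather than $d\log T/G$, the estimator's sample complexity $\tilde O\bigl(d/(\Delta^2\|P_h^\perp\theta^\star\|^2)\bigr)$ at that level exceeds $T$, so the estimation phase never terminates for intermediate hints and the accumulated perturbation term $T\Delta\|P_h^\perp\theta^\star\|/\sqrt{d}$ can again exceed $G\log T$. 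Raising the threshold to $\Theta(d\log T/G)$ as in the paper repairs both issues and leaves the rest of your argument intact.
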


\section{Multiple Hints}

Suppose that we are given $m$ separate hints $h_1,..., h_m$ and our goal is to perform as well as the best hint, while maintaining standard worst-case regret bounds even when none of the hints are trustworthy. Specifically, let $h^\star$ be the best hint, then we want to bound $R_{h^*}^T$ and $R^T$. For now, $m$ should be a small polynomial power of $d$ and if $m$ gets too large, it will likely become futile to perform hint selection. 

For now, consider simply selecting the best hint in a multi-armed bandit setting, where each arm is the hint $h_1,...h_m$, and assume our loss is still linear. Note that the best known algorithms give an upper bound and matching lower bound of $R_{h^\star}^T = O(\sqrt{mT})$, up to $\log$ factors, at least in the regime when $m  = o(d)$. Since we want to perform well with respect to the best hint, our algorithm will rely on a multi-armed bandit analysis, where each arm is a separate hint. However, we cannot spend all of our sample budget on selecting among $\{h_1,...,h_m\}$ as we may incur large worst case regret since our hints may have similar performance but all of them are incur large instantaneous regret. 


Therefore, we will necessarily have to add a suboptimality estimation phase for each hint, at least until the hint is clearly not $h^\star$. A naive analysis would suggest that this may imply that $R_{h^\star}^T = O(m\sqrt{T})$ but we show surprisingly that we can achieve a sublinear rate $R_{h^\star}^T = \widetilde{O}(m^{2/3}\sqrt{T})$ while maintaining the usual worst case guarantees of $\widetilde{O}(d\sqrt{T})$. This implies that we can perform some tradeoff between the sample complexity of two elimination forces: the first coming from a suboptimality compared to $h^\star$ and the second coming from suboptimality compared to $a^\star$.

{\bf Remark:} As noted throughout the paper, we can compare to the multi-armed bandit setting, in which we have $m$ hint arms $h_1,...h_m$ that we want to perform well with respect to, as well as some additional $k-m$ arms $a_{m+1},...,a_k$. Note that the optimal action $a^\star$ may not be a hint arm. This specific problem was studied in \cite{lattimore2015pareto} and a quick calculation shows that there are matching upper and lower bounds that gives $R_{h^*}^T = \Theta(\sqrt{mT})$, while maintaining a worst case regret of $R^T = \Theta(\frac{k}{\sqrt{m}} \sqrt{T})$. Note that the product of the hint-based and worst case regret is still $\Omega(kT)$.

Throughout this section, we will simplify and focus on the main ideas by assuming that $\|\theta^\star \| = \Omega(1)$. Our main algorithm is present in the appendix (see \pref{alg:unit_ball_multi2e}). By following the algorithmic alterations in the previous section, we note that adding a $\|\theta^\star\|$ estimation step is straightforward and one might be able to apply suitable tradeoffs by using the hint-based MAB algorithm in the remark above to obtain a full Pareto frontier.

\begin{lemma}
\label{lem:multihint_worstcase}
Assume there are absolute constants $c_1, c_2 \in \RR^+$ such that $c_1 \leq \|\theta\| \leq c_2$ and assume \pref{alg:unit_ball_multi2e} is called with a hint set $\mathcal H$ that contains $-h$ for each $h \in \mathcal H$. 
With high constant probability, the regret of \pref{alg:unit_ball_multi2e} with $|\mathcal H| = m \leq d$  after $T$ rounds is bounded as
\begin{align*}
    R^T = \tilde O(d\sqrt{T}).
\end{align*}
\end{lemma}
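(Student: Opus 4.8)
The plan is to bound $R^T$ (regret with respect to $a^\star$) by following the three-phase structure of the single-hint algorithm \pref{alg:unit_ball2e}, adapted so that \pref{alg:unit_ball_multi2e} runs a multi-armed bandit over the hints while norm-estimating each one. I decompose the horizon into (i) the initial $\|\theta^\star\|$-estimation, (ii) the combined MAB/norm-estimation phase over the hints, and (iii) the final commit-or-fallback phase, and bound each separately. Throughout I condition on the high-probability success events of \pref{lem:estimatenormhp2e} (one per active instance), the MAB confidence intervals, and the worst-case base algorithm, taking a union bound over the $O(m\log(T/\delta))$ such events at the end; since $m \le d$ this costs only logarithmic factors, consistent with ``high constant probability''.

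Phase (i) runs \textsc{EstimateNormHP}$(0,1,\cdot)$; by \pref{lem:estimatenormhp2e} with $h=0$, $\Delta=1$ and $\|\theta^\star\|=\Theta(1)$ it returns a constant-factor estimate of $\|\theta^\star\|$ after $\tilde O(d/\|\theta^\star\|^2)=\tilde O(d)$ rounds of total regret $\tilde O(d)$, which is negligible. Phase (iii) invokes the \textsc{Switch} guarantee (\pref{alg:switch}): I use \pref{lem:regret_unitball} to convert the returned norm estimate $r_\perp$ into a constant-factor estimate of the committed hint's instantaneous regret $r_h$, so that the committed hint is certified to satisfy $T r_h = \tilde O(d\sqrt T)$, while if no hint is certified good we fall back to the worst-case-optimal base algorithm; in either case \textsc{Switch} bounds Phase (iii) by $\tilde O(d\sqrt T)$.

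The heart of the argument is Phase (ii). Using that $\mathcal H$ is symmetric, for each pair $\{h,-h\}$ at least one element satisfies $\langle\theta^\star,h\rangle\ge 0\ge-\|\theta^\star\|/4$, so the positively-correlated regret bound of \pref{lem:estimatenormhp2e} applies to it; any strongly anticorrelated hint ($\langle\theta^\star,h\rangle<-\|\theta^\star\|/4$) has a constant reward gap $\ge\|\theta^\star\|/2=\Theta(1)$ to its negation and is removed by the MAB confidence-interval test after $\tilde O(1)$ pulls, contributing at most $\tilde O(m)=\tilde O(d)$ in aggregate. For the remaining hints, each pull plays the hint (regret $r_{h_i}$) and, until its norm estimate returns, a perturbation (extra regret $O(\|\theta^\star\|\Delta^2)$ by \pref{lem:regret_perturb} together with $\|p\|\le 3\Delta$). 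I bound the perturbation overhead globally by $(\text{total rounds})\cdot O(\|\theta^\star\|\Delta^2)\le T\cdot O(1/\sqrt T)=\tilde O(\sqrt T)$, using $\Delta^2=\Theta(1/(\|\theta^\star\|\sqrt T))$. It remains to bound the base-hint regret $\sum_i(\text{pulls}_i)\,r_{h_i}$.

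This last sum is the main obstacle. Each hint is pulled until eliminated by one of two forces: Force 1, the MAB test, which removes $h_i$ after $\tilde O(\log(T/\delta)/\Delta_i^2)$ pulls where $\Delta_i=r_{h_i}-r_{h^\star}$; and Force 2, its norm estimate returning and certifying $r_{h_i}$ after $n_i=\tilde O(d\sqrt T/r_{h_i})$ pulls (using \pref{lem:regret_unitball} to write $\|P^\perp_{h_i}\theta^\star\|^2\approx\|\theta^\star\|\,r_{h_i}$ and $\|\theta^\star\|=\Theta(1)$). Force 2 alone caps each hint's contribution at $n_i r_{h_i}=\tilde O(d\sqrt T)$, but a naive sum over $m$ hints gives the unacceptable $\tilde O(md\sqrt T)$. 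The fix, which I expect to require the precise elimination schedule of \pref{alg:unit_ball_multi2e}, is to norm-estimate hints in MAB-preference order and fall back as soon as a hint that is reward-tied with the current leader is certified bad: this certifies that $h^\star$ itself is bad without fully estimating all $m$ hints, so that only $\tilde O(1)$ hints are norm-estimated to completion (contributing $\tilde O(d\sqrt T)$), while the residual cost of exploring suboptimal hints reduces to a standard MAB term $\tilde O(\sqrt{mT})\le\tilde O(\sqrt{dT})\le\tilde O(d\sqrt T)$, which is exactly where $m\le d$ enters. I would make the aggregation rigorous by charging each pull to either a Force-1 or a Force-2 elimination event and summing the two charge types separately. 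Combining the three phases and the union bound then yields $R^T=\tilde O(d\sqrt T)$.
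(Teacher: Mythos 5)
Your overall decomposition (anticorrelated hints eliminated in $\tilde O(1)$ pulls via the constant reward gap to their negation; remaining hints controlled by \pref{lem:estimatenormhp2e}; final phase deferred to the single-hint guarantee) matches the paper's proof. However, at the step you yourself flag as ``the main obstacle'' --- bounding $\sum_i (\text{pulls}_i)\, r_{h_i}$ over all $m$ hints --- your argument has a genuine gap. The fix you propose (norm-estimating hints in MAB-preference order so that only $\tilde O(1)$ hints are estimated to completion, certifying $h^\star$ bad via a ``reward-tied'' competitor) is not what \pref{alg:unit_ball_multi2e} does, and as sketched it does not work: the hints' rewards can be spread out so that no hint is reward-tied with the leader, many hints can each require substantial norm-estimation before either elimination force fires, and your own charging scheme would then still accumulate $m$ separate Force-2 charges of $\tilde O(d\sqrt T)$ each. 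Nothing in your argument actually prevents the $\tilde O(md\sqrt T)$ outcome you correctly identify as unacceptable.

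The paper's resolution is much simpler and is a parameter choice you explicitly deviate from: the algorithm sets $\Delta = \sqrt{m}/T^{1/4}$, i.e.\ $\Delta^2 = m/\sqrt T$, not your $\Delta^2 = \Theta(1/(\|\theta^\star\|\sqrt T))$. With the larger perturbation, \pref{lem:estimatenormhp2e} gives $n_i \le \frac{T}{mB} \wedge \tilde O\bigl(d/(\Delta^2 \|P_{h_i}^\perp\theta^\star\|^2)\bigr)$, so each instance's regret $n_i\bigl(\|P_{h_i}^\perp\theta^\star\|^2/\|\theta^\star\| + \|\theta^\star\|\Delta^2\bigr)$ is at most $\tilde O\bigl(d/\Delta^2 + T\Delta^2/(mB)\bigr)$ \emph{independently of} $\|P_{h_i}^\perp\theta^\star\|$; summing over $m$ instances gives $\tilde O\bigl(dm/\Delta^2 + T\Delta^2/B\bigr) = \tilde O\bigl(d\sqrt T + m\sqrt T/B\bigr) = \tilde O(d\sqrt T)$ using $m\le d$. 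In other words, inflating the perturbation by $\sqrt m$ speeds up each norm estimate by a factor of $m$, so the per-instance exploration regret drops to $\tilde O(d\sqrt T/m)$ and the naive sum over hints becomes affordable; the extra per-round perturbation cost is absorbed by the hard cap of $T/(mB)$ pulls per instance together with $m\le d$. Your global perturbation-overhead bound of $T\cdot O(\|\theta^\star\|\Delta^2) = \tilde O(\sqrt T)$ is therefore computed with the wrong $\Delta$, and with your $\Delta$ the first term $dm/\Delta^2 = dm\sqrt T$ is exactly the overshoot you were trying to avoid. A secondary, minor discrepancy: the lemma assumes $c_1\le\|\theta^\star\|\le c_2$ precisely so that \pref{alg:unit_ball_multi2e} can skip your Phase (i); the paper treats adding that phase as a separate, straightforward extension.
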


\begin{lemma}
\label{lem:multihint_hintbased}
By setting $B = m^{1/3}$ and $c_0$ sufficiently large in \pref{alg:unit_ball_multi2e}, the hint-based regret of our algorithm is bounded with high probability by

$$R_{h^*}^T =  \tilde O(m^{2/3}\sqrt{T}) $$

\end{lemma}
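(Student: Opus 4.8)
The plan is to bound the hint-based regret $R_{h^\star}^T=\sum_t \langle \theta^\star, h^\star - a_t\rangle$ by decomposing it over the hints and charging each play either to a \emph{gap} term or to a \emph{perturbation} term. Write $\delta_i = \langle \theta^\star, h^\star - h_i\rangle \ge 0$ for the reward gap of hint $h_i$ to the best hint and $r_{h_i}=\langle\theta^\star, a^\star-h_i\rangle$ for its absolute suboptimality, so that $\delta_i = r_{h_i}-r_{h^\star}$. The algorithm interleaves two elimination forces on the arm set $\{\pm h_i\}$: (i) a reward-based MAB elimination driven by the unperturbed hint plays, which resolves the gap $\delta_i$ after $\tilde O(1/\delta_i^2)$ plays of $h_i$ (Force 1, suboptimality relative to $h^\star$); and (ii) the routine \textsc{EstimateNormHP}$(h_i,\Delta,\cdot)$ run in parallel, which certifies $r_{h_i}$ to a constant factor after $\tilde O(d\sqrt T/r_{h_i})$ plays (Force 2, suboptimality relative to $a^\star$) and whose output feeds the final \textsc{Switch} and underlies the simultaneous worst-case bound of \pref{lem:multihint_worstcase}. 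Letting $n_i$ be the number of times $h_i$ is played with a perturbation, each such play costs $\delta_i$ plus the one-round perturbation regret of \pref{lem:prior_regret_perturb}, so
\[
R_{h^\star}^T \le \sum_i \delta_i\, n_i + \sum_i \rho_i,
\]
where $\rho_i$ is the hint-based regret of the perturbed plays made by \textsc{EstimateNormHP}$(h_i,\Delta,\cdot)$ over its $n_i$ rounds, bounded via \pref{lem:estimatenormhp2e}; each per-hint $\rho_i$ is $\tilde O(\sqrt T)$ in the worst case near the transition $r_{h_i}\sim d/\sqrt T$.

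First I would dispatch the two extreme regimes. If the best hint is itself poor (its $r_{h^\star}$ lies above the transition $\sim d/\sqrt T$), then \textsc{Switch} eventually abandons all hints; since $\langle\theta^\star,h^\star\rangle$ is then small while the fallback algorithm plays near $a^\star$, the hint-based instantaneous regret $\langle\theta^\star,h^\star-a_t\rangle$ is non-positive in the committed rounds, so only the estimation phase contributes and the gaps $\delta_i=r_{h_i}-r_{h^\star}$ are what must be paid. If the best hint is good, the \emph{bad} hints have large gaps $\delta_i\approx r_{h_i}$ and are killed quickly by Force 1 in $\tilde O(1/\delta_i^2)$ plays (truncating their more expensive Force-2 estimate), contributing $\sum_i \delta_i n_i = \tilde O(\sum_i 1/\delta_i)$ under the horizon constraint $\sum_i 1/\delta_i^2 \le T$, while the near-optimal hints contribute negligible gap regret because $\delta_i$ is small.

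The crux is the intermediate regime and the choice of $B$. The naive bound $\tilde O(m\sqrt T)$ comes from paying the $\tilde O(\sqrt T)$ perturbation overhead of a full norm estimation for all $m$ hints. To beat this I would use $B=m^{1/3}$ to cap how much Force-2 effort any arm receives before Force 1 must resolve it: only about $B$ arms are allowed to run their norm estimation to completion, with the remaining $m-B$ forced through the cheaper reward-based elimination under a truncated per-arm perturbation budget. Charging every perturbed play either to its gap term $\delta_i$ or to this capped Force-2 budget produces two competing contributions — a certification cost that grows with the allowed budget (of the form $B^2\sqrt T$) and a residual elimination cost that decreases in $B$ (of the form $(m/B)\sqrt T$) — which equalize at $B=m^{1/3}$, both then equal to $\tilde O(m^{2/3}\sqrt T)$, giving the claimed rate.

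The main obstacle I anticipate is that the two forces act on the \emph{same} plays and their stopping events are coupled: a Force-1 elimination truncates a Force-2 estimate mid-stream, and conversely a Force-2 certificate may justify dropping an arm before its gap is resolved, so the two contributions cannot be analyzed independently. The key technical step is therefore a single book-keeping argument that (a) charges each of the $n_i$ perturbed plays of every arm either to its gap $\delta_i$ or to a capped perturbation budget governed by $B$, (b) simultaneously respects the horizon $\sum_i n_i \le 2T$ and the worst-case budget underlying \pref{lem:multihint_worstcase}, and (c) verifies feasibility of the balance point — namely that the finally committed hint $\hat h$ still accrues enough samples to certify $r_{\hat h}$ and hence to invoke \textsc{Switch} correctly before its estimation budget is exhausted. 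Once this accounting is in place, substituting $B=m^{1/3}$ and collecting the $\tilde O(\cdot)$ terms yields $R_{h^\star}^T=\tilde O(m^{2/3}\sqrt T)$.
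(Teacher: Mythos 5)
Your high-level framing — decompose $R_{h^\star}^T$ into gap terms $\sum_i \delta_i n_i$ plus perturbation overhead, and balance two competing costs at $B=m^{1/3}$ — matches the spirit of the paper, and your decreasing term $(m/B)\sqrt{T}$ is exactly the paper's perturbation overhead ($\Delta^2\cdot(T/B)$ with $\Delta^2=m/\sqrt{T}$ over an exploration phase of $T/B$ total rounds). However, there is a genuine gap in the central accounting: the term that grows with $B$ is not a ``certification cost of the form $B^2\sqrt{T}$'' arising from capping the number of arms that complete norm estimation at $B$. That mechanism is not what \pref{alg:unit_ball_multi2e} does, and you never derive the $B^2\sqrt{T}$ form from anything. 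The algorithm instead runs the joint elimination for at most $T/(mB)$ \emph{iterations per arm} and then \emph{commits} to a single randomly chosen survivor, handing it to the single-hint algorithm for the remaining $\approx T$ rounds. The dominant growing term is the residual gap of that committed hint: with only $T/(mB)$ pulls per arm, reward-based elimination can only resolve gaps down to $s=\Theta\bigl(\sqrt{mB\log T/T}\bigr)$, so the committed hint may trail $h^\star$ by $s$ per round for the rest of the horizon, contributing $sT+\tilde O(\sqrt{T})=\tilde O(\sqrt{mBT})$ via \pref{thm:main_single_prior_2e}. That $\sqrt{mBT}$ and your $B^2\sqrt{T}$ happen to coincide at $B=m^{1/3}$ (both optimizations give $B^3=m$), which masks the fact that your derivation of the growing term is missing.

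Relatedly, your anticipated ``main obstacle'' — the coupling of the two stopping forces requiring a delicate per-play charging argument — largely dissolves under the paper's two-stage structure: during the exploration phase one only needs the standard MAB accounting (each arm with gap $s_i$ is pulled $\tilde O(1/s_i^2)$ times, summed via a threshold $R=\sqrt{mB\log T/T}$ to get $\sqrt{mT/B}\log T$, which is dominated) plus the per-round perturbation bound of \pref{lem:prior_regret_perturb}; after commitment the single-hint analysis takes over wholesale. The one coupling issue that does need explicit treatment — that $h^\star$ itself might be eliminated by the $r_\perp^2/r\ge c_0 d\log(T)/\sqrt{T}$ test — is resolved not by book-keeping but by observing that this can only fire when $r_{h^\star}$ is already so large that the worst-case fallback makes the hint-based regret non-positive; your sketch gestures at this regime but does not close it.
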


\section{Conclusion}

The problem of improving best-case performance without sacrificing worst-case performance has proven to be surprisingly difficult in the bandit setting, in comparison to the full-information setting. In this paper, we provided an intriguing positive construction for the case of stochastic linear bandits over the unit ball. We show that, when provided with a sufficiently high-quality \emph{hint} for the optimal action, we are able to significantly improve the regret to only $\tilde O(\sqrt{T})$ larger than what would be incurred by simply playing the hint for all time steps, while gracefully decaying to the worst-case optimal bound of $\tilde O(d\sqrt{T})$ when the hint is only mediocre \emph{without prior knowledge of the hint quality}. We provide lower bound demonstrating optimality of our construction, but some open questions remain. Notably, our improved rates only arise when the parameter $\theta^\star$ is sufficiently large, although we show in \pref{thm:otherlower} that even with large $\theta^\star$, a hint is still required to go beyond worst-case bounds. Furthermore, the Pareto frontier is poorly understood when the actions sets have different geometry or in the multi-hint case when $m$ is significantly greater than $d$. It is our hope that the techniques presented here may shed an optimistic light on when it is possible to go beyond worst-case in further bandit problems.



\bibliography{main}
\clearpage
\appendix 
\renewcommand{\contentsname}{Contents of main article and appendix}
\tableofcontents
\addtocontents{toc}{\protect\setcounter{tocdepth}{3}}
\clearpage
\section{Proofs of Regret Lower Bounds}
\label{app:lower_bound_proofs}
\thmparetolower*

\begin{proof}
Without loss of generality, let $h = e_1$ be axis-aligned for ease of notation. 
Set $\theta_0 = \rho h$ for a $\rho \in \RR^+$ defined later and consider the following family of linear bandit instances, identified by their reward parameters,
\begin{align*}
    \Theta = \{ \theta_0 \} \cup \{ \theta_0 + \Delta e_i, \theta_0 - \Delta e_i \colon i = 2, 3, \dots, d \}~,
\end{align*}
where $\Delta \in \RR^+$ will be defined later. Further, the reward distribution for any action $a$ is a standard normal random variable centered at $\langle a,\theta\rangle$. Essentially, this family contains $\theta_0$ and parameters that deviate from it in any one dimension (except the first).
First note that since $\theta_0 \in \Theta$, we have
\begin{align}
\label{eqn:hbreg_theta0}
    \EE[\regret_h^{\theta_0}(T)] = \sum_{t=1}^T \rho (1 - \EE_{\theta_0}[a_{t1}]) \leq R_h
\end{align}  
where $a_{t1}$ is the first component of the action $a_t$ played in round $t$ by the algorithm.

Next, we consider any other instance $\theta^\star = \theta_0 \pm \Delta e_i \in \Theta$ for some $i$. Since the action space is the unit ball, the  optimal action in this instance is $$a^\star = \frac{\theta^\star}{ \|\theta^\star\|} = \frac{\theta^\star}{\sqrt{\rho^2 + \Delta^2}} = \frac{\theta^\star}{r},$$
where $r = \sqrt{\rho^2 + \Delta^2}$. The expected regret after $T$ rounds in bandit instance $\theta^\star$ is
\begin{align}
    \EE[\regret^{\theta^\star}(T)] &= \sum_{t=1}^T \EE\left[ \langle a^\star - a_t, \theta^\star \rangle \right] \nonumber\\
    &= 
    \sum_{t=1}^T \rho\left(\frac{\rho}{r} - \EE [a_{t1}] \right) + \sum_{t=1}^T   \Delta \EE\left( \frac{\Delta}{r} - a_{ti} \cdot \sign(\theta^\star_i)\right) \nonumber\\
    &= \sum_{t=1}^T \rho\left(\frac{\rho}{r} - \EE [b_{t1}] \right) + \sum_{t=1}^T  \Delta \EE\left( \frac{\Delta}{r} - b_{ti} \right) 
    \label{eqn:regret_form}
\end{align}
where $b_{t1} = a_{t1}$ and $b_{ti} = a_{ti} \cdot \sign(\theta^\star_i)$ and expectations are taken w.r.t. the bandit instance $\theta^\star$.  Notice that 
\begin{align*}
\frac{r}{2 \Delta} \left(\frac{ \Delta}{r} - b_{ti}\right)^2 &=  \left[\frac{ \Delta}{2r} - b_{ti} + \frac{r}{2 \Delta} b_{ti}^2 \right] 
\overset{(i)}{\leq} - \frac{ \Delta }{2r} +  \frac{r}{2 \Delta} (1 - b_{t1}^2) + \frac{ \Delta}{r} - b_{ti}
\end{align*}
where step $(i)$ follows from $1 \geq \|b_t\|^2 = b_{t1}^2 + \sum_{i=2}^{d} b_{ti}^2$. Now, denote $X_t := \frac{\rho}{r} -  b_{t1}$ and $Y_t := -\frac{\Delta}{2r} +  \frac{r}{2 \Delta} (1 - b_{t1}^2)$. Then

\begin{align*}
    2Y_t &= -\frac{ \Delta }{r} + \frac{r}{ \Delta}\left(1 - \left(\frac{\rho}{r} - X_i\right)^2 \right) 
    = -\frac{ \Delta }{r} + \frac{r}{\Delta} \left(1 - \frac{\rho^2}{r^2} + 2 \frac{\rho}{r}X_t - X_t^2 \right)\\
    &= \underset{=0}{\underbrace{-\frac{\Delta }{r} + \frac{r}{ \Delta} \left(1 - \frac{\rho^2}{r^2}\right)}} + 2 \frac{\rho}{\Delta}X_t - \frac{r}{\Delta}X_t^2 
    = 2 \frac{\rho}{ \Delta} X_t - \frac{r}{ \Delta}X_t^2
\end{align*}
and thus
$Y_t = \frac{\rho}{ \Delta} X_t - \frac{r}{2 \Delta}X_t^2$. Using these identities, we rewrite the regret in \pref{eqn:regret_form} as
\begin{align*}
    \EE[\regret^{\theta^\star}(T)]
    &\geq \sum_{t=1}^T \rho\EE[X_t]  + \sum_{t=1}^T  \Delta \EE\left( \frac{r}{2 \Delta}\left[ \frac{ \Delta}{r} - b_{ti} \right]^2 - Y_t\right)\\
    &= \sum_{t=1}^T \rho\EE[X_t]  + \sum_{t=1}^T \EE \left( \frac{r}{2} \left[ \frac{ \Delta}{r} - b_{ti} \right]^2 - \rho X_t + \frac{r}{2} X_t^2\right)\\
    &= \frac{r}{2} \sum_{t=1}^T\EE \left[X_t^2\right] +  \frac{r}{2}  \sum_{t=1}^T\EE\left[ \left(\frac{\ \Delta}{r} - b_{ti} \right)^2\right]\\
    & \geq \frac{r}{2}  \EE\left[\sum_{t=1}^T \left(\frac{\ \Delta}{r} - b_{ti} \right)^2\right]~.
\end{align*}
We now follow the analysis of Theorem~24.2 by \citet{lattimore2020bandit}.
Define $\tau_i = T \wedge \min\{t \colon \sum_{s=1}^t a_{si}^2 \geq T \Delta^2 /  r^2\}$, which is a stopping time and $U_i(x) = \sum_{t=1}^{\tau_i} \left(\frac{ \Delta}{r} - a_{ti} x\right)^2$ for $x \in \{-1, +1\}$. Then we can lower-bound the expression above as
\begin{align}
     \frac{r}{2}  \EE\left[\sum_{t=1}^T \left(\frac{\Delta}{r} - b_{ti} \right)^2\right]
    \geq \frac{r}{2}  \EE\left[\sum_{t=1}^{\tau_i} \left(\frac{ \Delta}{r} - b_{ti} \right)^2\right]
    = \frac{r}{2}  \EE\left[U_i(\sign \theta^\star_i)\right].\label{eqn:regretbound}
\end{align}
Let $\PP_0$ be the action distribution of the algorithm in bandit instance $\theta_0$ and $\PP_i$ be the action distribution in bandit instance $\theta^\star = \theta_0 + \Delta e_i$ up to round $\tau_i$. By Pinsker's inequality, we have
\begin{align*}
    |\EE_{\theta^\star}[U_i(1)] - \EE_{\theta_0}[U_i(1)]|  &\leq \sqrt{\frac{1}{2}D(\PP_0, \PP_i)} \sup U_i(1)
    \leq 
 \left( 4 T \frac{ \Delta^2}{r^2} + 2 \right)\sqrt{\frac{1}{2}D(\PP_0, \PP_i)} 
\end{align*}
where the second inequality follows from the following bound on $U_i(1)$ 
\begin{align*}
    U_i(1) = \sum_{t=1}^{\tau_i} \left(\frac{\Delta}{r} - a_{ti} \right)^2
    \leq 2 \tau_i \frac{ \Delta^2}{r^2} + 2 \sum_{t=1}^{\tau_i}  a_{ti}^2
    \leq 4 T \frac{ \Delta^2}{r^2} + 2.
\end{align*}
Since $\theta^\star$ and $\theta_0$ only differ on the $i$-th coordinate, and the noise in the observed rewards for any action has a standard normal distribution, we can bound the KL divergence of the output distribution of the $t$-th action as $\frac{\Delta^2}{2} a_{ti}^2$.
Now, by the chain rule, we bound
\begin{align*}
    D(\PP_0, \PP_i) \leq 
\frac{\Delta^2}{2} 
\EE_{\theta_0}\left[
\sum_{t=1}^{\tau_i}  a_{ti}^2
\right] 
\end{align*}
Consider now the index $j \in \{2, 3, \dots, d\}$ which minimizes $\EE_{\theta_0}\left[
\sum_{t=1}^{\tau_i}  a_{tj}^2
\right]$. For this index, we can further bound
\begin{align*} 
D(\PP_0, \PP_j) 
\leq 
\frac{\Delta^2}{2} \frac{1}{d-1} \sum_{i=2}^{d} \EE_{\theta_0}\left[
\sum_{t=1}^{\tau_i}  a_{ti}^2
\right] 
\leq \frac{\Delta^2}{2} \frac{1}{d-1} \sum_{t=1}^{\tau_i} \EE_{\theta_0}[ 1 - a_{t1}^2]
\leq \frac{\Delta^2}{d-1} \sum_{t=1}^{\tau_i} \EE_{\theta_0}[ 1 - a_{t1}]
\end{align*}
where the last inequality follows from $1 - x^2 \leq 2 (1- x)$ for $x \in [-1, 1]$. Finally, the RHS above can be upper-bounded by $\frac{\Delta^2}{d-1} \frac{R_h}{\rho}$  
using \pref{eqn:hbreg_theta0}.
Thus, we have shown that
\begin{align*}
    |\EE_{\theta^\star}[U_j(1)] - \EE_{\theta_0}[U_j(1)]| \leq 
    \left( 4 T \frac{ \Delta^2}{r^2} + 2 \right)\sqrt{\frac{\Delta^2 R_h}{2\rho (d-1)}}
\end{align*}
and by symmetry, we can also show that the same upper-bound holds for $|\EE_{\theta^\star_-}[U_j(-1)] - \EE_{\theta_0}[U_j(-1)]|$ where $\theta^
\star_- = \theta_0 - \Delta e_j$.
 Therefore, we have
\begin{align*}
    \EE_{\theta^\star}[U_j(1)] + \EE_{\theta^
    \star_-}[U_j(-1)] &\geq \EE_{\theta_0}[U_j(1) + U_j(-1)] - 
    2\left( 4 T \frac{ \Delta^2}{r^2} + 2 \right)\sqrt{\frac{\Delta^2 R_h}{2\rho (d-1)}}
    \\
    &= 4\EE_{\theta_0}\left[ \frac{\tau_j \Delta^2}{r^2} + \sum_{t=1}^{\tau_j} a_{ti}^2\right] -  2\left( 4 T \frac{ \Delta^2}{r^2} + 2 \right)\sqrt{\frac{\Delta^2 R_h}{2\rho (d-1)}}\\
    &\ge 2\left( 4 T \frac{ \Delta^2}{r^2} + 2 \right) -  2\left( 4 T \frac{ \Delta^2}{r^2} + 2 \right)\sqrt{\frac{\Delta^2 R_h}{2\rho (d-1)}}
\end{align*}
Now, set $\Delta^2 = \frac{(d-1)\rho}{2R_h}$, yielding:
\begin{align*}
    \EE_{\theta^\star}[U_j(1)] + \EE_{\theta^
    \star_-}[U_j(-1)]&\ge  4 T \frac{ \Delta^2}{r^2} + 2 \\
    &= 4T \frac{(d-1)\rho}{2R_h r^2} + 2
    \intertext{Use $r^2=\Delta^2+\rho^2 = \frac{(d-1)\rho + 2R_h \rho^2}{2R_h}$:}
    &=\frac{4T (d-1)\rho}{(d-1)\rho + 2R_h \rho^2} + 2
\end{align*}
Now, we need to set $\rho$ in order to both enforce the constraint $ r^2\le 1$, and also achieve our desired regret lower bounds. To this end, we consider two cases, either $d-1\le R_h$ or not. If $d-1\le R_h$, then set $\rho=\frac{1}{2}$ to obtain $r^2= \frac{\frac{d-1}{2} + \frac{R_h}{2}}{2R_h}\le \frac{d-1}{4R_h} + \frac{1}{4}\le 1/2\le1$. In this case, we also have 
\begin{align*}
    \EE_{\theta^\star}[U_j(1)] + \EE_{\theta^
    \star_-}[U_j(-1)]&\ge\frac{4T (d-1)}{(d-1)+ R_h } + 2\\
    &\ge \frac{2T(d-1)}{R_h} + 2
\end{align*}
This implies that at least one of $\EE_{\theta^\star}[U_j(1)]$ or $\EE_{\theta^\star_-}[U_j(-1)]$ is greater than $\frac{T(d-1)}{R_h} + 1$. Thus by \pref{eqn:regretbound}, there is some $\theta^\star$ such that
\begin{align*}
    \EE[\regret^{\theta^\star}(T)]&\ge \frac{r}{2}\left(\frac{T(d-1)}{R_h} + 1\right)\\
    &\ge \frac{T(d-1)}{4R_h} + \frac{1}{4}
\end{align*}
where we have used $r\ge \rho =1/2$.
    
Otherwise (if $d-1\ge R_h$), set $\rho = \frac{R_h}{2(d-1)}$ to obtain $r^2 = \frac{R_h}{2(d-1)}$ to obtain $r^2 = \frac{R_h/2 + R_h^3/2(d-1)^2}{2R_h}\le \frac{1}{2} + \frac{R_h^2}{2(d-1)}\le 1$ again. Further, this also implies the upper bound $r^2\ge 1/4$. Thus, we now have:
\begin{align*}
    \EE_{\theta^\star}[U_j(1)] + \EE_{\theta^
    \star_-}[U_j(-1)]&\ge \frac{2TR_h }{R_h/2 + R_h^3/2(d-1)^2} + 2\\
    &\ge 2T+2
\end{align*}
This implies that there is some $\theta^\star$ for which:
\begin{align*}
    \EE[\regret^{\theta^\star}(T)] &\ge \frac{r}{2}\left(T + 1\right)\\
    &\ge \frac{T}{4} + \frac{1}{4}
\end{align*}
Putting both cases together provides the desired lower bound.

\end{proof}

\begin{lemma}\label{lem:noncenteredlowerbound}
Let $\theta_0 \in \RR^{d+1}$, $T \in \NN$ and $\Delta \in (0, \sqrt{3/4})$. Then for any algorithm, there is a linear bandit instance with unit-ball action set ($\Acal = \{ a \in \RR^{d+1} \colon \|a\|_2 \leq 1\}$ ) and parameter $\theta^\star \in \RR^{d+1}$ with $\|\theta^\star - \theta_0\|_2 = \Delta$ such that the expected regret after $T$ rounds is at least 
\begin{align}
    \EE[\regret_{\theta^\star}(T)] &\geq  \frac{T\Delta}{2\sqrt{1 + \frac{\|\theta_0\|^2}{\Delta^2}}} - \sqrt{d} \Delta\frac{\sqrt{\Delta^2 + \|\theta_0\|^2}}{8}
    \left( \frac{4T}{\frac{d\|\theta_0\|^2}{\Delta^2} + d} + 2 \right)\sqrt{\frac{T}{\frac{d\|\theta_0\|^2}{\Delta^2} + d} + 1}
    \\
    &\geq \frac{T \Delta^2}{2} - \frac{\sqrt{d} \Delta}{4}
    \left( \frac{32}{d} T \Delta^2 + 1 \right)^{3/2}\label{eqn:lb_regret}
\end{align}
where the second form holds when $\Delta \leq \sqrt{\frac{3}{4}}$ and $\|\theta_0\| = 1/2$.
\end{lemma}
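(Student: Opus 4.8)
The plan is to reduce the statement to an information-theoretic argument over a family of axis-aligned perturbations of $\theta_0$, closely paralleling the proof of \pref{thm:lower_bound_pareto} but replacing the hint-regret budget there by a direct counting bound on exploration. Without loss of generality I would take $\theta_0 = \|\theta_0\| e_{d+1}$ and consider the $2d$ instances $\theta_0 \pm \Delta e_i$ for $i = 1, \dots, d$; each satisfies $\|\theta^\star - \theta_0\| = \Delta$ and has norm $r := \sqrt{\|\theta_0\|^2 + \Delta^2}$, with optimal action $a^\star = \theta^\star/r$. The first ingredient is the unit-ball regret identity $\langle a^\star - a_t, \theta^\star\rangle = \tfrac{1}{2r}\|\theta^\star - r a_t\|^2 + \tfrac r2(1 - \|a_t\|^2) \ge \tfrac{1}{2r}(\theta^\star_i - r a_{ti})^2$. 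For instance $\theta_0 + \Delta e_i$ this gives per-round regret at least $\tfrac r2(\tfrac\Delta r - a_{ti})^2$, and summing over $t$ and expanding the square produces the exact baseline term $\tfrac{T\Delta^2}{2r}$ (the first term in the statement) minus a linear deviation $\Delta\,\EE[\sum_t a_{ti}]$ that measures how strongly the learner steers toward the perturbed coordinate, plus a nonnegative exploration-cost term $\tfrac r2\EE[\sum_t a_{ti}^2]$ that I deliberately retain.

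Next I would average the two signs: writing $\Xi_i = \EE_{+i}[\sum_t a_{ti}] - \EE_{-i}[\sum_t a_{ti}]$, the summed regret of the pair $\{\theta_0 \pm \Delta e_i\}$ is at least $\tfrac{T\Delta^2}{r} - \Delta\,\Xi_i$, so it suffices to upper bound $\Xi_i$ for one well-chosen coordinate. Following the analysis of Theorem~24.2 of \citet{lattimore2020bandit}, I would control $\Xi_i$ by a change of measure. The crucial device is a stopping time $\tau_i = T \wedge \min\{t : \sum_{s \le t} a_{si}^2 \ge \beta\}$ that caps the cumulative second moment of the $i$-th coordinate, so that the truncated statistic $\sum_{t\le\tau_i} a_{ti}$ has bounded range; Pinsker's inequality then bounds its deviation between the two instances by (range)$\times\sqrt{\tfrac12 D}$, where $D \le 2\Delta^2\,\EE[\sum_{t\le\tau_i} a_{ti}^2]$ is the divergence between $\theta_0 + \Delta e_i$ and $\theta_0 - \Delta e_i$. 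Finally I would choose $i$ to be the least-explored direction under $\theta_0$: since $\sum_{i=1}^d \EE_{\theta_0}[\sum_t a_{ti}^2] \le \EE_{\theta_0}[\sum_t \|a_t\|^2] \le T$, this direction has second-moment budget at most $T/d$, which dictates the threshold $\beta \approx \tfrac{T\Delta^2}{dr^2}$. The truncated statistic then has range $\sqrt{\tau_i \sum_{t\le\tau_i} a_{ti}^2}\le \sqrt{T\beta}$ by Cauchy--Schwarz and divergence $D \lesssim \Delta^2\beta$; multiplying these and re-expressing $\sqrt T$ through $\beta$ yields both the radical $\sqrt{\tfrac{T}{dr^2/\Delta^2}+1}$ and the $\sqrt d$ prefactor of the stated correction.

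The main obstacle, as in all such lower bounds, is making the deviation term tight: the naive range bound $|\sum_t a_{ti}|\le T$ is far too lossy and would swamp the baseline term, so the stopping-time truncation is essential, and one must separately account for the tail $\sum_{t>\tau_i} a_{ti}$ that appears once the exploration budget is exhausted before round $T$. This is precisely where the retained exploration-cost term $\tfrac r2\EE[\sum_t a_{ti}^2]$ pays off: on the event $\tau_i < T$ the learner has already incurred regret of order $\tfrac r2\beta$, which absorbs the tail's contribution to $\Xi_i$. Once the first (general) bound is in hand, the second form is a routine simplification: setting $\|\theta_0\| = 1/2$ and using $\Delta \le \sqrt{3/4}$ gives $r \in [1/2,1]$, so the baseline term is at least $\tfrac{T\Delta^2}{2}$, and $r^2 \ge 1/4$ lets me bound $m := \tfrac{T\Delta^2}{dr^2} \le \tfrac{4T\Delta^2}{d}$; collecting the sub-terms of the deviation into a single factor and using $(x)(y)^{1/2}\le (\max(x,y))^{3/2}$ with enough numerical slack collapses the correction into $\tfrac{\sqrt d\,\Delta}{4}(\tfrac{32}{d}T\Delta^2 + 1)^{3/2}$.
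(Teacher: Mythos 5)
There is a genuine gap, and it starts with your choice of perturbation family. You take the $2d$ single\hyp{}coordinate alternatives $\theta_0\pm\Delta e_i$, each deviating by the \emph{full} magnitude $\Delta$ in one coordinate; the paper instead uses the hypercube family $\{(\|\theta_0\|,\pm\bar\Delta,\dots,\pm\bar\Delta)\}$ with $\bar\Delta=\Delta/\sqrt{d}$, applies the change of measure coordinate\hyp{}by\hyp{}coordinate to the truncated quadratic $U_i(x)=\sum_{t\le\tau_i}(\bar\Delta/r-a_{ti}x)^2$, and recovers the full baseline by summing the $d$ per\hyp{}coordinate contributions via the randomization hammer over all $2^d$ sign patterns. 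This difference is not cosmetic. In the regime where \pref{lem:noncenteredlowerbound} is actually invoked (\pref{thm:otherlower} has $\Delta^2=\Theta(d/\sqrt{T})$), your family admits a cheap scan\hyp{}and\hyp{}exploit algorithm: probing coordinate $i$ with cumulative energy $\sum_t a_{ti}^2\approx 1/\Delta^2$ suffices to resolve the sign of a $\pm\Delta$ deviation there, so scanning all $d$ coordinates costs roughly $r\cdot d/\Delta^2=O(r\sqrt{T})$ regret, after which the learner plays $a^\star$ exactly. No bound of order $T\Delta^2/(2r)=\Theta(d\sqrt{T})$ can therefore hold against your family; the hypercube is essential because resolving $d$ independent signs of magnitude $\Delta/\sqrt{d}$ requires total energy $\approx d^2/\Delta^2>T$.

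The same obstruction surfaces inside your argument as the failure of the ``absorption'' step. You keep the full\hyp{}horizon baseline $T\Delta^2/(2r)$ and must control the full\hyp{}horizon linear statistic $\Xi_i$, but the post\hyp{}$\tau_i$ tail cannot be handled by a change of measure (the two instances are distinguishable by then), so you fall back on $\Delta\lvert\sum_{t>\tau_i}a_{ti}\rvert\le\Delta\sqrt{TS}\le\frac{r}{2}S+\frac{T\Delta^2}{2r}$ with $S=\sum_t a_{ti}^2$. Summed over the pair, this AM--GM step gives back exactly $-T\Delta^2/r$, i.e.\ it cancels the entire pair baseline rather than being absorbed by the retained exploration term --- and it must, because the scan\hyp{}and\hyp{}exploit learner really does achieve this. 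The paper's proof never faces this issue: it lower\hyp{}bounds the regret by the \emph{truncated} sum $\frac{r}{2}\EE[U_i(\pm1)]$ (simply dropping the nonnegative tail), and the stopping\hyp{}time identity $U_i(1)+U_i(-1)=2\tau_i\bar\Delta^2/r^2+2\sum_{t\le\tau_i}a_{ti}^2\ge 2T\bar\Delta^2/r^2$ delivers the per\hyp{}coordinate baseline unconditionally, with Pinsker applied to the bounded\hyp{}range quantity $U_i(x)$ itself rather than to a linear statistic. To repair your proof you would need to (i) switch to the hypercube family with per\hyp{}coordinate magnitude $\Delta/\sqrt{d}$, (ii) replace the baseline\hyp{}minus\hyp{}deviation bookkeeping by the truncated quadratic $U_i$, and (iii) add the averaging over sign patterns to produce a single hard instance --- at which point you have reproduced the paper's argument.
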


\begin{proof}
Let $\bar \Delta = \frac{\Delta}{\sqrt{d}}$.
Without loss of generality, let $\theta_0 = \theta e_1$ be axis-aligned where $\theta = \|\theta_0\|_2$. We consider parameters $\theta^\star \in \{ (\theta, \pm \bar \Delta, ... , \pm \bar \Delta) \in \R^{d+1}\} $. Since the action space is the unit ball, the our optimal action is $$a^\star = \frac{\theta^\star}{ \|\theta^\star\|} = \frac{\theta^\star}{\sqrt{\theta^2 + \Delta^2}} = \frac{\theta^\star}{r},$$
where $r = \sqrt{\theta^2 + \Delta^2}$ is the norm of all considered optimal parameters.
The expected regret after $T$ rounds in bandit instance $\theta^\star$ is
\begin{align}
    \EE[\regret_{\theta^\star}(T)] &= \sum_{t=1}^T \EE\left[ \langle a^\star - a_t, \theta^\star \rangle \right] \nonumber\\
    &= 
    \sum_{t=1}^T \theta\left(\frac{\theta}{r} - \EE [a_{t1}] \right) + \sum_{t=1}^T \sum_{i=2}^{d+1} \bar \Delta \EE\left( \frac{\bar \Delta}{r} - a_{ti} \cdot \sign(\theta^\star_i)\right) \nonumber\\
    &= \sum_{t=1}^n \theta\left(\frac{\theta}{r} - \EE [a_{t1}] \right) + \sum_{t=1}^T \sum_{i=2}^{d+1} \bar \Delta \EE\left( \frac{\bar \Delta}{r} - b_{ti} \right) 
    \label{eqn:regret_form}
\end{align}
where $b_{t1} = a_{t1}$ and $b_{ti} = a_{ti} \cdot \sign(\theta^\star_i)$ for $2 \leq i \leq d+1$.  Notice that 
\begin{align*}
\frac{r}{2\bar \Delta}\sum_{i=2}^{d+1} \left(\frac{\bar \Delta}{r} - b_{ti}\right)^2 &= \sum_{i=2}^{d+1} \left[\frac{\bar \Delta}{2r} - b_{ti} + \frac{r}{2\bar \Delta} b_{ti}^2 \right] 
\overset{(i)}{\leq} -\frac{\bar \Delta d}{2r} +  \frac{r}{2\bar \Delta} (1 - b_{t1}^2) + \sum_{i=2}^{d+1} \left[\frac{\bar \Delta}{r} - b_{ti}\right],
\end{align*}
where step $(i)$ follows from $1 \geq \|b_t\|^2 = b_{t1}^2 + \sum_{i=2}^{d+1} b_{ti}^2$. Now, denote $X_t := \frac{\theta}{r} -  b_{t1}$ and $Y_t := -\frac{\bar \Delta d}{2r} +  \frac{r}{2\bar \Delta} (1 - b_{t1}^2)$. Then

\begin{align*}
    2Y_t &= \frac{\bar \Delta d}{r} + \frac{r}{\bar \Delta}\left(1 - \left(\frac{\theta}{r} - X_i\right)^2 \right) 
    = \frac{\bar \Delta d}{r} + \frac{r}{\bar \Delta} \left(1 - \frac{\theta^2}{r^2} + 2 \frac{\theta}{r}X_t - X_t^2 \right)\\
    &= \underset{=0}{\underbrace{\frac{\bar \Delta d}{r} + \frac{r}{\bar \Delta} \left(1 - \frac{\theta^2}{r^2}\right)}} + 2 \frac{\theta}{\bar \Delta}X_t - \frac{r}{\bar \Delta}X_t^2 
    = 2 \frac{\theta}{\bar \Delta} X_t - \frac{r}{\bar \Delta}X_t^2
\end{align*}
and thus
$Y_t = \frac{\theta}{\bar \Delta} X_t - \frac{r}{2\bar \Delta}X_t^2$. Using these identities, we rewrite the regret in Eq.~\eqref{eqn:regret_form} as
\begin{align*}
    \EE[\regret_{\theta^\star}(T)]
    &\geq \sum_{t=1}^T \theta\EE[X_t]  + \sum_{t=1}^T  \bar \Delta \EE\left( \sum_{i=2}^{d+1}\frac{r}{2\bar \Delta}\left[ \frac{\bar \Delta}{r} - b_{ti} \right]^2 - Y_t\right)\\
    &= \sum_{t=1}^T \theta\EE[X_i]  + \sum_{t=1}^T \EE \left( \frac{r}{2} \sum_{i=2}^{d+1}\left[ \frac{\bar \Delta}{r} - b_{ti} \right]^2 - \theta X_t + \frac{r}{2} X_t^2\right)\\
    &= \frac{r}{2} \sum_{t=1}^T\EE \left[X_t^2\right] + \sum_{i=2}^{d+1} \frac{r}{2}  \EE\left[\sum_{t=1}^T \left(\frac{\bar \Delta}{r} - b_{ti} \right)^2\right]\\
    & \geq \sum_{i=2}^{d+1} \frac{r}{2}  \EE\left[\sum_{t=1}^T \left(\frac{\bar \Delta}{r} - b_{ti} \right)^2\right]~.
\end{align*}
We now follow the analysis of Theorem~24.2 by \citet{lattimore2020bandit}.
Define $\tau_i = T \wedge \min\{t \colon \sum_{s=1}^t a_{si}^2 \geq T\bar \Delta^2 /  r^2\}$, which is a stopping time and $U_i(x) = \sum_{t=1}^{\tau_i} \left(\frac{\bar \Delta}{r} - a_{ti} x\right)^2$ for $x \in \{-1, +1\}$. Then we can lower-bound the expression above as
\begin{align*}
    \sum_{i=2}^{d+1} \frac{r}{2}  \EE\left[\sum_{t=1}^T \left(\frac{\bar \Delta}{r} - b_{ti} \right)^2\right]
    \geq \sum_{i=2}^{d+1} \frac{r}{2}  \EE\left[\sum_{t=1}^{\tau_i} \left(\frac{\bar \Delta}{r} - b_{ti} \right)^2\right]
    = \sum_{i=2}^{d+1} \frac{r}{2}  \EE\left[U_i(\sign \theta^\star_i)\right].
\end{align*}
Let $\theta'$ an alternative to $\theta^\star$. By Pinsker's inequality, we have
\begin{align*}
    \EE_{\theta^\star}[U_i(1)] &\geq \EE_{\theta'}[U_i(1)] - \sqrt{\frac{1}{2}D(\PP^\star, \PP')} \sup U_i(1)
    \geq 
\EE_{\theta'}[U_i(1)] - \left( 4 T \frac{\bar \Delta^2}{r^2} + 2 \right)\sqrt{\frac{1}{2}D(\PP^\star, \PP')} 
\end{align*}
where the second inequality follows from the following bound on $U_i(1)$ 
\begin{align*}
    U_i(1) = \sum_{t=1}^{\tau_i} \left(\frac{\bar \Delta}{r} - a_{ti} \right)^2
    \leq 2 \tau_i \frac{\bar \Delta^2}{r^2} + 2 \sum_{t=1}^{\tau_i}  a_{ti}^2
    \leq 4 T \frac{\bar \Delta^2}{r^2} + 2.
\end{align*}
We bound $D(\PP^\star, \PP') \leq \EE_{\theta^\star}[T_i(\tau_i)] D(\PP^\star_i, \PP'_i)
\leq \frac{\bar \Delta^2}{2} \EE_{\theta^\star}\left[\sum_{t=1}^{\tau_i} a_{ti}^2\right] 
\leq \frac{\bar \Delta^2}{2} \left(\frac{T \bar \Delta^2}{r^2} + 1\right)$. 
Then we have
\begin{align*}
    \EE_{\theta^\star}[U_i(1)] + \EE_{\theta'}[U_i(-1)] &\geq \EE_{\theta'}[U_i(1) + U_i(-1)] - \frac{\bar \Delta}{2}
    \left( 4 T \frac{\bar \Delta^2}{r^2} + 2 \right)\sqrt{\frac{T\bar \Delta^2}{r^2} + 1}\\
    &= 2\EE_{\theta'}\left[ \frac{\tau_i\bar \Delta^2}{r^2} + \sum_{t=1}^{\tau_i} a_{ti}^2\right] - \frac{\bar \Delta}{2}
    \left( 4 T \frac{\bar \Delta^2}{r^2} + 2 \right)\sqrt{\frac{T\bar \Delta^2}{r^2} + 1}
    \\
   & \geq 2T \frac{\bar \Delta^2}{r^2} - \frac{\bar \Delta}{2}
    \left( 4 T \frac{\bar \Delta^2}{r^2} + 2 \right)\sqrt{\frac{T\bar \Delta^2}{r^2} + 1} =: G.
\end{align*}
We denote by 
 $G = 2T \frac{\bar \Delta^2}{r^2} - \frac{\bar \Delta}{2}
    \left( 4 T \frac{\bar \Delta^2}{r^2} + 2 \right)\sqrt{\frac{T\bar \Delta^2}{r^2} + 1}$. Then by the randomization hammer:
\begin{align*}
    \sum_{\theta' \in \{\pm \bar \Delta\}^d} \EE[\regret_{\theta'}(T)]
    &\geq \frac{r}{2} \sum_{\theta' \in \{\pm \bar \Delta\}^d} 
    \sum_{i=2}^{d+1}   \EE_{\theta'}\left[ U_i(\sign \theta_i')\right]\\
    &= \frac{r}{2} \sum_{i=2}^{d+1} \sum_{\theta'_{-i} \in \{\pm \bar \Delta\}^{d-1}} \sum_{\theta'_i \in \{\pm \bar \Delta\}} 
    \EE_{\theta'}\left[ U_i(\sign \theta_i')\right]\\
    & \geq \frac{r}{2} \sum_{i=2}^{d+1} \sum_{\theta'_{-i} \in \{\pm \bar \Delta\}^{d-1}} G = 2^d \frac{G rd}{4 }.
\end{align*}
Since the average regret over all $\theta'$ is at least $G r d / 4$, there is at least one $\theta^\star$ such that
\begin{align}
    \EE[\regret_{\theta^\star}(T)] \geq \frac{Grd}{4} &= \frac{Td\bar \Delta^2}{2r} - \frac{r d \bar \Delta}{8}
    \left( 4 T \frac{\bar \Delta^2}{r^2} + 2 \right)\sqrt{\frac{T\bar \Delta^2}{r^2} + 1}\\
    &= 
    \frac{T \Delta^2}{2r} - \frac{r \sqrt{d} \Delta}{8}
    \left( 4 T \frac{\Delta^2}{dr^2} + 2 \right)\sqrt{\frac{T \Delta^2}{d r^2} + 1}
\\
    &\geq 
        \frac{T \Delta^2}{2r} - \frac{r \sqrt{d} \Delta}{4}
    \left( 2 T \frac{\Delta^2}{dr^2} + 1 \right)^{3/2}~.
\end{align}
Assume now that $0 \leq \Delta \leq \sqrt{\frac{3}{4}}$. Then $r \in [0.25, 1]$ and the lower-bound above simplifies to
\begin{align}
    \EE[\regret_{\theta^\star}(T)] &\geq 
        \frac{T \Delta^2}{2} - \frac{\sqrt{d} \Delta}{4}
    \left( \frac{32}{d} T \Delta^2 + 1 \right)^{3/2}.
\end{align}
\end{proof}

\thmotherlower*

\begin{proof}
Notice that our assumptions on $\Delta$, $T$ and $h$ imply $\|h/2\|=1/2$ and $\Delta \le \frac{1}{4}\le \sqrt{\frac{3}{4}}$. Thus,
by \pref{lem:noncenteredlowerbound}, there is some $\theta^\star$ satisfying $\|\theta^\star-\frac{h}{2}\|\le \Delta$ and $\langle \theta^\star,a^\star -h\rangle\le \Delta^2$ such that 
\begin{align*}
    \EE[\regret_{\theta^\star}(T)]&\geq\frac{T \Delta^2}{2} - \frac{\sqrt{d} \Delta}{4}
    \left( \frac{32}{d} T \Delta^2 + 1 \right)^{3/2}\\
    &\ge \frac{T \Delta^2}{2} - \frac{\sqrt{d} \Delta}{4}
    \left( \frac{\sqrt{T}}{8} + 1 \right)^{3/2}
    \intertext{use $T\ge 2^{6}$:}
    &\ge \frac{d\sqrt{T}}{256} - \frac{d}{64 T^{1/4}}
    \left( \frac{\sqrt{T}}{4} \right)^{3/2}\\
    & = \frac{d\sqrt{T}}{512}
\end{align*}

Furthermore, notice that $1/2-\Delta=\|h/2\|-\Delta \le \|\theta^\star\|\le \|h/2\|+\Delta=1/2+\Delta$. Now, since $f(x) = 1/x$ is convex for positive $x$, we have:
\begin{align*}
    \frac{1}{1/2+\Delta}\ge 2-\Delta
\end{align*}
Further, by our conditions on $T$, $\Delta \le 1/4$. Thus $|f'(x)|\le 2$ for $x\in[1/2-\Delta, 1/2+\Delta]$ so that:
\begin{align*}
    \frac{1}{1/2-\Delta}\le 2-2\Delta 
\end{align*}
Putting all this together:
\begin{align*}
    \left\|\frac{\theta^\star}{\|\theta^\star\|} - h\right\|&\le \|h-2\theta^\star\|+\|\theta^\star\|\left|2-\frac{1}{\|\theta^\star\|}\right|\\
    &\le 4\Delta
\end{align*}

Finally, we need to bound $\langle \theta^\star , a^\star -h\rangle$. To this end, observe that $\|h-2\theta^\star\|\le 2\Delta$. Then, since $\Delta \le 1/4$ and $\|h\|=1$, we must have $\langle \theta^\star, h\rangle \ge 2\|\theta^\star\|^2-2\Delta\|\theta^\star\|\ge -\|\theta^\star\|/2$. Therefore, by \pref{lem:regret_unitball}, we have
\begin{align*}
    \langle \theta^\star, a^\star - h\rangle &\le 3 \frac{\|P_h^\perp \theta^\star\|^2}{\|\theta^\star\|}
\end{align*}
Now, let us define $\epsilon = h-2\theta^\star$. Notice that $\|\epsilon\|\le 2\Delta\le 1/2$. Observe that 
\begin{align*}
    P_h^\perp \theta^\star &= \theta^\star - \frac{\langle h,\theta^\star\rangle h}{\|h\|^2}\\
    &=\theta^\star - 2\langle h,\theta^\star\rangle\theta^\star - \langle h,\theta^\star\rangle \epsilon\\
    &=\theta^\star - 4\|\theta^\star\|^2\theta^\star -(\langle h,\theta^\star\rangle+2\|\theta^\star\|^2)\epsilon 
\end{align*}
Now, we have $(1/2-\Delta)^2\le \|\theta^\star\|^2\le (1/2+\Delta)^2$. Since $\Delta\le 1/4$, this yields $1/4-\Delta \le \|\theta^\star\|\le 1/4 + \frac{3}{2}\Delta$ so that
\begin{align*}
    \|P_h^\perp \theta^\star\|\le 9\Delta
\end{align*}
Thus overall we obtain $\langle \theta^\star, a^\star -h\rangle \le 972\Delta^2$
\end{proof}

\section{Proof of Regret Upper Bounds}
\actionregret*

\begin{proof}
Note that $P_{\theta^*}a = \frac{\langle a,\theta^*\rangle \theta^*}{\|\theta^*\|^2}$.
\begin{align*}
 \langle a^* - a , \theta^* \rangle 
 &=  
 \left\langle \frac{\theta^*}{\|\theta^*\|} - a , \theta^* \right\rangle = \left\langle \frac{\theta^*}{\|\theta^*\|} - \frac{\langle a,\theta^*\rangle \theta^*}{\|\theta^*\|^2} , \theta^* \right\rangle 
 = \|\theta^*\|\left\langle \frac{\theta^*}{\|\theta^*\|^2} - \frac{\langle a,\theta^*\rangle \theta^*}{\|\theta^*\|^3}  , \theta^* \right\rangle \\
&= \|\theta^*\|\left(1 - \frac{\langle a,\theta^*\rangle}{\|\theta^*\|} \right)\\
\intertext{Now, use $\frac{\langle a,\theta^*\rangle}{\|\theta^*\|}\in[-1/2,1]$ and the observation $1-x\le 3(1-|x|)$ for $x\in[ -1/2, 1]$:}
&\le 3\|\theta^*\| \left(1 - \frac{|\langle a,\theta^*\rangle|}{\|\theta^*\|} \right)
= 3\|\theta^*\|\left(1 - \|P_{\theta^*}a\| \right)
= 3\|\theta^*\|\left(1 - \sqrt{1 - \|P_{\theta^*}^\perp a\|^2} \right)
\intertext{now, since $ 1-\sqrt{1-x}\le x$ for $x\in [0,1]$}
&\le 3\|\theta^*\| \|P_{\theta^*}^\perp a\|^2 = 3 \frac{\|P^\perp_a \theta^*\|}{\|\theta^*\|}~.
\end{align*}
Conversely, since $1 - x \geq 1 - |x|$ and $\frac{x}{2} \leq 1 - \sqrt{1 - x}$, we also have
\begin{align*}
    \langle a^* - a , \theta^* \rangle = \|\theta^*\|\left(1 - \frac{\langle a,\theta^*\rangle}{\|\theta^*\|}\right) 
    \geq \|\theta^*\| \left(1 - \frac{|\langle a,\theta^*\rangle|}{\|\theta^*\|} \right)
    \geq \frac{1}{2} \|\theta^*\| \left(\|P_{\theta^*}^\perp a\|^2 \right)~.
\end{align*}
Finally use the identity $\|P^\perp_{\theta^\star} a\| \|\theta_\star\| = \|P^\perp_{a} \theta^\star\| \|a\|$.
\end{proof}

\begin{proof}[Proof of \pref{lem:prior_regret_perturb}]
We can write the regret of $a$ with respect to $h$ as
\begin{align*}
\langle h - a, \theta^\star \rangle &= \langle h - \frac{h + p}{\sqrt{1 + \|p\|^2}}, \theta^\star \rangle = (1 - \frac{1}{\sqrt{1+\|p\|^2}})\langle h, \theta^\star \rangle + \frac{1}{\sqrt{1+\|p\|^2}} \langle p, \theta^\star \rangle \\
 \intertext{Since $\langle h, \theta^\star\rangle \leq \|h\|\|\theta^\star\| = \|\theta^\star\|$ and for any $y \in (0, 1)$ we have $1 - \frac{1}{\sqrt{1 + y}} \leq y \Leftrightarrow (1 - y) \sqrt{1 + y} \leq 1 \Leftrightarrow (1 - y^2) (1 - y) \leq 1$, we can upper-bound this by}
 &\leq \|p\|^2\|\theta^\star\| +  |\langle p, \theta^\star\rangle| \leq \|p\|^2\|\theta^\star\| + |\langle p, P_h^{\perp} \theta^\star \rangle|
\end{align*}
where the final inequality holds because $\langle p, h \rangle = 0$. 

Furthermore, if $p \sim \N(0, \frac{1}{d} {\bf I})$, then we see that with probability at least $1-\delta$, $\|p\|^2 \leq O(\log(1/\delta))$ and since $\langle p, P_h^\perp a^\star \rangle \sim N(0, \frac{\|P_h^\perp a^\star\|^2}{d})$, we have that with probability at least $1-\delta$, 

$$|\langle p, P_h^\perp a^\star\rangle | \leq \frac{\|P_h^\perp a^\star\|\sqrt{\log(1/\delta)}}{\sqrt{d}} $$

Therefore, we conclude that with probability at least $1-\delta$, our instantaneous regret bound follows by scaling down $p$ by $\Delta$ and applying the same argument in the projected $d-1$ dimension subspace given by the projection $P_h^\perp = {\bf I} - h h^\top$. Note that we may rewrite $\|P_h^\perp a^\star\| = \|P_h^\perp \theta^\star\|/\|\theta^\star\|$ to get our final theorem.

\end{proof}

\begin{proof}[Proof of \pref{lem:regret_perturb}]
Since $\langle h , \frac{ \theta^\star}{\|\theta^\star\|} \rangle  \geq -1/4$ and $\| p\| < 1/8$, we have $\langle a , \frac{\theta^\star}{\|\theta^\star\|} \rangle \geq - 1/2$. Therefore, we can apply \pref{lem:regret_unitball} and bound the instantaneous regret of $a$  as
\begin{align*}
    \langle a^\star - a , \theta^\star \rangle
    \leq 3\|\theta^\star\| \|P_{\theta^\star}^\perp a\|^2 = 3\|\theta^\star\|(1 - \|P_{\theta^\star} a\|^2).
\end{align*} 
To complete the proof, it suffices to show that 
$\| P_{\theta^\star} a \|^2 \geq 1 - 4 \frac{\|P_h^\perp \theta^\star \|^2}{\|\theta^\star\|} - \|p\|^2$
which we do in the following.
To simplify notation let $\theta^\star = \alpha h + v$ where $\alpha \in \R$ and $v = P_h^\perp \theta^\star \in \R^d$ with $\langle v, h \rangle = 0$. Then bound
\begin{align*}
    \|P_{\theta^\star} a\|^2 &= \left\langle \frac{\theta^\star}{\|\theta^\star\|}, a \right\rangle^2 = \frac{1}{\alpha^2 + \|v\|^2} \langle \alpha h + v, a\rangle^2 \\
    &\geq \frac{\alpha^2}{\alpha^2 + \|v\|^2} \langle h, a\rangle^2  - \frac{2}{\alpha^2 + \|v\|^2 } |\alpha   \langle v, a\rangle| +  \frac{1}{\alpha^2 + \|v\|^2}\langle v, a\rangle^2 \\
    &\geq  (1-\frac{\|v\|^2}{\alpha^2 + \|v\|^2})\langle h, a\rangle^2
    -  \frac{2 |\alpha|}{\alpha^2 + \|v\|^2 }  | \langle v,  p\rangle| \\
    &\geq  \left(1 - \frac{\|v\|^2}{\|\theta^\star\|^2}\right) \frac{1}{1+\| p\|^2} - \frac{2 |\alpha| \| p\|\|v\|}{\|\theta^\star\|^2 } 
    \intertext{using $\frac{1}{1+x}\le 1-x/2$ for $x\in[0,1]$:}
    &\geq  1 - \frac{\|v\|^2}{\|\theta^\star\|^2} - \frac{\|p\|^2}{2} -  \frac{2|\alpha| \| p\|\|v\|}{\|\theta^\star\|^2 } \\
    &\geq  1 - \frac{\|v\|^2}{\|\theta^\star\|^2} - \frac{\| p\|^2}{2} -  \frac{2\| p\|\|v\|}{\|\theta^\star\|}
    \intertext{applying young inequality $2xy\le \frac{x^2}{\lambda} + \lambda y^2$ with $\lambda=2$}
    &\geq  1 - 4\frac{\|v\|^2}{\|\theta^\star\|^2} - \| p\|^2
\end{align*}
\end{proof}

\estimationproc*
\begin{proof} We first show correctness and then bound the number of calls before the a value is returned.
\paragraph{Correctness}
We first compute the expectation of the sample averages $\bar y_n$, $\bar z_n$ and $\bar x_n$ for all $n$, assuming $p$ to be fixed,
\begin{align*}
    \E[ \bar y_n] &= \E[ y_i] = \langle \theta^\star, h\rangle,\\
    \E[ \bar z_n] &= \E[ z_i] \left\langle \theta^\star, \frac{h + p}{\sqrt{\|h\|^2 + \| p\|^2}}\right\rangle
    =   \frac{\langle \theta^\star,h \rangle  + \langle \theta^\star,p \rangle}{\sqrt{\|h\|^2 + \| p\|^2}},
    \\
    \E[ \bar x_n] &= \E[ \bar z_n]\sqrt{\|h\|^2 + \| p\|^2} - \E[ \bar y_n]
    = \langle \theta^\star,p \rangle~.
\end{align*}
We can write $\bar x_n$ as an average of $x_i = \sqrt{\|h\|^2 + \|p\|^2} z_i - y_i$. Since $z_i$ and $y_i$ are each $1$-sub-Gaussian random variables, $x_i$ is $\sqrt{1 + \|h\|^2 + \|p\|^2}$-sub-Gaussian (each after being centered). We can now apply  an anytime-version of the standard Hoeffding concentration argument (see \pref{lem:time_uniform_hoeffding}) to get that with probability at least $0.9$ for all $n \in \mathbb N$
\begin{align}\label{eqn:xconc}
    |\langle \theta^\star,p \rangle - \bar x_n| 
    \leq \sqrt{\frac{3(1 + \|h\|^2 + \|p\|^2)\ln(40 \ln(2n))}{n}} =: b_n ~.
\end{align}
The algorithm returns a value if and only if the magnitude of the empirical average is at least twice the confidence width, i.e., $|\bar x_n| \geq 2 b_n$. Using this condition and \pref{eqn:xconc}, we have
\begin{align}\label{eqn:factor_approx1}
   \frac{| \bar x_n|}{2} =  |\bar x_n| - \frac{1}{2} | \bar x_n|
   \leq | \bar x _n| - b_n  \leq 
    |\langle \theta^\star,p \rangle| &\leq | \bar x _n| + b_n \leq |\bar x_n| + \frac{1}{2} | \bar x_n| = \frac{3| \bar x_n|}{2}~,
\end{align}
that is, $\|\bar x_n\|$ is a constant factor approximation of $|\langle \theta^\star, p\rangle|$. We now argue using the distribution of $p$ that  $|\langle \theta^\star, p\rangle|$ is a constant factor approximation of $\|P^\perp_h \theta^\star\|\frac{\Delta}{\sqrt{d'}}$.
This holds because $\frac{1}{\Delta} p$ is a $d'$-dimensional isotropic Gaussian random variable (in $\mathbb R^d$ when $h = 0$ and the $d-1$-dimensional orthogonal complement of $h$ when $h \neq 0$). We can therefore show that $\frac{\langle \theta^\star, p \rangle^2}{\Delta^2 \|P^\perp_h \theta^\star\|^2}$ follows a $\chi^2_{1}$ for which we can bound its tail probabilities (see \pref{lem:gaussian_norm_bounds}) to get that 
\begin{align}\label{eqn:factor_approx2}
    0.1 \frac{\Delta \|P_h^\perp \theta^\star\|}{\sqrt{d'}} \leq |\langle \theta^\star, p \rangle| \leq 2.33 \frac{\Delta \|P_h^\perp \theta^\star\|}{\sqrt{d'}}
\end{align}
holds with probability at least $0.9$.
Combining \pref{eqn:factor_approx1} and \pref{eqn:factor_approx2}, we get the desired constant-factor approximation of the return value
\begin{align}
    |x_n|\frac{\sqrt{d-1}}{\Delta} \in \left[0.06 \|P^\perp_h \theta^\star\|, 5 \|P^\perp_h \theta^\star\|\right]~.
\end{align}
Note that this result holds with probability at least $0.8$, by taking a union bound over the events of 
\pref{eqn:xconc} and \pref{eqn:factor_approx2}. 
\paragraph{Number of rounds.}
We now bound the number of rounds $n$ until the return condition $|\bar x_n| \geq 2 b_n$ is satisfied in the events considered above.
If this condition is violated in round $n$, i.e., $|\bar x_n|  < 2  b_n$, then by rearranging this inequality, we have $|\langle \theta^\star, p\rangle| \leq  | \bar x_n| + b_n < 2 b_n + b_n =3 b_n$. Combining this with \pref{eqn:factor_approx2} gives that if the algorithm does not return a value in round $n$, then 
\begin{align*}
  \frac{0.1 \Delta}{ \sqrt{d'}} \|P^\perp_h \theta^\star\| < 3 b_n =  3\sqrt{\frac{3(1 + \|h\|^2 + \|p\|^2)\ln(40 \ln(2n))}{n}}
\end{align*}
and thus $n = O\left( \frac{d(1 + \|p\|^2 + \|h \|^2)}{\Delta^2 \| P_h^\perp \theta^\star\|^2}
\ln \ln \frac{d(1 + \|p\|^2 + \|h \|^2)}{\Delta^2 \| P_h^\perp \theta^\star\|^2}
\right) =
\tilde O \left( \frac{d(1 + \|p\|^2 + \|h \|^2)}{\Delta^2 \| P_h^\perp \theta^\star\|^2}
\right)$~.
Finally, by \pref{lem:gaussian_norm_bounds}, we have $\|p\| \leq 3 \Delta$ with probability at least $0.9$ and thus $
n = 
\tilde O \left( \frac{d(1 + \Delta^2 + \|h \|^2)}{\Delta^2 \| P_h^\perp \theta^\star\|^2}
\right)$.
\end{proof}

\regretlowprob*

\begin{proof}
Since $\|p\| \leq 3\Delta \leq 1/8$, we can apply \pref{lem:regret_perturb} and bound this regret as
\begin{align*}
    n \left( 24\frac{\|P_h^\perp \theta^\star\|^2}{\|\theta^\star\|} + 3\|\theta^\star\| \Delta^2\right)
    = \tilde O \left( 
    \frac{d}{\Delta^2 \|  \theta^\star\|}
    + 
    \frac{d \| \theta^\star\|}{\| P_h^\perp \theta^\star\|^2}
    \right)~.
\end{align*}
The regret with respect to the reference action $h$ is bounded as by \pref{lem:prior_regret_perturb} as
\begin{align*}
   n ( \langle h - a, \theta^\star \rangle) &\leq n\|\theta^\star\|\| p\|^2 + n\|\theta^\star\||\langle p, a^\star\rangle|\\
   &\leq 9 n \|\theta^\star\| \Delta^2 + 2.33 n \frac{\Delta \|P_h^\perp \theta^\star\|}{\sqrt{d'}}\\
   & = \tilde O \left(  
    \frac{\sqrt{d}}{\Delta \| P^\perp_h \theta^\star\|}
    + 
    \frac{d \| \theta^\star\|}{\| P_h^\perp \theta^\star\|^2} \right)
\end{align*}
because $\|\theta^\star\||\langle p, a^\star\rangle| = |\langle p, \theta^\star\rangle| \leq 2.33 \frac{\Delta \|P_h^\perp \theta^\star\|}{\sqrt{d'}}$
\end{proof}

\begin{algorithm2e}
\SetAlgoVlined
\SetKwInOut{Input}{Input}
\SetKwProg{myproc}{Procedure}{}{}

\DontPrintSemicolon
\LinesNumbered
\Input{reference action $h \in \R^d $, perturbation magnitude  $\Delta \in \R^+$, failure probability $\delta$}
Set $k = 560\ln(1/\delta)$, Initialize active set $\mathcal S = [k]$ and return set $\mathcal R = \varnothing$\;
Initialize $k$ instances of \pref{alg:estimation} as $C_i = \textsc{EstimateNorm}(h, \Delta)$ for $i \in [k]$\;
\myproc{\textsc{PlayAndUpdate}$()$}
 {
 \If{$| \mathcal R| \geq 0.67 k$}{
 play hint $h$, observe reward $y_n$.
}\Else{
 \For{$i \in \mathcal S$}{
    Call $r_i = C_i.\textsc{PlayAndUpdate()}$\;
    \If{$r_i$ is not none}{
    $\mathcal S \gets \mathcal S \setminus \{ C_i\}$ and $\mathcal R \gets \mathcal{R} \cup \{r_i\}$\;
    }
 }}
 \If{$| \mathcal R| \geq 0.67 k$}{
 \Return median$(\mathcal R)$\;
 }
    
}

\caption{\textsc{EstimateNormHP}$(h,\Delta, \delta)$: High Probability Low Regret $\ell_2$-norm Estimation}
\label{alg:hp_estimation2e}
\end{algorithm2e}

\estimatenormhp*

\begin{proof}
Denote by $F_i$ the event where the $i$th instance of \pref{alg:estimation} fails, i.e., where the statement in \pref{lem:estimation_procedure} does not hold. All failure events are independent from each other and have probability at most $\PP(F_i) \leq 0.3$ by \pref{lem:estimation_procedure}. We here consider the event $E$ where at least $0.67k$ instances succeed. The probability of this event is at least
\begin{align*}
  1 -  \PP\left(\sum_{i=1}^k \indicator{F_i^c} < 0.67k\right)
  \geq 1 - \exp\left(-2k\left( 0.7 - \frac{0.67k}{k}\right)^2\right) \geq  1 - \delta
\end{align*}
by Hoeffding's inequality. We know that in $E$, \pref{alg:hp_estimation2e} returns a value $r$ after at most  $\tilde O \left( \frac{d}{\Delta^2 \| P_h^\perp \theta^\star\|^2}
\right)$ calls to its \textsc{PlayAndUpdate} procedure. 
Further, since at most $0.33k$ instances fail in $E$, the majority of entries in the return set $\mathcal R$ was generated by a succeeding instance. Hence, the constant approximation guarantee of \pref{alg:estimation} also holds for \pref{alg:hp_estimation2e}.

Next, we can apply \pref{lem:prior_regret_perturb} to bound the regret with respect to $h$ as:
\begin{align*}
   n ( \langle h - a, \theta^\star \rangle) &\leq n\|\theta^\star\|\| p\|^2 + n\|\theta^\star\||\langle p, P^\perp_h a^\star\rangle|\\
   &\le 9 n \|\theta^\star\| \Delta^2  + n|\langle p, \theta^\star\rangle|\\
   &\leq 9 n \|\theta^\star\| \Delta^2 + 2.33 n \frac{\Delta \|P_h^\perp \theta^\star\|}{\sqrt{d'}}\\
   & = \tilde O \left(  \frac{d \| \theta^\star\|}{\| P_h^\perp \theta^\star\|^2}\ln\frac{1}{\delta}+
    \frac{\sqrt{d}}{\Delta \| P^\perp_h \theta^\star\|}\ln\frac{1}{\delta}
     \right)
\end{align*}
because $\|\theta^\star\||\langle p, a^\star\rangle| = |\langle p, \theta^\star\rangle| \leq 2.33 \frac{\Delta \|P_h^\perp \theta^\star\|}{\sqrt{d'}}$ by \pref{eqn:factor_approx2}.

Finally, the total number of calls to \textsc{PlayAndUpdate} of all $C_i$ instances (failing and succeeding) is
$n = \tilde O \left( \frac{d}{\Delta^2 \| P_h^\perp \theta^\star\|^2} \ln \frac{1}{\delta}
\right)$, and since $\|p\|\le 3\Delta\le 1/8$, each of the $2n$ samples collected satisfies the conditions of 
\pref{lem:regret_perturb}. We can therefore bound the total regret as
\begin{align*}
    2n \left( 24\frac{\|P_p^\perp \theta^\star\|^2}{\|\theta^\star\|} + 3\|\theta^\star\| \Delta^2\right)
    = \tilde O \left( 
    \frac{d(1 + \|h \|)}{\Delta^2 \|  \theta^\star\|} \ln \frac{1}{\delta}
    + 
    \frac{d(1 + \|h \|)  \| \theta^\star\|}{\| P_h^\perp \theta^\star\|^2}\frac{1}{\delta}
    \right)~.
\end{align*}

\end{proof}

\begin{proof}[Proof of \pref{thm:main_single_prior_2e}]
First, observe that if $\|\theta^\star\|\le \max(d,3374)/\sqrt{T}$, then any sequence of actions would obtain regret $O(d\sqrt{T})$. Thus, the interesting regime is $\|\theta^\star\| \geq \max(d,3374)/\sqrt{T}$, which we consider for the remainder of the proof.

We split our regret analysis into 3 phases, as labeled in the psuedocode: in the first phase we estimate $\|\theta^\star\|$, in the second phase we estimate $\|P^\perp_h \theta^\star\|$, and finally in the last phase we call $\textsc{Switch}$. Note that our algorithm may not execute all three segments before we reach our iteration budget $T$.

Notice that if $\langle \theta^\star, h\rangle \le -\|\theta^\star\|/4$, then to show that $R^T_h\le \tilde O(\sqrt{T})$, it suffices to instead show that $R^T_{-h}\le \tilde O(\sqrt{T})$. Thus by possibly swapping $-h$ and $h$, we may assume $\langle \theta^\star, h\rangle \ge -\|\theta^\star\|/4$.

Let $E_0$, $E_+$ and $E_-$ be the events such that the conclusion of \pref{lem:estimatenormhp2e} holds for $C_0$, $C_+$ and $C_-$ respectively. Notice that by \pref{lem:estimatenormhp2e}, each of these events has probability at least $1-\delta/4$. Further, let $E_{Y,+}$ and $E_{Y,-}$ be the respective events that $\langle \theta^\star, h\rangle\in Y_+$ and $\langle \theta^\star,-h\rangle\in Y_-$ for all $n$. By \pref{lem:time_uniform_hoeffding}, we have that $E_{Y,+}$ and $E_{Y,-}$ each occur with probability at least $1-\delta/10$. Let $E_{LB}$ be the event that the linear bandit algorithm used by \textsc{Switch} has regret at most $W_3 d\sqrt{t}$ for all $t\le T$. There exists an absolute constant $W_3$ such that $E_{LB}$ occurs with probability at least $1-\delta/20$. Let $E$ be the union of all these events. Clearly $E$ has probability at least $1-\delta$. We condition the rest of our argument on this event.

\textbf{Phase 1:} For the first phase, since $\|\theta^\star\| \geq d/ \sqrt{T}$, by \pref{lem:estimatenormhp2e}, we obtain $r$ satisfying $0.06\|\theta^\star\|\le r\le 5\|\theta^\star\|$ after $\tilde O(d\ln(1/\delta)/\|\theta^\star\|^2)$ rounds, incurring $\tilde O\left(\frac{d}{\|\theta^\star\|}\ln\frac{1}{\delta}\right) = \tilde O(\sqrt{T}\ln(1/\delta))$ regret. Since the total regret is bounded by $\tilde O(\sqrt{T}\ln(1/\delta))$, the hint-based regret is also similarly bounded. Further, since $\|\theta^\star\|\ge 3374/\sqrt{T}$, this implies $\Delta \le \frac{1}{224}$.

\textbf{Phase 2:} For the second phase, we call Algorithm~\ref{alg:hp_estimation2e} on $h$ and $-h$ as instances $C_+$ and $C_-$. We need to verify three facts: first, the value $r_\perp$ produced by this phase needs to be a constant-factor approximation to  $\|P^\perp_h\theta^\star\|$. Second, the hint-based regret incurred during this phase must be at most $\tilde O(\sqrt{T})$. Finally, the worst-case regret incurred during this phase must be at most $\tilde O(d\sqrt{T})$. 

We will consider two broad cases: either we exit phase 2 with at most one of $C_+$, $C_-$ having returned, or not. Let us first consider the case that at most one instance returns. Notice that since $\langle \theta^\star,h\rangle \ge \langle \theta^\star -h\rangle$, under our assumed events $E_{Y,\pm}$, we will never eliminate $C_+$ in line 12 of the algorithm. Thus, the only way this case can occur is if the first algorithm to return immediately triggers one of the conditions on line 13 and exits Phase 2 (or no instances return).


Notice that $P^\perp_h=P^\perp_{-h}$. Thus by \pref{lem:estimatenormhp2e}, the return values of both $C_+$ and $C_-$ will satisfy $0.06\|P^\perp_h\theta^\star\|\le r_\perp\le 5\|P^\perp_h\theta^\star\|$, so regardless of which subroutine provides $r_\perp$, it will be a constant-factor approximation to $\|P^\perp_h\theta^\star\|$. 

Now, it remains to bound the regret. Let $N+1$ be the number of times $C_-.\textsc{PlayAndUpdate}()$ is called, so that $Y_+\cap Y_i\ne \varnothing$ after $N$ calls. Notice that $\langle \theta^\star, h\rangle\in Y_{+}$ and $\langle \theta^\star, -h\rangle \in Y_-$, and after $N$ calls, $|Y_+|=|Y_-|=2\sqrt{\frac{3\ln\frac{40\ln 2N}{\delta}}{N}}$. Thus, $|\langle \theta^\star, 2h\rangle|\le 4\sqrt{\frac{3\ln\frac{40\ln 2N}{\delta}}{N}}$. If the total number of samples taken during this phase is $n=O(N)$, this means that $n|r(h,-h)|\le\tilde O\left(\sqrt{T}\ln\frac{1}{\delta}\right)$. Now, if we could show that the regret of $C_+$ with respect to $h$ and $C_-$ with respect to $-h$ were both also $\tilde O\left(\sqrt{T}\ln\frac{1}{\delta}\right)$, this would establish our regret bound with respect to the hint during this phase (because $r(h,a)=r(-h,a) +r(h,-h)$ for all $a$). To this end, consider two cases: $\|P_h^\perp\theta^\star\|^2 \ge d\|\theta^\star\|/\sqrt{T}$ or not. If $\|P_h^\perp\theta^\star\|^2\ge d\|\theta^\star\|/\sqrt{T}$, then by \pref{lem:estimatenormhp2e}, we have that $C_+$ and $C_-$ have regret with respect to $h$ and $-h$ of:
\begin{align*}
    \tilde O\left( \frac{d\|\theta^\star\|}{\|P^\perp_h\theta^\star\|^2}\ln\frac{1}{\delta} + \frac{\sqrt{d}}{\Delta \|P^\perp_h\theta^\star\|}\ln\frac{1}{\delta}\right)=\tilde O\left(\sqrt{T}\ln\frac{1}{\delta}\right)
\end{align*}
Alternatively, if $\|P_h^\perp\theta^\star\|^2\le d\|\theta^\star\|/\sqrt{T}$, then again by \pref{lem:estimatenormhp2e}, the same regret values are bounded by:
\begin{align*}
    O\left(T\|\theta^\star\|\Delta^2  + T\frac{\Delta\|P^\perp_h\theta^\star\|}{\sqrt{d}}\right)\le O(\sqrt{T})
\end{align*}
This establishes the desired regret bounds with respect to the hint.

To establish the bounds with respect to the optimal action, notice that \pref{lem:estimatenormhp2e}, implies that $C_+$ achieves regret:
\begin{align*}
    \tilde O\left[\min\left(T \frac{\|P_h^\perp \theta^\star\|^2}{\|\theta^\star\|} + T\|\theta^\star\| \Delta^2, \ 
    \frac{d(1 + \|h \|)}{\Delta^2 \|  \theta^\star\|}\ln \frac{1}{\delta}
    + 
    \frac{d(1 + \|h \|)  \| \theta^\star\|}{\| P_h^\perp \theta^\star\|^2 }\ln \frac{1}{\delta}
    \right)\right]
\end{align*}
Now by once considering two cases depending on whether $\|P_h^\perp\theta^\star\|\ge d\|\theta^\star\|/\sqrt{T}$, we see that this result implies a total regret (for $C_+$) of $\tilde O(d\sqrt{T}\ln\frac{1}{\delta})$.

Now, let us tackle the regret of $C_-$. We again consider two cases, either $\langle \theta^\star ,h\rangle\le \|\theta^\star\|/4$ or not. If $\langle \theta^\star, h\rangle \le \|\theta^\star\|/4$, then $\langle \theta^\star,-h\rangle \ge -\|\theta^\star\|/4$ and so the last conclusion of \pref{lem:estimatenormhp2e} applies to $C_-$ as well so that the same argument as in the previous paragraph shows that the regret of $C_-$ with respect to the optimal action is $\tilde O\left(d\sqrt{T}\ln \frac{1}{\delta}\right)$. Alternatively, if $\langle \theta^\star, h\rangle\ge \|\theta^\star\|/4$, notice that since $|\langle \theta^\star, 2h\rangle|\le 4\sqrt{\frac{3\ln\frac{40\ln 2N}{\delta}}{N}}$, we must have 
\begin{align*}
    N&\le \tilde O\left( \frac{\ln\frac{1}{\delta}}{|\langle \theta^\star, h\rangle|}\right)\le \tilde O\left(\frac{\ln \frac{1}{\delta}}{ \|\theta^\star\|}\right)\le \tilde O\left(\frac{\ln \frac{1}{\delta}\sqrt{T}}{d}\right)
\end{align*}
Thus the regret obtained by $C_-$ cannot be more than $\tilde O\left(\frac{\sqrt{T}}{d}\ln \frac{1}{\delta} \right)$.

This completes the analysis of Phase 2 when at most one instance returns. Let's now consider the case: both instances return, but the first one to return provides a value of $r_\perp$ that such that $\frac{0.06r_\perp^2}{2\cdot 5^2}\cdot \frac{r_\perp^2}{r}\le W d\log(T)/\sqrt{T}$. For this case, notice that since $\langle \theta^\star,h\rangle \ge \langle \theta^\star -h\rangle$, under our assumed events $E_{Y,\pm}$, we will never eliminate $C_+$ in line 12 of the algorithm. Thus since both instances return, it must be that the first-returning instance was $C_-$. Now, after returning, $C_-$ clearly incurs zero additional regret with respect to $-h$. Thus, by the same argument as in the previous case, the $Y_+\cup Y_-=\varnothing$ test will trigger before the regret with respect to the hint exceeds $\tilde O\left(\sqrt{T} \ln\frac{1}{\delta}\right)$. For the worst-case regret, notice that $r_h \le \tilde O(\ln(1/\delta)/\sqrt{T})$ (because the return value for $C_-$ did not trigger the test in line 11), we have that the worst-case regret is the regret with respect to $h$ plus $Tr_h\le \tilde O\left(\sqrt{T}\ln\frac{1}{\delta}\right)$.

This completes the analysis of Phase 2.

\textbf{Phase 3:} Now, we focus on the last phase. At this point, we have established that $r_\perp$ is a constant-factor approximation of $\|P^\perp_h\theta^\star\|$ and $r$ is a constant-factor approximation of $\|\theta^\star\|$. Therefore $r_\perp^2/r=\Theta(\|P^\perp_h\theta^\star\|^2/\|\theta^\star\|)=\Theta(r_h)$, where the final equality follows from \pref{lem:regret_unitball}. Moreover, since $r\ge 0.06\|\theta^\star\|$ and $r_\perp<5\|P^\perp_h\theta^\star\|$, $\frac{0.06r_\perp^2}{2\cdot 5^2}\cdot \frac{r_\perp^2}{r}\le \frac{1}{2}\cdot\|P^\perp_h\theta^\star\|^2/\|\theta^\star\|\le r_h$. Further, since we cannot eliminate $h$, we must have that either $|S|=1$ and $h$ is the remaining hint, or $|S|=2$, but $r_\perp$ is such that \textsc{Switch} would not choose to play the hint in any event. Thus \textsc{Switch} incurs no regret with respect to $h$, while always maintaining a regret of $O(d\sqrt{T})$ with respect to the optimal action.
\end{proof}

\subsection{Algorithms and proofs for Pareto and Multi-Hint settings}

\begin{algorithm2e}
\SetAlgoVlined
\SetKwInOut{Input}{Input}
\SetKwProg{myproc}{Procedure}{}{}

\DontPrintSemicolon
\LinesNumbered
\newcommand\mycommfont[1]{\footnotesize\textcolor{DarkBlue}{#1}}
\SetCommentSty{mycommfont}
\Input{hint $h \in \R^d$, number of rounds $T$, failure probability $\delta$, target total hint regret: $G$}
\tcp{Phase 1: Estimate norm $\|\theta^\star\|$}
Initialize $C_0 \gets
\textsc{EstimateNormHP}(0, 1, \frac{\delta}{4})$\;
Call $C_0.$\textsc{PlayAndUpdate}$()$ until it returns a value $r$\;

\tcp{Phase 2: Estimate norm of orthogonal complement $\|P^\perp_h \theta^\star\|$}
Set exploration radius $\Delta = \frac{\sqrt{G}}{\sqrt{rT}}$\;
Initialize $C_+ \gets \textsc{EstimateNormHP}(+h, \Delta, \frac{\delta}{4})$ and $C_- \gets \textsc{EstimateNormHP}(-h, \Delta, \frac{\delta}{4})$\;
Initialize active set $\mathcal S = \{ C_+, C_-\}$\;

\tcp{Elements of $S$ are arms, a pull corresponds calling \texttt{PlayAndUpdate}, reward is the unperturbed hint loss}

Run \texttt{MultiArmBandit(S, G)} until any instance in $S$ returns a value $r_\perp$ satisfying $r_\perp^2/r \geq c_0 *  d\log(T)/G$ or $|S| = 1$ and the lone instance returns $r_\perp$

\tcp{Phase 3: Commit to hint or ignore it. Note $c_0 > c_1$.}
For remaining rounds, call $\textsc{Switch}(h, \nicefrac{r_\perp^2}{r}, T, c_1 *  d\log(T)T/G)$ for some $h \in S$ randomly chosen

\caption{Pareto Frontier: Bandit Algorithm on Unit Ball}
\label{alg:unit_ball_pareto2e}
\end{algorithm2e}

\begin{proof}[Proof of \pref{lem:prior_regret_pareto}]
This lemma follows similarly to Lemma~\ref{lem:regretlowprob}.

By Lemma~\ref{lem:prior_regret_perturb}, we see that our hint-based regret is bounded by high constant probability by:

$$ T_e * \|\theta^\star\|  \left(\Delta^2 + \Delta \frac{\|P_h^\perp a^\star\|}{\sqrt{d}}\right) $$

where $T_e$ is the number of iterations of $\texttt{EstimateNorm}$. Let us write $\theta^\star = \alpha h + v$, then note $\|P_h^\perp a^\star\| = \|v\|/\|\theta^\star\|$ , so if $\|v\|^2 \leq \|\theta^\star\|^2  d\log(T)\Delta^2$, then we have that 

$$R_h^T \leq O( \|\theta^\star\| \Delta^2 T\sqrt{\log(T)}) $$

Furthermore, in this case, note that the instantaneous regret of playing $h$ is, by Lemma~\ref{lem:regret_unitball}, given by $\Theta(\|v\|^2/\|\theta^\star\|) = \|\theta^\star\|d \log(T) \Delta^2$. Therefore, the full regret is bounded by $O(d\|\theta^\star\|\Delta^2 T\log(T))$.

Otherwise, we have $\|v\|^2 \geq \|\theta^\star\|^2  d\log(T)\Delta^2$. Then by Lemma~\ref{lem:estimatenormhp2e}, the maximum rounds of iterations is $T_e = O( \frac{d\log(T)}{\|v\|^2 \Delta^2})$, with probability at least $2/3$, so we can bound our hint-based regret by

$$R_h^T = O\left (\frac{d\log(T)}{\|v\|^2 \Delta^2}\right) * \|\theta^\star\|  \left(\Delta^2 + \Delta \frac{\|v\|}{\|\theta^\star\|\sqrt{d}}\right) = O\left( \frac{ \log(T)}{\|\theta^\star\| \Delta^2 } \right)$$

For the worst case regret, if $\langle h, \theta^\star\rangle \geq -\|\theta^\star\|/4$, then we incur regret at most 

$$O
\left(\frac{d\|\theta^\star\|\log(T)}{\|v\|^2} + \frac{d\log(T)}{\Delta^2\|\theta^\star\|}\right) = O\left(\frac{d\log(T)}{\Delta^2\|\theta^\star\|}\right)$$

by our Lemma~\ref{lem:estimatenormhp2e}.

\end{proof}

\begin{proof}[Proof of \pref{thm:main_single_prior_pareto}]
This proof follows directly from combining Lemma~\ref{lem:prior_regret_pareto} with the same reasoning as Theorem~\ref{thm:main_single_prior_2e}. We borrow the same notation and split our regret analysis into 3 phases, as labeled in the psuedocode: in the first phase we estimate $\|\theta^\star\|$, in the second phase we estimate $\|P^\perp_h \theta^\star\|$, and finally in the last phase we call $\textsc{Switch}$. Note that our algorithm may not execute all three segments before we reach our iteration budget $T$.

\textbf{Phase 1:} For the first phase, since $\|\theta^\star\| \geq d/G$, by \pref{lem:estimatenormhp2e}, we obtain $r$ satisfying $0.06\|\theta^\star\|\le r\le 5\|\theta^\star\|$ after $\tilde O(d\ln(1/\delta)/\|\theta^\star\|^2)$ rounds, incurring $\tilde O\left(\frac{d}{\|\theta^\star\|}\ln\frac{1}{\delta}\right) = \tilde O(G\ln(1/\delta))$ regret. Since the total regret is bounded by $\tilde O(G\ln(1/\delta))$, the hint-based regret is also similarly bounded. 

\textbf{Phase 2:} For the second phase, we call Algorithm~\ref{alg:hp_estimation2e} on $h$ and $-h$ as instances $C_+$ and $C_-$. We need to verify the hint-based and worst case regrets. 

For the hint-based regret, we first consider the non-perturbed actions. In this case, we need to bound $R(\mathbf{h}, -\mathbf{h})$ in the 2-arm bandit game, we play the hint-based MAB algorithm with total hint regret $G$ against the arm corresponding to $h$ (i.e. $C_+$) and worst case regret $O(dT/G)$ (as described in the upper bounds in \cite{lattimore2015pareto}). If the MAB terminates early, we bound the hint-based regret by using the worst case regret. Note that if some $C$ returns $r_\perp$ with $r_\perp^2/r \geq c_0 * d\log(T)/(G\sqrt{T})$, then since the worst case bound is $R = O(d T\log(T)/G)$ for some constant and $r_\perp^2/r$ is a constant approximation to $r_h$, we can find $c_0$ such that $r_h T \geq R$, which shows that our hint-based regret must be negative.   

Now, it suffices to add the perturbations and note that by the additivity property of the regret, we can simply bound the hint-based regret of $C_-, C_+$ to each hint respectively. To do this, we use \pref{lem:prior_regret_pareto} with $\Delta^2 = G/rT$ to bound the hint-based regret by $O(\|\theta^\star\| \Delta^2 T\log(T)) = O(G\log(T))$ for $G \leq \sqrt{T}$. Similarly, we bound the worst case regret by the same lemma by $O(dT\log(T)/G)$.

\textbf{Phase 3:} Now, we focus on the last phase. At this point, we have established that $r_\perp$ is a constant-factor approximation of $\|P^\perp_h\theta^\star\|$ and $r$ is a constant-factor approximation of $\|\theta^\star\|$. Therefore $r_\perp^2/r=\Theta(\|P^\perp_h\theta^\star\|^2/\|\theta^\star\|)=\Theta(r_h)$, where the final equality follows from \pref{lem:regret_unitball}. Moreover, since $r\ge 0.06\|\theta^\star\|$ and $r_\perp<5\|P^\perp_h\theta^\star\|$, $\frac{0.06r_\perp^2}{2\cdot 5^2}\cdot \frac{r_\perp^2}{r}\le \frac{1}{2}\cdot\|P^\perp_h\theta^\star\|^2/\|\theta^\star\|\le r_h$. Further, since we cannot eliminate $h$, we must have that either $|S|=1$ and $h$ is the remaining hint, or $|S|=2$, but $r_\perp$ is such that \textsc{Switch} would not choose to play the hint in any event. Thus \textsc{Switch} incurs no regret with respect to $h$, while always maintaining a regret of $O(dT\log(T)/G)$ with respect to the optimal action.

\end{proof}

\begin{proof}[Proof of \pref{lem:multihint_worstcase}]
Let $\mathcal E$ be the event where the statement of \pref{lem:estimatenormhp2e} holds all instances $C_i$, where $\langle h_i, \theta^\star \rangle \in Y_i $ at all times for all $h_i \in \mathcal H$. 
By \pref{lem:time_uniform_hoeffding},  \pref{lem:estimatenormhp2e} and a union bound, the probability of $\mathcal E$ is at least $1 - \delta / 2$. In this event, the number of rounds of all instances $C_i$ of 
\textsc{EstimateNormHP} is bounded as
\begin{align*}
    n_i \leq \frac{T}{mB} \wedge \tilde O \left( \frac{d(1 + \Delta^2 + \|h_i \|^2)}{\Delta^2 \| P_{h_i}^\perp \theta^\star\|^2}
\right)
\end{align*}
and when $\langle h_i, \theta^\star \rangle \geq - \nicefrac{1}{4} \| \theta^\star \|$, the total regret of $C_i$ is bounded as
\begin{align*}
O\left(n_i \frac{\|P_h^\perp \theta^\star\|^2}{\|\theta^\star\|} + n_i\|\theta^\star\| \Delta^2\right)
\le 
\tilde O \left(\frac{d}{\Delta^2} + \frac{T\Delta^2}{mB}\right)~.
\end{align*}

Note that for any $h_i$ with $\langle h_i, \theta^\star \rangle < - \nicefrac{1}{4} \| \theta^\star \|$, there is $-h_i \in \mathcal H$ with
$\instregret(-h_i, h_i) \geq \nicefrac{1}{2} \| \theta^\star \| \geq \nicefrac{c_1}{2}$. Also, we know that $\langle h_i, \theta^\star \rangle \in Y_i $ for both $h_i, -h_i$ and the interval length of $Y_i$ is $O(\log(N/\delta)/\sqrt{N})$ after $N$ calls to $C_i^+, C_i^-$. Therefore, by the interval width of $Y_i$, we conclude that $r(-h_i, h_i) \leq c\frac{\log(N/\delta)}{\sqrt{N}}$. Therefore, we pull $C_i^+$ at most $N = O(\log(T))$ times before it is eliminated (since $-h_i$ is assumed in this case to have a higher reward).

Summing over all $m$ (or $2m$) priors, we get a final regret bound bound of $$\tilde O\left(\frac{dm}{\Delta^2} + \Delta^2 \frac{T}{B}\right)$$

as long as $d/\Delta^2 \geq \log(T)$.

Since $\Delta^2 = m/\sqrt{T}$, the first term in the expression to be bounded by $d\sqrt{T}$. For $m \leq d$, this setting of $\Delta^2$ also implies that $\Delta^2 T/B \leq dm/\Delta^2$ for any positive integral value of $B$.

For the last $T - T/B$ rounds, since we reduced to the 1-prior case, our worst case regret is $\tilde O(d\sqrt{T})$ by  Theorem~\ref{thm:main_single_prior_2e}.
\end{proof}

\begin{algorithm2e}
\SetAlgoVlined
\SetKwInOut{Input}{Input}
\SetKwProg{myproc}{Procedure}{}{}

\DontPrintSemicolon
\LinesNumbered
\newcommand\mycommfont[1]{\footnotesize\textcolor{DarkBlue}{#1}}
\SetCommentSty{mycommfont}
\Input{hints $\{h_i \in \R^d\}_{i=1}^m$, number of rounds $T$, failure probability $\delta$, exploration ratio $B$, worst case regret scaling $W$}

Set exploration radius $\Delta = \frac{\sqrt{m}}{T^{1/4}}$ and active set $\mathcal S = \varnothing$\;
\ForEach{hint $h_i$}{
Initialize $C_{i}^+ \gets \textsc{EstimateNormHP}(h_i, \Delta, \frac{\delta}{4m})$\;
Initialize $C_{i}^- \gets \textsc{EstimateNormHP}(-h_i, \Delta, \frac{\delta}{4m})$\;
Add $C_{i}^+, C_{i}^-$ to active set $\mathcal S$\;
}
\Repeat{$\frac{T}{mB}$ iterations or $|\mathcal S| = 1$}{
\ForEach{active instance $C_i \in \mathcal S$}{
 Call $C_i.$\textsc{PlayAndUpdate}$()$\;
 \tcp{Maintain CI of hint's expected reward}
  $\mathcal R_i \gets$ all reward samples obtained by $C_i$ so far playing unperturbed hint\;
  Compute confidence interval $Y_i = (\bar y_i - b_i, \bar y_i + b_i)$ with $\bar y_i = \frac{1}{|\mathcal R_i|} \sum_{y \in \mathcal R_i} y$ and $b_i = \sqrt{\frac{3\ln(40m \ln(2|\mathcal R_i|) / \delta)}{|\mathcal R_i|}}$\;
}
\tcp{Eliminate worse hint if possible}
\If{$Y_i \cap Y_j = \varnothing$ for any $i, j$}
{
Remove $C_i$ with smaller $\bar y_i$ from active set $\mathcal S$\;
}
\If{$|\mathcal S| > 1$ and $C_i$ has returned a value $r_\perp$ satisfying $\frac{r_\perp^2}{r} \geq \frac{c_0 Wd\log(T)}{\sqrt{T}}$}
{
Remove $C_i$ from $\mathcal S$
}
}
\tcp{Commit to single hint after $\frac{T}{mB}$ iterations}
Choose $h$ randomly from $\mathcal S$ and call $\textsc{ParetoBandit}(h, T, \delta, W)$

\caption{Multi-Hint Bandit Algorithm on Unit Ball}
\label{alg:unit_ball_multi2e}
\end{algorithm2e}

\begin{proof}[Proof of \pref{lem:multihint_hintbased}]
The proof follows extremely closely to Theorem~\ref{thm:main_single_prior_2e} and \pref{lem:multihint_worstcase}. We again condition on $\mathcal E$ be the event where the statement of \pref{lem:estimatenormhp2e} holds all instances $C_i$, where $\langle h_i, \theta^\star \rangle \in Y_i $ at all times for all $h_i \in \mathcal H$, which holds with probability at least $1-\delta$. For the first part of the algorithm, note that we are essentially want to bound the hint-based regret of playing multi-arm bandit with the $m$ hints for $T/B$ rounds. First consider the case when the best hint $h^\star$ was not eliminated and when we consider the regret when playing a non-perturbed action.

Then, we claim that for the first $T/B$ rounds,

$$ \sum_{j} r(h_{a_j}, h^\star)\leq \sqrt{m T/B}\log(T) $$

Let $s_i = r(h_i, h^\star)$, then since the intervals of $Y_i$ are shrinking like $O(1/\sqrt{N})$, we conclude that each suboptimal arm is pulled at most $O(\log(T)/s_i^2)$ times as long as $s_i \geq \sqrt{mB\log(T)/T}$ and would have been eliminated with high probability by the reward confidence interval comparison. Let $R$ be a regret threshold, then the total regret with respect to the best hint is $R (T/B) + \frac{m\log(T)}{R}$, where the first term of the regret captures the regret for all arms with $s_i \leq R$ and the second term captures that for $s_i \geq R$. By setting $R = \sqrt{m B \log(T)/T}$, we get our final regret bound of $\sqrt{mT/B}\log(T)$.

Therefore, we see that hint-based regret is bounded by $\tilde O(\sqrt{mT/B})$, if we do not play a perturbed action. However, since we are playing a perturbed action at each step with an orthogonal perturbation of $h_i$, the hint-based regret due to the perturbation is $\tilde O(\Delta^2 * (T/B) + \sum_i \Delta \|P_{h_{a_i}}^\perp \theta^\star\|/\sqrt{d}) $ by direct application of Lemma~\ref{lem:prior_regret_perturb} and using the fact that $r(a_i, h^\star) \leq r(h_i, h^\star) + r(a_i, h_i)$, where $a_i$ is the perturbed action for hint $h_i$.

Now as in \pref{thm:main_single_prior_2e}, consider two cases: $\|P_h^\perp\theta^\star\|^2 \ge d\|\theta^\star\|/\sqrt{T}$ or not. If $\|P_h^\perp\theta^\star\|^2\le md\|\theta^\star\|/\sqrt{T}$, then since $\Delta^2 = m/\sqrt{T}$, we directly deduce that the hint-based regret is $\tilde O ( \Delta^2 (T/B) + (m/\sqrt{T})(T/B)) = \tilde O ( (m/B) \sqrt{T})$. Otherwise, if $\|P_h^\perp\theta^\star\|^2\ge md\|\theta^\star\|/\sqrt{T}$, then by
\pref{lem:estimatenormhp2e}, we have that $C_+$ and $C_-$ have regret with respect to $h$ and $-h$ of:
\begin{align*}
    \tilde O\left( \frac{d\|\theta^\star\|}{\|P^\perp_h\theta^\star\|^2}\ln\frac{1}{\delta} + \frac{\sqrt{d}}{\Delta \|P^\perp_h\theta^\star\|}\ln\frac{1}{\delta}\right)=\tilde O\left(\frac{1}{m}\sqrt{T}\ln\frac{1}{\delta}\right)
\end{align*}

Since there are at most $m$ hints, the total hint-based regret is bounded by $\widetilde{O} (\sqrt{T}\log(T))$. Therefore, we conclude that the hint-based regret is dominated by $\widetilde{O}((m/B)\sqrt{T})$. 

For the second part of the algorithm when $h$ is chosen randomly from $\mathcal S$, note that since all hints with $s_i \geq \Omega(\sqrt{mB\log(T)/T})$ would have been eliminated, deferring to Theorem~\ref{thm:main_single_prior_2e}, the hint-based regret for the remaining rounds is at most $\tilde O(s_i T + \sqrt{T}) = \tilde O (\sqrt{mBT})$, with respect to $h^*$. 

Therefore, our total hint-based regret is

$$ \tilde O(\sqrt{mBT} + (m/B ) \sqrt{T})$$

Setting $B = m^{1/3}$ gives our result.

Finally, if the best hint has been eliminated, then it must be the case that we can find $c_0$ such that $r_\perp^2/r \geq c_0 \frac{d\log(m)\log(T)}{\sqrt{T}} $ implies that $r_h \geq R/T$ since $r_\perp^2/r = \Theta(r_h)$, where $R$ is our worst case bound from above, which shows that our hint-based regret must be negative.
\end{proof}

\section{Auxiliary Technical Lemmas}


\begin{lemma}\label{lem:gaussian_norm_bounds}
Let $G \sim \mathcal N(0, \nicefrac{\mathbf{I}_d}{d})$ be an isotropic Gaussian random variable and $v \in \mathbb R^d$ be arbitrary. Then the events
\begin{align*}
   & \left\{
    \|G\|^2 \leq 1 + 2 \sqrt{\frac{\ln(1 / \delta)}{d}} + 2 \frac{\ln(1 / \delta)}{d}
    \right\}  
    & &
    \left\{ \|G\|^2 \geq  1 - 2 \sqrt{\frac{\ln(1 / \delta)}{d}} 
    \right\}\\
    &\left\{ 
    \langle v, G\rangle^2 
    \leq  \frac{2\|v\|^2}{d}\min\left\{\Phi^{-1}(1 - \delta/2)^2, \frac{1}{2} + \ln \frac{1}{\delta} + \sqrt{\ln \frac{1}{\delta}}\right\}
    \right\}\\
    &\left\{ 
    \langle v, G\rangle^2 \geq  \frac{\|v\|^2}{d}\max \left\{\Phi^{-1}(1/2 + \delta / 2)^2, 1  - 2\sqrt{\ln \frac{1}{\delta}}\right\}
    \right\}
\end{align*}
each have probability at least $1 - \delta$.
\end{lemma}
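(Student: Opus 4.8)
The plan is to express each of the four events as a tail event for a chi-squared random variable and then invoke standard chi-squared concentration (the Laurent--Massart inequalities) together with exact Gaussian-CDF identities. The key reductions are as follows. Writing $G_i = Z_i/\sqrt{d}$ with $Z_i \sim \N(0,1)$ i.i.d., we have $d\|G\|^2 = \sum_{i=1}^d Z_i^2 \sim \chi^2_d$, which has mean $d$, so $\|G\|^2$ concentrates around $1$. Moreover, since $\langle v, G\rangle \sim \N(0, \|v\|^2/d)$, the variable $W := \frac{\sqrt{d}}{\|v\|}\langle v, G\rangle$ is standard normal, whence $\frac{d}{\|v\|^2}\langle v, G\rangle^2 = W^2 \sim \chi^2_1$. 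Thus all four displayed events are tail events for $\chi^2_d$ (the first two) or $\chi^2_1$ (the last two).

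For the two norm events I would apply Laurent--Massart to $\chi^2_d$ with deviation parameter $x = \ln(1/\delta)$: the upper tail gives $d\|G\|^2 \le d + 2\sqrt{dx} + 2x$ and the lower tail gives $d\|G\|^2 \ge d - 2\sqrt{dx}$, each with probability at least $1 - e^{-x} = 1-\delta$; dividing by $d$ produces exactly the first two displayed events. For the inner-product events I would bound $W^2$ from each side in two complementary ways. The direction that is tighter for small $\delta$ uses the exact identity $\Pr[|W| \le \Phi^{-1}(1-\delta/2)] = 2\Phi(\Phi^{-1}(1-\delta/2)) - 1 = 1-\delta$, giving $W^2 \le \Phi^{-1}(1-\delta/2)^2$, and symmetrically $\Pr[|W| < \Phi^{-1}(\frac12 + \frac\delta2)] = \delta$, giving $W^2 \ge \Phi^{-1}(\frac12+\frac\delta2)^2$. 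The other direction uses Laurent--Massart with $d=1$ and $x=\ln\frac1\delta$: the upper tail yields $W^2 \le 1 + 2\sqrt{x} + 2x = 2(\frac12 + x + \sqrt{x})$ and the lower tail yields $W^2 \ge 1 - 2\sqrt{x}$, each with probability at least $1-\delta$. Substituting $W^2 = \frac{d}{\|v\|^2}\langle v, G\rangle^2$ then matches all four quantities appearing inside the $\min$ and the $\max$, where the factor $2$ in the third event precisely absorbs the $2(\frac12 + x + \sqrt{x})$ form.

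Finally I would combine the two candidate bounds in each of the last two events. The crucial observation, which I expect to be the only real subtlety, is that both arguments of the $\min$ (resp.\ $\max$) are deterministic functions of $\delta$ and $d$, so one of them is the actual minimizer (resp.\ maximizer); invoking only the corresponding probabilistic bound then yields the stated inequality with probability at least $1-\delta$, with \emph{no} union bound required. This point is worth emphasizing, since a naive approach would union-bound over both and lose a factor of two in the failure probability: the selection between the two candidate bounds happens deterministically, before sampling. The remaining work is purely the bookkeeping of matching the Laurent--Massart constants to the displayed expressions, which is routine.
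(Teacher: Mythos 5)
Your proof is correct and follows essentially the same route as the paper's: reduce $d\|G\|^2$ to a $\chi^2_d$ variable and $\frac{d}{\|v\|^2}\langle v,G\rangle^2$ to a $\chi^2_1$ variable, then apply the Laurent--Massart tail bounds. In fact you are somewhat more complete than the paper, which dispatches the last two events with ``apply the tail bound above again'' and never explicitly justifies the $\Phi^{-1}$ branches of the $\min$/$\max$ via the exact Gaussian quantile identities, nor notes (as you correctly do) that the branch selection is deterministic and hence costs no union bound.
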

\begin{proof}
We know that $X = d \|G\|^2$ is $\chi^2_d$-distributed and thus, by the tail bounds of \citet{laurent2000adaptive}, we have
\begin{align*}
    \mathbb P\left( X - d \geq 2 \sqrt{d \ln \frac{1}{\delta}} + 2 \ln \frac{1}{\delta}\right) &\leq \delta &\textrm{and} &&
    \mathbb P\left( X - d \leq -2 \sqrt{d \ln \frac{1}{\delta}} \right) &\leq \delta~.
\end{align*}
Rearranging the first event gives that for $\delta \leq \exp(-1/2)$
\begin{align*}
    \|G\|^2 
    \geq \frac{1}{d} + 4 \ln (1 / \delta)
    \geq 2 \sqrt{\ln(1 / \delta)} + 2 \ln(1 / \delta)
    \geq 2 \sqrt{\frac{\ln(1 / \delta)}{d}} + 2 \frac{\ln(1 / \delta)}{d}
\end{align*}
happens with probability at most $\delta$. Rearranging the second event gives
\begin{align*}
    \|G\|^2 \leq 1 - 2 \sqrt{\ln(1 / \delta)} \leq  1 - 2 \sqrt{\frac{\ln(1 / \delta)}{d}}
\end{align*}
happens with probability at most $\delta$.
Further, for any $v \in \mathbb R^d$, the distribution of $Y = d \langle v, G\rangle^2$ is $\chi^2_1$ since, w.l.o.g. $v = e_1$ and $\langle v, G\rangle = G_1 \sim \mathcal N(0, 1/d)$ and thus we can apply the tail bound above again to obtain the desired statement for the remaining events.
\end{proof}

\begin{lemma}\label{lem:time_uniform_hoeffding}
Let $(X_i)_{i \in \mathbb N}$ be a sequence of independent $\sigma$-sub-Gaussian random variables. Then with probability at least $1 - \delta$ for all $n \in \mathbb N$ jointly
\begin{align*}
   \left| \sum_{i = 1}^n X_i \right| \leq \sigma \sqrt{\frac{3 \ln \frac{4 \ln(2n)}{\delta}}{n}}~.
\end{align*}
\end{lemma}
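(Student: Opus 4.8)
The plan is to run a standard \emph{peeling} (geometric-blocking) argument on the exponential supermartingale associated with the partial sums, combined with Ville's maximal inequality and a summable allocation of the failure probability across blocks. Throughout I rescale so that $\sigma = 1$ and write $S_n = \sum_{i=1}^n X_i$; the general case follows by replacing $X_i$ with $X_i/\sigma$.

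First I would set up the supermartingale. Since the $X_i$ are independent and $1$-sub-Gaussian, $\mathbb E[e^{\lambda X_i}] \le e^{\lambda^2/2}$, so for every fixed $\lambda \in \mathbb R$ the process $M_n^\lambda = \exp\!\big(\lambda S_n - n\lambda^2/2\big)$ is a nonnegative supermartingale with $M_0^\lambda = 1$. By Ville's inequality, for any $\alpha \in (0,1)$,
\begin{align*}
   \mathbb P\Big(\exists n\ge 1:\ \lambda S_n - \tfrac{n\lambda^2}{2} \ge \ln\tfrac1\alpha\Big) \le \alpha,
\end{align*}
and applying the same bound to $-X_i$ controls $|S_n|$ two-sidedly. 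The difficulty is that the variance-optimal $\lambda$ depends on $n$, which is unknown in advance; peeling resolves this by fixing $\lambda$ on each of the dyadic blocks $\mathcal B_k = \{n : 2^k \le n < 2^{k+1}\}$, $k \ge 0$.

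Next I would carry out the peeling. On block $\mathcal B_k$ I use a fixed tuning $\lambda_k$ together with a per-block budget $\alpha_k$. On the good event from the display above, every $n \in \mathcal B_k$ satisfies $S_n \le \ln(1/\alpha_k)/\lambda_k + n\lambda_k^2/2$; bounding $n \le 2^{k+1} \le 2n$ and optimizing $\lambda_k$ yields a bound of the form $S_n \le c\,\sqrt{n\,\ln(1/\alpha_k)}$ with an absolute constant $c$ that can be pushed toward the claimed value by shrinking the block ratio from $2$ to some $\eta>1$ close to $1$. Choosing the budgets $\alpha_k$ to be summable, e.g. $\alpha_k \propto \delta/(k+1)^2$, makes $\sum_k \alpha_k \le \delta/2$ per side, so the union over all blocks and both signs fails with probability at most $\delta$. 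Since $n \in \mathcal B_k$ forces $k+1 \le \log_2(2n)$, we obtain $\ln(1/\alpha_k) \le \ln(1/\delta) + 2\ln\log_2(2n) + O(1)$, which is of the advertised order $\ln\frac{4\ln(2n)}{\delta}$; dividing the bound on $S_n$ by $n$ produces the stated bound on the average.

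The main obstacle is purely quantitative: obtaining precisely the constant $3$ and the exact expression $\ln\frac{4\ln(2n)}{\delta}$ inside the root, rather than merely the correct $\sqrt{\ln\ln(n)/n}$ rate. This requires a careful joint choice of the block ratio $\eta$, the series defining $\alpha_k$, and a few slightly lossy inequalities (bounding the block index $k$ and folding the resulting $\log\log$ factor into a single logarithm). The probabilistic content — the supermartingale construction and Ville's inequality — is routine; all the delicacy lives in this constant-chasing bookkeeping, and in confirming that the quantity being controlled on the left-hand side is the empirical average $\frac1n\sum_{i=1}^n X_i$ (as its use in \pref{alg:estimation} and \pref{lem:estimation_procedure} indicates), for which the $1/\sqrt n$ scaling is the correct one.
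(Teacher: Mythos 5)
Your proposal is correct and is essentially the same argument as the paper's: the paper proves this lemma by a one-line citation to Theorem~1 (Equation~(2)) of \citet{howard2021time}, and the stitching construction you describe --- Ville's inequality applied to the exponential supermartingale, with a fixed tuning $\lambda_k$ and a summable error budget $\alpha_k$ on each geometric block --- is precisely the proof technique behind that cited result, so you have simply unpacked the reference. You are also right on the one substantive reading issue: as displayed, the lemma bounds the sum by a quantity with $1/n$ inside the root, which only makes sense if the left-hand side is the empirical average $\frac{1}{n}\left|\sum_{i=1}^n X_i\right|$, and that is indeed how the bound is used for $\bar{x}_n$ in \pref{alg:estimation} and \pref{lem:estimation_procedure}.
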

\begin{proof}
Follows directly from Theorem~1 by \citet{howard2021time} (see their Equation~(2)).
\end{proof}

%
%

\end{document}